\documentclass{article}
\usepackage{nips07submit_e,times}
\usepackage{amsmath}
\usepackage{amsfonts}
\usepackage[numbers,sort&compress]{natbib}
\usepackage{algorithmic}
\usepackage{algorithm}
\usepackage{graphicx}
\usepackage{subfig}

\usepackage{mdwlist}
\usepackage{tabularx}
\usepackage{multirow}
\usepackage{url}
\usepackage{paralist}

\title{Lower Bounds for BMRM and Faster Rates for Training SVMs}

\input{Definitions}


\author{
Ankan Saha \\
Department of Computer Science\\
University of Chicago\\
Chicago, 60637 \\
\texttt{ankans@cs.uchicago.edu} \\
\And
Xinhua Zhang \\
Computer Sciences Lab\\
Australian National University \\
NICTA, Canberra, Australia \\
\texttt{xinhua.zhang@nicta.com.au} \\
\AND
S V N Vishwanathan \\
Department of Statistics and Computer Science \\
Purdue University \\
West Lafayette, Indiana \\
\texttt{vishy@stat.purdue.edu}
}

\begin{document}

\maketitle
\vspace{-3mm}
\begin{abstract}
  Regularized risk minimization with the binary hinge loss and its
  variants lies at the heart of many machine learning problems. Bundle
  methods for regularized risk minimization (BMRM) and the closely
  related SVMStruct are considered the best general purpose solvers to
  tackle this problem. It was recently shown that BMRM requires
  $O(1/\ve)$ iterations to converge to an $\ve$ accurate
  solution. In the first part of the paper we use the Hadamard matrix to
  construct a regularized risk minimization problem and show that these
  rates cannot be improved. We then show how one can exploit the
  structure of the objective function to devise an algorithm for the
  binary hinge loss which converges to an $\ve$ accurate solution
  in $O(1/\sqrt{\ve})$ iterations.
\end{abstract}

\vspace{-3mm}
\section{Introduction}
\label{sec:Introduction}

Let $\xb_i \in \Xcal \subseteq \RR^d$ denote samples and $\yb_i \in
\Ycal$ be the corresponding labels. Given a training set of $n$ sample
label pairs $\cbr{(\xb_i, \yb_i)}_{i=1}^n$, drawn
i.i.d. from a joint probability distribution on $\Xcal \times \Ycal$,
many machine learning algorithms solve the following regularized risk
minimization problem:
\vspace{-2mm}
\begin{align}
  \label{eq:reg_risk}
  \min_{\wb} J(\wb) := \lambda \Omega(\wb) + R_{\emp}(\wb), \text{ where
  } R_{\emp}(\wb) := \frac{1}{n} \sum_{i=1}^n l(\xb_i, \yb_i; \wb).
  \vspace{-1em}
\end{align}

\vspace{-1em}
Here $l(\xb_i, \yb_i; \wb)$ denotes the loss on instance $(\xb_i,
\yb_i)$ using the current model $\wb$ and $R_{\emp}(\wb)$, the empirical
risk, is the average loss on the training set. The regularizer
$\Omega(\wb)$ acts as a penalty on the complexity of the classifier and
prevents overfitting.  Usually the loss is convex in $\wb$ but can be
nonsmooth while the regularizer is usually a smooth strongly convex
function. Binary Support Vector Machines (SVMs) are a prototypical
example of such regularized risk minimization problems where $\Ycal =
\{1 , -1\}$ and the loss considered is the binary hinge loss:
\begin{align}
  \label{eq:bin-hinge}
  l(\xb_i, y_i; \wb) = \sbr{1-y_i\inner {\wb} {\xb_i}}_{+}, \text{ with }
  \sbr{\cdot}_{+} := \max(0, \cdot).
\end{align}

\vspace{-2mm}
Recently, a number of solvers have been proposed for the regularized
risk minimization problem. The first and perhaps the best known solver
is SVMStruct \cite{TsoJoaHofAlt05}, which was shown to converge in
$O(1/\ve^{2})$ iterations to an $\ve$ accurate solution. The
convergence analysis of SVMStruct was improved to $O(1/\ve)$
iterations by \cite{SmoVisLe07}. In fact, \cite{SmoVisLe07} showed that
their convergence analysis holds for a more general solver than
SVMStruct namely BMRM (Bundle method for regularized risk minimization).

At every iteration BMRM replaces $R_{\emp}$ by a piecewise linear lower
bound $R^{\cp}_{t}$ and optimizes
\begin{align}
  \label{eq:pw-linear}
  \min_{\wb} J_{t}(\wb) := \lambda \Omega(\wb) + R^{\cp}_{t}(\wb),
  \text{where } R^{\cp}_{t}(\wb) := \max_{1 \leq i \leq t}
  \inner{\wb}{\ab_{i}} + b_{i},
\end{align}

\vspace{-1em}
to obtain the next iterate $\wb_{t}$. Here $\ab_{i} \in \partial
R_{\emp}(\wb_{i-1})$ denotes an arbitrary subgradient of $R_{\emp}$ at
$\wb_{i-1}$ (see Section \ref{sec:Preliminaries}) and $b_{i} =
R_{\emp}(\wb_{i-1}) - \inner{\wb_{i-1}}{\ab_{i}}$. The piecewise linear
lower bound is successively tightened until the gap
\vspace{-0.4em}
\begin{align}
  \label{eq:approximation_gap}
  \ve_{t} := \min_{0 \leq t' \leq t} J(\wb_{t'}) - J_t(\wb_t),
\end{align}

\vspace{-1.5em}
falls below a predefined tolerance $\ve$.

Even though BMRM solves an expensive optimization problem at every
iteration, the convergence analysis only uses a simple one-dimensional
line search to bound the decrease in $\ve_{t}$. Furthermore, the
empirical convergence behavior of BMRM is much better than the
theoretically predicted rates on a number of real life problems.  It was
therefore conjectured that the rates of convergence of BMRM could be
improved. In this paper we answer this question in the negative by
explicitly constructing a regularized risk minimization problem for
which BMRM takes at least $O(1/\ve)$ iterations.

One possible way to circumvent the $O(1/\ve)$ lower bound is to solve
the problem in the dual. Using a very old result of Nesterov
\cite{Nesterov83} we obtain an algorithm for SVMs which only requires
$O(1/\sqrt{\ve})$ iterations to converge to an $\ve$ accurate solution;
each iteration of the algorithm requires $O(nd)$ work.  Although we
primarily focus on the regularized risk minimization with the binary
hinge loss, our algorithm can also be used whenever the empirical risk
is piecewise linear and contains a small number of pieces. Examples of
this include multiclass, multi-label, and ordinal regression hinge loss
and other related losses.

\section{Preliminaries}
\label{sec:Preliminaries}

In this paper, lower bold case letters (\eg, $\wb$, $\mub$) denote
vectors, $w_{i}$ denotes the $i$-th component of $\wb$ and $\Delta_k$
refers to the $k$ dimensional simplex. Unless specified otherwise,
$\nbr{\cdot}$ refers to the Euclidean norm $\|\wb\| := \left(
  \sum_{i=1}^{n} w_{i}^{2}\right)^{\frac{1}{2}}$.  $\overline{\RR} :=
\RR \cup \{\infty\}$, and $[t]:= \{1, \ldots, t\}$.  The $\dom$ of a
convex function $F$ is defined by $\dom\ F := \cbr{\wb: F(\wb) <
  \infty}$.  The following three notions will be used extensively:
\begin{definition}
  \label{def:strong-convex}
  A convex function $F:\RR^{n} \to \overline{\RR}$ is strongly convex (s.c.)
  wrt norm $\|\cdot\|$ if there exists a constant $\sigma > 0$ such that $F -
  \frac{\sigma}{2} \|\cdot\|^{2}$ is convex.  $\sigma$ is called the modulus
  of strong convexity of $F$, and for brevity we will call $F$
  $\sigma$-strongly convex or $\sigma$-s.c..
\end{definition}

\begin{definition}
  \label{def:lip-cont-grad}
  A function $F$ is said to have Lipschitz continuous gradient (\lcg) if there
  exists a constant $L$ such that
  \begin{align}
    \label{eq:lip-cont-grad}
    \| \nabla F(\wb) - \nabla F(\wb')\| \leq L \| \wb - \wb'\| \qquad
    \forall\ \wb \text{ and } \wb'.
  \end{align}
For brevity, we will call $F$ $L$-\lcg..
\end{definition}

\begin{definition}
  \label{def:fenchel_dual}
  The Fenchel dual of a function $F:E_1 \to E_2$, is a function
  $F^{\star}:E_2^{\star} \to E_1^{\star}$ given by
  \begin{align}
    \label{eq:fenchel-dual}
    F^{\star}(\wb^{\star}) = \sup_{\wb \in E_1}
    \cbr{\inner{\wb}{\wb^{\star}} - F(\wb)}
  \end{align}
\end{definition}
The following theorem specifies the relationship between strong
convexity of a primal function and Lipschitz continuity of the gradient
of its Fenchel dual.
\begin{theorem}[{\cite[][Theorem 4.2.1 and 4.2.2]{HirLem93}}]
$\phantom{.}$
\label{theorem:SC_LCG}
\begin{enumerate}
\item If $F: \RR^n \to \overline{\RR}$ is $\sigma$-strongly convex, then
  $\dom\ F^{\star} = \RR^n$ and $\grad F^{\star}$ is $\frac{1}{\sigma}$-\lcg.
\item If $F: \RR^n \to \RR$ is convex and $L$-\lcg, then $F^{\star}$ is
  $\frac{1}{L}$-strongly convex.
\end{enumerate}
\end{theorem}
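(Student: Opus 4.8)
The plan is to prove both parts from a single tool: the subgradient correspondence $\wb^{\star} \in \partial F(\wb) \iff \wb \in \partial F^{\star}(\wb^{\star})$ valid for closed proper convex $F$, combined with the standard characterization that a function $G$ is $\mu$-strongly convex if and only if its subdifferential is strongly monotone, i.e. $\inner{\sbb - \tb}{\ub - \vb} \geq \mu \nbr{\ub - \vb}^{2}$ for all $\sbb \in \partial G(\ub)$ and $\tb \in \partial G(\vb)$. Both statements are then extracted by feeding the monotonicity/Lipschitz estimate of one function's subdifferential through this correspondence to the other.

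For part 1, I would first establish $\dom F^{\star} = \RR^{n}$. Since $F$ is $\sigma$-strongly convex, choosing any subgradient $\gb$ at a fixed $\wb_{0}$ gives the quadratic minorant $F(\wb) \geq F(\wb_{0}) + \inner{\gb}{\wb - \wb_{0}} + \frac{\sigma}{2}\nbr{\wb - \wb_{0}}^{2}$, so for every $\wb^{\star}$ the quadratic growth dominates the linear term $\inner{\wb}{\wb^{\star}}$ and the supremum defining $F^{\star}(\wb^{\star})$ is finite and attained. Strong concavity of $\inner{\wb}{\wb^{\star}} - F(\wb)$ forces a unique maximizer, so $F^{\star}$ is differentiable with $\grad F^{\star}(\wb^{\star}) = \argmax_{\wb} \cbr{\inner{\wb}{\wb^{\star}} - F(\wb)}$, the unique $\wb$ satisfying $\wb^{\star} \in \partial F(\wb)$. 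Setting $\wb_{i} = \grad F^{\star}(\wb_{i}^{\star})$ so that $\wb_{i}^{\star} \in \partial F(\wb_{i})$, strong convexity of $F$ yields $\inner{\wb_{1}^{\star} - \wb_{2}^{\star}}{\wb_{1} - \wb_{2}} \geq \sigma \nbr{\wb_{1} - \wb_{2}}^{2}$, and Cauchy--Schwarz then gives $\nbr{\wb_{1} - \wb_{2}} \leq \frac{1}{\sigma}\nbr{\wb_{1}^{\star} - \wb_{2}^{\star}}$, which is exactly the $\frac{1}{\sigma}$-\lcg\ property of $\grad F^{\star}$.

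For part 2 the strategy is to produce the strong-monotonicity estimate for $\partial F^{\star}$ with modulus $\frac{1}{L}$ and invoke the characterization above. Picking $\wb_{i} \in \partial F^{\star}(\wb_{i}^{\star})$ is equivalent to $\wb_{i}^{\star} = \grad F(\wb_{i})$, since $F$ is differentiable. The target inequality $\inner{\wb_{1} - \wb_{2}}{\wb_{1}^{\star} - \wb_{2}^{\star}} \geq \frac{1}{L}\nbr{\wb_{1}^{\star} - \wb_{2}^{\star}}^{2}$ is then precisely the co-coercivity (Baillon--Haddad) bound $\inner{\grad F(\wb_{1}) - \grad F(\wb_{2})}{\wb_{1} - \wb_{2}} \geq \frac{1}{L}\nbr{\grad F(\wb_{1}) - \grad F(\wb_{2})}^{2}$, so establishing co-coercivity finishes the proof.

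I expect co-coercivity to be the only non-routine step and hence the main obstacle. I would derive it from the descent lemma: for fixed $\yb$ the auxiliary function $\phi_{\yb}(\zb) := F(\zb) - \inner{\grad F(\yb)}{\zb}$ is convex, $L$-\lcg, and minimized at $\zb = \yb$, so minimizing its quadratic upper model gives $\phi_{\yb}(\yb) \leq \phi_{\yb}(\zb) - \frac{1}{2L}\nbr{\grad \phi_{\yb}(\zb)}^{2}$. Applying this with $(\zb, \yb) = (\wb_{1}, \wb_{2})$ and symmetrically with $(\wb_{2}, \wb_{1})$, then adding the two inequalities, cancels the function values and leaves exactly the co-coercivity bound; combined with the monotonicity characterization this delivers the $\frac{1}{L}$-strong convexity of $F^{\star}$.
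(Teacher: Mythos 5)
This theorem is not proved in the paper at all: it is imported verbatim as a citation of Theorems 4.2.1 and 4.2.2 of \cite{HirLem93}, so there is no in-paper argument to compare against. Your proposal is a correct, self-contained reconstruction of the standard textbook proof, and it is essentially the route the cited reference takes: part 1 via strong monotonicity of $\partial F$ plus the inverse subgradient correspondence and Cauchy--Schwarz, part 2 via Baillon--Haddad co-coercivity (your descent-lemma derivation of $\inner{\grad F(\wb_1)-\grad F(\wb_2)}{\wb_1-\wb_2} \geq \frac{1}{L}\nbr{\grad F(\wb_1)-\grad F(\wb_2)}^2$ is the standard one and is carried out correctly) fed through the monotonicity characterization of strong convexity. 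Two small caveats, neither fatal. First, the correspondence $\wb^{\star} \in \partial F(\wb) \iff \wb \in \partial F^{\star}(\wb^{\star})$, and likewise the identification $\partial F^{\star}(\wb^{\star}) = \argmax_{\wb}\cbr{\inner{\wb}{\wb^{\star}} - F(\wb)}$ used for differentiability in part 1, require $F$ to be closed (lower semicontinuous) and proper; the theorem as stated allows any $\sigma$-strongly convex $F:\RR^n \to \overline{\RR}$, so you should either assume closedness or note that one may replace $F$ by its closure, since conjugation satisfies $F^{\star} = (\mathrm{cl}\, F)^{\star}$ and closure preserves strong convexity. Second, the ``only non-routine step'' is arguably not co-coercivity but the converse direction of your black-boxed characterization --- that $\mu$-strong monotonicity of $\partial G$ implies $\mu$-strong convexity of $G$ --- which itself needs an integration-along-segments or conjugation argument and some care about $\dom \partial G$ versus $\dom G$; since the statement being proved is itself a textbook citation, leaning on this standard lemma is acceptable, but it deserves explicit acknowledgment as an external ingredient rather than a triviality.
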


Subgradients generalize the concept of gradients to nonsmooth functions. For
$\wb \in \dom\ F$, $\mub$ is called a subgradient of $F$ at $\wb$ if
\begin{align}
  \label{eq:subgrad-def}
  F(\wb') \geq F(\wb) + \inner{\wb' - \wb}{\mub} \quad \forall \wb'.
\end{align}
The set of all subgradients at $\wb$ is called the subdifferential, denoted by
$\partial F(\wb)$. If $F$ is convex, then $\partial F(\wb) \neq \emptyset$ for
all $\wb \in \dom\ F$, and is a singleton if, and only if, $F$ is differentiable
\citep{HirLem93}.

Any piecewise linear convex function $F(\wb)$ with $t$ linear pieces can
be written as
\begin{align}
  \label{eq:max-linear}
  F(\wb) = \max_{i \in [t]}\{\inner{\ab_i}{\wb} + b_i\},
\end{align}
for some $\ab_{i}$ and $b_{i}$. If the empirical risk $R_{\emp}$ is a
piecewise linear function then the convex optimization problem in
\eqref{eq:reg_risk} can be expressed as
\begin{align}
\label{eq:primal_min}
\min_{\wb} J(\wb) := \min_{\wb}\max_{i \in [t]}\{\inner{\ab_i}{\wb} +
b_i \} + \lambda \Omega(\wb).
\end{align}
Let $\Ab = [\ab_{1}\hdots \ab_{t}]$, then the \emph{adjoint} form of
$J(\wb)$ can be written as
\begin{align}
  \label{eq:dual-problem}
  D(\alphab) := - \lambda \Omega^{\star}(-\lambda^{-1}\Ab\alphab) +
  \inner{\alphab}{b} \text{ with } \alphab \in \Delta_{t}
\end{align}
where the primal and the adjoint optimum are related by
\begin{align}
  \label{eq:primal_dual_relation}
  \wb^{*} = \partial\Omega^{\star}(-\lambda^{-1}\Ab\alphab^{*})
\end{align}
In fact, using concepts of strong duality (see \eg Theorem 2 of
\cite{TeoVisSmoLe09}), it can be shown that
\begin{align}
\label{eq:dual_equiv}
\inf_{\wb \in \RR^d} \cbr{\max_{i \in [n]} \inner{\ab_i}{\wb} + b_i +
  \lambda \Omega (\wb)} = \sup_{\alphab \in \Delta_{t}} \cbr{-\lambda
  \Omega^{\star}(-\lambda^{-1} \Ab \alphab) + \inner{\alphab}{\bb}}
\end{align}

\section{Lower Bounds}
\label{sec:lower_bounds}

The following result was shown by \cite{SmoVisLe07}:
\begin{theorem}[Theorem 4 of \cite{SmoVisLe07}]
  \label{th:upper-bound}
  Assume that $J(\wb) \geq 0$ for all $\wb$, and that
  $\nbr{\partial_{\wb} R_{\emp}(\wb)} \leq G$ for all $\wb \in W$, where
  $W$ is some domain of interest containing all $\wb_{t'}$ for $t' \leq
  t$. Also assume that $\Omega^*$ has bounded curvature, i.e.\ let
  $\nbr{\partial^2_\mu \Omega^*(\mub)} \leq H^*$ for all $\mub \in
  \cbr{-\lambda^{-1} A \alphab \text{ where } \alphab \in \Delta_{t}}$.
  Then, for any $\ve < 4 G^2 H^* / \lambda$ we have $\ve_{t} <
  \ve$ after at most
  \begin{align}
    \log_{2} \frac{\lambda J(\zero)}{G^2 H^*} + \frac{8 G^2 H^*}{\lambda
      \ve} - 4
  \end{align}
  steps.
\end{theorem}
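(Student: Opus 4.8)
The plan is to track the value of the \emph{adjoint} (dual) objective \eqref{eq:dual-problem} along the iterations and to show that each BMRM step raises it by an amount controlled by the current gap $\ve_t$, which turns the sequence $\{\ve_t\}$ into a recursion of the form $\ve_{t+1}\le\ve_t-c\,\ve_t^2$. Write $D_t:=J_t(\wb_t)$; by strong duality \eqref{eq:dual_equiv} this equals $\max_{\alphab\in\Delta_t}D(\alphab)$, attained at some $\alphab_t$, and the primal--adjoint relation \eqref{eq:primal_dual_relation} gives $\wb_t=\grad\Omega^{\star}(\mub_t)$ with $\mub_t:=-\lambda^{-1}\Ab_t\alphab_t$. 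Since $J_t\le J$ and $D_t=\min_\wb J_t(\wb)$, the gap $\ve_t=\min_{t'\le t}J(\wb_{t'})-D_t$ is nonnegative and $\min_{t'}J(\wb_{t'})$ is nonincreasing in $t$; hence $\ve_t-\ve_{t+1}\ge D_{t+1}-D_t$, so it suffices to lower bound the increase of the adjoint objective.

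After computing $\ab_{t+1}\in\partial R_{\emp}(\wb_t)$ and $b_{t+1}$, the new variable enlarges the feasible set to $\Delta_{t+1}$. Rather than solving the full subproblem, I follow \cite{SmoVisLe07} and restrict to the one-dimensional segment $\alphab(\eta):=((1-\eta)\alphab_t,\eta)\in\Delta_{t+1}$, setting $g(\eta):=D(\alphab(\eta))$, so that $D_{t+1}\ge\max_{\eta\in[0,1]}g(\eta)$ while $g(0)=D_t$. Writing $\mub(\eta)=(1-\eta)\mub_t-\eta\lambda^{-1}\ab_{t+1}$, a direct differentiation together with the Fenchel--Young equality $\Omega^{\star}(\mub_t)=\inner{\wb_t}{\mub_t}-\Omega(\wb_t)$ and the defining identity $\inner{\wb_t}{\ab_{t+1}}+b_{t+1}=R_{\emp}(\wb_t)$ collapses the first-order term to
\[
g'(0)=J(\wb_t)-D_t\ \ge\ \ve_t .
\]
For the curvature, $\mub(\eta)$ is affine with $\mub'(0)=\lambda^{-1}(\Ab_t\alphab_t-\ab_{t+1})$; since $\Ab_t\alphab_t$ is a convex combination of subgradients of norm at most $G$ and $\nbr{\ab_{t+1}}\le G$, we get $\nbr{\mub'(0)}\le 2G/\lambda$, and $\nbr{\partial^2\Omega^{\star}}\le H^*$ yields $\abr{g''(\eta)}\le\lambda H^*\nbr{\mub'(0)}^2\le 4G^2H^*/\lambda$.

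Combining the two bounds gives the quadratic minorant $g(\eta)\ge D_t+\eta\,\ve_t-\tfrac{\eta^2}{2}\cdot\frac{4G^2H^*}{\lambda}$, and maximizing the right-hand side over $\eta\in[0,1]$ splits into two regimes. When $\ve_t\ge 4G^2H^*/\lambda$ the optimum sits at $\eta=1$ and yields $D_{t+1}-D_t\ge\ve_t/2$, i.e.\ $\ve_{t+1}\le\ve_t/2$: the gap at least halves, so starting from $\ve_0\le J(\zero)$ it falls below $4G^2H^*/\lambda$ within roughly $\log_2(\lambda J(\zero)/(4G^2H^*))$ steps, producing the logarithmic term. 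Once $\ve_t<4G^2H^*/\lambda$ the interior optimum $\eta^\star=\lambda\ve_t/(4G^2H^*)\in(0,1)$ gives $D_{t+1}-D_t\ge\lambda\ve_t^2/(8G^2H^*)$, hence
\[
\ve_{t+1}\ \le\ \ve_t-\frac{\lambda}{8G^2H^*}\,\ve_t^2 .
\]
Dividing by $\ve_t\ve_{t+1}$ and using $\ve_{t+1}\le\ve_t$ shows $1/\ve_{t+1}-1/\ve_t\ge \lambda/(8G^2H^*)$, so the reciprocal gap grows linearly and at most $8G^2H^*/(\lambda\ve)$ further steps reach accuracy $\ve$; summing the two phases and tightening additive constants recovers the claimed bound.

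I expect the main obstacle to be the exact evaluation of $g'(0)$: it needs strong duality \eqref{eq:dual_equiv} to identify $D_t$ with $J_t(\wb_t)$, the Fenchel--Young equality at $\mub_t$ (valid because $\wb_t=\grad\Omega^{\star}(\mub_t)$), and careful subgradient bookkeeping so that every dual term reorganizes \emph{precisely} into $J(\wb_t)-D_t$; any slack here would spoil the matching constants. A secondary subtlety is that $\nbr{\partial^2\Omega^{\star}}\le H^*$ is assumed only on the relevant set of $\mub$'s, so one must verify that all iterates $\mub(\eta)$ entering the line search stay in that set, and that the $\eta=1$ halving phase and the interior phase are stitched together without losing the bookkeeping that yields the additive $-4$.
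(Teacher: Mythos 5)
This theorem is not proved in the paper at all --- it is imported verbatim as Theorem 4 of \cite{SmoVisLe07} and used only as the benchmark that the lower-bound construction must satisfy --- so the comparison point is the original bundle-method analysis rather than anything in this paper. Your reconstruction is correct and is essentially that standard argument: you lower-bound the dual progress of the full subproblem by the one-dimensional line search $\alphab(\eta) = ((1-\eta)\alphab_t,\eta)$, obtain $g'(0) = J(\wb_t) - D_t \ge \ve_t$ from Fenchel--Young together with the definition of $b_{t+1}$, bound the curvature by $4G^2H^*/\lambda$ using $\nbr{\Ab_t\alphab_t}\le G$ and $\nbr{\ab_{t+1}}\le G$, and stitch the halving phase ($\ve_t \ge 4G^2H^*/\lambda$) to the $1/\ve_t$-telescoping phase, with the two additive $-2$'s correctly producing the $-4$ in the stated iteration count.
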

Although the above theorem proves an upper bound of $O(1/\ve)$ on the
number of iterations, the tightness of this bound has been an open
question. We now demonstrate a function which satisfies all the
conditions of the above theorem, and yet takes $\Omega(1/\ve)$
iterations to converge.

To construct our lower bounds we make use of the Hadamard matrix. An $n
\times n$ Hadamard matrix is an orthogonal matrix with $\{\pm1\}$
elements which is recursively defined for $d = 2^{k}$ (for some $k$):
\begin{center}
  \begin{tabular}{ccc}
    $\Hb_{1} = \left( +1 \right)$
    &
    $\Hb_{2} = \left(
      \begin{array}{cc}
        +1 & +1 \\
        +1 & -1
      \end{array}
    \right)$
    &
    $\Hb_{2d} = \left(
      \begin{array}{cc}
        \Hb_{d} & \;\;\Hb_{d} \\
        \Hb_{d} & -   \Hb_{d}
      \end{array}
    \right)$
  \end{tabular}
\end{center}
Note that all rows of the Hadamard matrix $\Hb_{d}$ are orthogonal and
have Euclidean norm $\sqrt{d}$. Consider the following $d \times d$
orthonormal matrix
\begin{align*}
  \Ab := \frac{1}{\sqrt{d}}\left(
    \begin{array}{cc}
      \Hb_{d/2} & -\Hb_{d/2} \\
      -\Hb_{d/2} & \Hb_{d/2}
    \end{array} \right),
\end{align*}
whose columns $\ab_{i}$ are orthogonal and have Euclidean norm $1$,
which is used to define the following piecewise quadratic function:
\begin{align}
  \label{eq:lower_approx}
  J(\wb) &= \underbrace{\max_{i \in [d]}\inner{\ab_i}{\wb}}_{R_{\emp}} +
  \underbrace{\frac{\lambda}{2}\| \wb \|^{2}}_{\lambda \Omega(\wb)}.
\end{align}
\begin{theorem}
  \label{th:lower-bound}
  The function $J(\wb)$ defined in \eqref{eq:lower_approx} satisfies all
  the conditions of Theorem \ref{th:upper-bound}. For any $t <
  \frac{d}{2}$ we have $\ve_{t} \geq \frac{1}{2 \lambda t}$.
\end{theorem}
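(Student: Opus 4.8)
The plan is to verify the three hypotheses of Theorem~\ref{th:upper-bound} directly and then to track the BMRM iterates in closed form, exploiting the orthonormality of the selected columns $\ab_i$ to evaluate $J_t(\wb_t)$ exactly.

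First I would check the hypotheses. Since $\Omega(\wb)=\frac{1}{2}\nbr{\wb}^2$, its Fenchel dual is $\Omega^{\star}(\mub)=\frac{1}{2}\nbr{\mub}^2$, so $\partial^2_\mu\Omega^{\star}\equiv\Ib$ and we may take $H^{\star}=1$. Every element of $\partial_\wb R_{\emp}(\wb)$ is a convex combination of the columns $\ab_i$, each of unit norm, so $\nbr{\partial_\wb R_{\emp}(\wb)}\leq 1=:G$. Finally, because the columns occur in antipodal pairs (both $\ab_i$ and $-\ab_i$ appear), for every $\wb$ at least one column has nonnegative inner product with $\wb$; hence $R_{\emp}(\wb)\geq 0$ and $J(\wb)=R_{\emp}(\wb)+\frac{\lambda}{2}\nbr{\wb}^2\geq 0$, while $J(\zero)=0$. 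This settles the first claim.

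For the rate I would argue that, started at $\wb_0=\zero$, BMRM can extend a bundle of mutually orthonormal subgradients $\ab_1,\ldots,\ab_t$, and that each offset vanishes, $b_i=R_{\emp}(\wb_{i-1})-\inner{\wb_{i-1}}{\ab_i}=0$, since $\ab_i$ attains the maximum defining $R_{\emp}(\wb_{i-1})$. Thus $R^{\cp}_t(\wb)=\max_{i\in[t]}\inner{\ab_i}{\wb}$ and, with $\Ab_t=[\ab_1,\ldots,\ab_t]$, the adjoint form \eqref{eq:dual_equiv} (with $\bb=\zero$) gives
\begin{align*}
  J_t(\wb_t)=\min_\wb J_t(\wb)=\max_{\alphab\in\Delta_t}-\lambda\Omega^{\star}(-\lambda^{-1}\Ab_t\alphab)=-\frac{1}{2\lambda}\min_{\alphab\in\Delta_t}\nbr{\Ab_t\alphab}^2 .
\end{align*}
Orthonormality collapses $\nbr{\Ab_t\alphab}^2=\nbr{\alphab}^2$, which is minimized over the simplex by the uniform vector $\alphab=(1/t,\ldots,1/t)$, so $\min_{\alphab\in\Delta_t}\nbr{\alphab}^2=1/t$ and therefore $J_t(\wb_t)=-\frac{1}{2\lambda t}$, attained (via \eqref{eq:primal_dual_relation}) at $\wb_t=-\frac{1}{\lambda t}\sum_{i\in[t]}\ab_i$. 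Since $J\geq 0$ everywhere and $J(\wb_0)=J(\zero)=0$, we have $\min_{0\leq t'\leq t}J(\wb_{t'})=0$, and substituting into \eqref{eq:approximation_gap} yields
\begin{align*}
  \ve_t=\min_{0\leq t'\leq t}J(\wb_{t'})-J_t(\wb_t)=0-\rbr{-\frac{1}{2\lambda t}}=\frac{1}{2\lambda t},
\end{align*}
which is the asserted bound.

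The main obstacle, and the step that forces the restriction $t<\frac{d}{2}$, is the orthonormal-selection claim used above: one must show that through $t$ iterations BMRM genuinely produces pairwise orthonormal subgradients rather than exhausting a direction and backtracking onto its antipode. I would therefore argue, using the Hadamard structure, that as long as fewer than $d/2$ fresh orthonormal directions have been consumed there remains a column $\ab_{t'+1}$, orthogonal to all previously chosen ones, that ties for (or attains) the maximum defining $R_{\emp}(\wb_{t'})$, so that extending the orthonormal bundle is a legitimate subgradient choice at every $\wb_{t'}$ with $t'<d/2$. Establishing this maximality/tie condition carefully — and checking it persists for all $t'<d/2$ — is the delicate part of the argument, and is exactly where the $\{\pm\ab_i\}$ Hadamard construction and the bound $t<d/2$ are indispensable.
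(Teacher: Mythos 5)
Everything you actually carry out coincides with the paper's own proof: the verification of the hypotheses of Theorem \ref{th:upper-bound}, the vanishing offsets $b_i=0$, the passage to the adjoint over the simplex, and the evaluation $J_t(\wb_t)=-\frac{1}{2\lambda}\min_{\alphab\in\Delta_t}\nbr{\Ab_t\alphab}^2=-\frac{1}{2\lambda t}$ under the assumption $\Ab_t^{\top}\Ab_t=\Ib_t$, followed by $\ve_t\ge 0-J_t(\wb_t)$. The one step you leave open --- that BMRM can genuinely be driven to accumulate $t<\frac{d}{2}$ mutually orthonormal subgradients --- is therefore the only point at issue, and it is a genuine gap, not a formality: without it the identity $\Ab_t^{\top}\Ab_t=\Ib_t$ has no justification.

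Moreover, the tie condition you propose to establish is false for this construction. The columns of $\Ab$ are not all orthonormal: they occur in antipodal pairs, $\ab_{d/2+i}=-\ab_i$, with only $d/2$ mutually orthogonal directions (this pairing is exactly what makes $R_{\emp}\ge 0$, as you yourself note). Start at $\wb_0=\zero$ and let the oracle return any column $\ab_{i_1}$; then $\wb_1=-\frac{1}{\lambda}\ab_{i_1}$, and at $\wb_1$ the inner products with the columns are $-\frac{1}{\lambda}$ for $\ab_{i_1}$, $+\frac{1}{\lambda}$ for the antipode $-\ab_{i_1}$, and $0$ for every fresh orthogonal column. The maximum defining $R_{\emp}(\wb_1)$ is thus attained \emph{uniquely} by the antipode, so no fresh column ties, and the only legal subgradient is $-\ab_{i_1}$. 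Once the pair $\pm\ab_{i_1}$ enters the bundle, $R^{\cp}_t(\wb)\ge\abr{\inner{\ab_{i_1}}{\wb}}\ge 0$, hence $J_t(\wb_t)=J_t(\zero)=0$ and $\ve_t=0$ for all $t\ge 2$; more generally, at $\wb_{t'}=-\frac{1}{t'\lambda}\sum_j\ab_{i_j}$ the maximizers are always the antipodes of already-chosen columns (value $+\frac{1}{t'\lambda}$), never the fresh ones (value $0$). So the "delicate part" cannot be completed along the route you sketch; it would require changing the construction or the oracle/iterate model, not merely a more careful tie analysis. You should know that the paper's own proof does not close this gap either: it simply asserts that all $d$ columns of $\Ab$ are orthonormal (false, given the antipodal pairs) and writes $\Ab_t^{\top}\Ab_t=\Ib_t$ without checking that such choices are consistent with $\ab_{i}\in\partial R_{\emp}(\wb_{i-1})$ along the actual iterates. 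In other words, you have correctly isolated the weak point of the published argument --- but neither your proposal nor the paper succeeds in repairing it.
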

\begin{proof}
  By construction of $\Ab$, we have $\max_{i \in [d]}\inner{\ab_i}{\wb}
  \geq 0$. Furthermore, $\frac{\lambda}{2}\| \wb \|^{2} \geq
  0$. Together this implies that $J(\wb) \geq 0$. Furthermore,
  $\partial_{\wb} R_{\emp}(\wb)$ are the columns of $\Ab$, which implies
  that $\nbr{\partial_{\wb} R_{\emp}(\wb)} \leq 1$. Since we set
  $\Omega(\cdot) = \frac{1}{2} \|\cdot\|^{2}$ it follows that
  $\Omega^{*}(\cdot) = \frac{1}{2} \|\cdot\|^{2}$. Therefore
  $\nbr{\partial^2_\mu \Omega^*(\cdot)} = \frac{1}{2}$. Hence,
  \eqref{eq:lower_approx} satisfies all the conditions of Theorem
  \ref{th:upper-bound}.

  Let $\wb_0, \wb_1\hdots \wb_t$ denote the solution produced by BMRM
  after $t$ iterations, and let $\ab_{i_{0}}, \ab_{i_{1}} \ldots
  \ab_{i_{t}}$ denote the corresponding subgradients. Then
\begin{align}
  \label{eq:pw-lower_approx}
  J_t(\wb) &= \max_{j \in [t]}\inner{\ab_{i_{j}}}{\wb} +
  \frac{\lambda}{2}\| \wb \|^{2} \text { with } \wb_t = \argmin_{\wb \in
    \RR^d}J_t(\wb).
\end{align}
If we define $\Ab_{t} = [\ab_{i_{j}}]$ with $j \in [t]$ then
\begin{align}
  \label{eq:minimax}
  \nonumber J_t(\wb_t) &= \min_{\wb \in \RR^d}\max_{j \in
    [t]}\inner{\ab_{i_{j}}}{\wb} + \frac{\lambda}{2}\|\wb \|^{2} =
  \min_{\wb \in \RR^d}\max_{\alphab \in \Delta^t}\wb^{\top}\Ab_{t}
  \alphab +
  \frac{\lambda}{2}\|\wb \|^{2} \\
  &= \max_{\alphab \in \Delta^t}\min_{\wb \in \RR^d}\wb^{\top}\Ab_{t}
  \alphab + \frac{\lambda}{2}\|\wb \|^{2} \hspace{3mm}
  \text{( See Appendix \ref{sec:MinimaxTheoremConvex}
    for details)} \\
  \nonumber &= \max_{\alphab \in \Delta^t}
  -\frac{1}{2\lambda}\alphab^{\top}\Ab_{t}^{\top}\Ab_{t}\alphab =
  -\min_{\alphab \in \Delta^t}
  \frac{1}{2\lambda}\alphab^{\top}\Ab_{t}^{\top}\Ab_{t}\alphab.
\end{align}
Since the columns of $\Ab$ are orthonormal, it follows that
$\Ab_{t}^{\top} \Ab = \Ib_{t}$ where $\Ib_t$ is the $t$ $\times$ $t$
dimensional identity matrix. Thus
\begin{align*}
  J_t(\wb_t) = -\frac{1}{2\lambda}\min_{\alpha \in \Delta^t}\|\alphab
  \|^{2} = -\frac{1}{2\lambda}\frac{1}{t}
\end{align*}
Combining this with $J(\wb_{t'}) \geq 0$ for $t' \in [t]$ and recalling
the definition of $\ve_{t}$ from \eqref{eq:approximation_gap}
completes the proof.
\end{proof}
In fact, $\ve_{t}$ is a proxy for the primal gap
\begin{align*}
  \delta_{t} = \min_{t' \in [t]} J(\wb_{t'}) - J(\wb^{*}).
\end{align*}
Since $J(\wb^{*})$ is unknown, it is replaced by $J_{t}(\wb_{t})$ to
obtain $\ve_{t}$. Since $J(\wb^{*}) \geq J_{t}(\wb_{t})$, it
follows that $\ve_{t} \geq \delta_{t}$ \citep{TeoVisSmoLe09}. We
now show that Theorem \ref{th:lower-bound} holds even if we replace
$\ve_{t}$ by $\delta_{t}$.
\begin{theorem}
  \label{lem:tight-lb}
  Under the same assumptions as Theorem \ref{th:lower-bound}, for any $t
  < \frac{d}{2}$ we have $\delta_{t} \geq \frac{1}{2 \lambda t}$.
\end{theorem}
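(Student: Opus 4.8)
The plan is to exploit that $\delta_{t}$ differs from $\ve_{t}$ only in that the surrogate value $J_t(\wb_t)$ is replaced by the true optimum $J(\wb^{*})$, and that the coarse bound $J(\wb_{t'}) \ge 0$ used in Theorem~\ref{th:lower-bound} can be sharpened into an \emph{exact} evaluation. Since $J(\wb^{*}) = \min_{\wb} J(\wb) \le J(\zero) = 0$, it suffices to prove $\min_{t' \in [t]} J(\wb_{t'}) \ge \frac{1}{2\lambda t}$; the claim then follows directly from the definition of $\delta_{t}$. So the real content is to evaluate the \emph{full} objective $J$ (all $d$ linear pieces plus the regularizer) at each BMRM iterate $\wb_{t'}$, rather than the reduced model $J_{t'}$ that was all Theorem~\ref{th:lower-bound} needed.

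First I would pin down the iterates. I claim that along the run the subgradients $\ab_{i_1}, \dots, \ab_{i_{t'}}$ can be taken to be \emph{distinct} columns of $\Ab$, which I prove by induction: at $\wb_0 = \zero$ every column attains the maximum $\inner{\ab_i}{\wb_0} = 0$, and for $t' \ge 2$ the closed form for $\wb_{t'-1}$ (below) together with orthonormality gives $\inner{\ab_i}{\wb_{t'-1}} = -\frac{1}{\lambda(t'-1)} < 0$ for the already-selected columns and $0$ for the rest, so a maximizing column---hence any single-column subgradient returned by the oracle---is always fresh. Consequently $\Ab_{t'}^{\top}\Ab_{t'} = \Ib_{t'}$, and repeating the minimax computation of \eqref{eq:minimax} with $t'$ columns yields the primal recovery $\wb_{t'} = -\frac{1}{\lambda}\Ab_{t'}\alphab^{*} = -\frac{1}{\lambda t'}\Ab_{t'}\one$, where $\alphab^{*} = \frac{1}{t'}\one$ is the uniform minimizer of $\nbr{\alphab}^{2}$ over $\Delta^{t'}$.

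Next I would evaluate $J(\wb_{t'})$. Orthonormality gives $\nbr{\wb_{t'}}^{2} = \frac{1}{\lambda^{2} (t')^{2}}\,\one^{\top}\Ab_{t'}^{\top}\Ab_{t'}\one = \frac{1}{\lambda^{2} t'}$, so the regularizer contributes $\frac{\lambda}{2}\nbr{\wb_{t'}}^{2} = \frac{1}{2\lambda t'}$. For the loss term, $\inner{\ab_i}{\wb_{t'}} = -\frac{1}{\lambda t'}$ when $\ab_i$ is one of the $t'$ selected columns and $0$ otherwise; since $t' \le t < \frac{d}{2} < d$ there is at least one unselected column, so $\max_{i \in [d]} \inner{\ab_i}{\wb_{t'}} = 0$. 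Hence $J(\wb_{t'}) = \frac{1}{2\lambda t'}$ exactly. This is decreasing in $t'$, so $\min_{t' \in [t]} J(\wb_{t'}) = \frac{1}{2\lambda t}$, and combining with $J(\wb^{*}) \le 0$ yields $\delta_{t} \ge \frac{1}{2\lambda t}$.

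The hard part is the second step: making rigorous that BMRM keeps selecting fresh, mutually orthogonal columns so that $\Ab_{t'}$ stays orthonormal and the primal iterate has the clean closed form. The crucial geometric fact that makes the loss term \emph{vanish} at $\wb_{t'}$---that an as-yet-unused column is always a maximizer because it is orthogonal to everything chosen so far---is exactly what forces the budget condition $t < \frac{d}{2}$ (indeed only $t < d$ is needed), and is where the Hadamard/orthonormal structure of $\Ab$ does all the work. Once the exact value $J(\wb_{t'}) = \frac{1}{2\lambda t'}$ is established, the remaining steps are immediate.
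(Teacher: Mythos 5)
Your proposal is correct and takes essentially the same route as the paper's own proof: identify the uniform adjoint solution $\alphab^{*} = \frac{1}{t'}\eb$, recover $\wb_{t'} = -\frac{1}{\lambda t'}\Ab_{t'}\eb$ via \eqref{eq:primal_dual_relation}, use orthonormality to show the loss term of $J$ vanishes at $\wb_{t'}$ while the regularizer contributes exactly $\frac{1}{2\lambda t'}$, and finish with $J(\wb^{*}) = J(\zero) = 0$. The only difference is that you make explicit, by induction, that BMRM always picks a fresh column so that $\Ab_{t'}$ remains orthonormal---a fact the paper's proof of Theorem \ref{lem:tight-lb} inherits implicitly from the setup of Theorem \ref{th:lower-bound}.
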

\begin{proof}
  Note that $J_{t}(\wb)$ is minimized by setting $\alpha = \frac{1}{t}
  \eb$, where $\eb$ denotes the $t$ dimensional vector of
  ones. Recalling that $\Omega^{*}(\cdot) = \frac{1}{2} \|\cdot\|^{2}$
  and using \eqref{eq:primal_dual_relation} one can write $\wb_{t} =
  -\frac{1}{t \lambda} \Ab_{t} \eb$. Since the columns of $\Ab$ are
  orthonormal, it follows that $\inner{\ab_{i}}{\wb_{t}}$ is
  $-\frac{1}{t \lambda}$ if $\ab_{i} $ is a column of $\Ab_{t}$ and 0
  otherwise. Therefore, $\max_{i \in [d]} \inner{\ab_{i}}{\wb_{t}}
  =0$. On the other hand, by noting that $\Ab_{t}^{\top} \Ab = \Ib_{t}$
  we obtain $\frac{\lambda}{2} ||\wb_{t}||^{2} = \frac{1}{2t
    \lambda}$. Plugging into \eqref{eq:lower_approx} yields $J(\wb_{t})
  = \frac{1}{2t \lambda}$ and hence $\min_{t' \in [t]} J(\wb_{t'}) =
  \frac{1}{2t \lambda}$. It remains to note that $J(\wb) \geq 0$, while
  $J(\zero) = 0$. Therefore $J(\wb^{*}) = J(\zero) = 0$.
\end{proof}

\section{A new algorithm with convergence rates $O(1/\sqrt{\ve})$}
\label{sec:new_algorithm}


We now turn our attention to the regularized risk minimization with the
binary hinge loss, and propose a new algorithm. Our algorithm is based
on \cite{Nesterov83} and \cite{Nesterov03} which proposed a non-trivial
scheme of minimizing an $L$-\lcg\ function to $\ve$-precision in
$O(1/\sqrt{\ve})$ iterations. Our contributions are two fold. First, we
show that the dual of the regularized risk minimization problem is
indeed a $L$-\lcg\ function. Second, we introduce an $O(n)$ time algorithm for
projecting onto an  $n$-dimensional simplex or in general an $n$-dimensional
box with a single linear equality constraint, thus improving upon the
$O(n\log n)$ deterministic algorithm of \cite{DucShaSigCha08} (which also
gives a randomized algorithm having expected complexity $O(n)$).
This projection is repeatedly invoked as a subroutine by Nesterov's
algorithm when specialized to our problem.



Consider the problem of minimizing a function $J(\wb)$ with the
following structure over a closed convex set $Q_{1}$:
\begin{equation}
  \label{eq:primal_nest03}
  J(\wb) = f(\wb) + g^{\star}(\Ab\wb).
\end{equation}
Here $f$ is strongly convex on $Q_1$, $\Ab$ is a linear operator which
maps $Q_1$ to another closed convex set $Q_{2}$, and $g$ is convex and
\lcg\ on $Q_2$. \cite{Nesterov03} works with the adjoint form of $J$:
\begin{equation}
\label{eq:dual_nest03}
D(\alphab) = -g(\alphab) - f^{\star}( -\Ab^{\top} \alphab),
\end{equation}
which is \lcg\ according to Theorem \ref{theorem:SC_LCG}.  Under some
mild constraint qualifications which we omit for the sake of brevity
(see \eg\ Theorem 3.3.5 of \cite{BorLew00}) we have
\begin{align}
  \label{eq:strong-convex}
  J(\wb) \ge D(\alphab) \text{ and } \inf_{\wb \in Q_1} J(\wb) =
  \sup_{\alphab \in Q_2} D(\alphab).
\end{align}
By using the algorithm in \cite{Nesterov83} to maximize $D(\alphab)$ one
can obtain an algorithm which converges to an $\ve$ accurate solution of
$J(\wb)$ in $O(1/\sqrt{\ve})$ iterations.

The regularized risk minimization with the binary hinge loss can be
identified with \eqref{eq:primal_nest03} by setting
\begin{align}
  \label{eq:primal_dual_svm}
  J(\wb) = \underbrace{\frac{\lambda}{2} \nbr{\wb}^2}_{f(\wb)}
  + \underbrace{\min_{b \in \RR} \frac{1}{n} \sum_{i=1}^n \sbr{1 - y_i
      (\inner{\xb_i}{\wb} + b) }_+}_{g^{\star}(\Ab \wb)}
\end{align}
The latter, $g^{\star}$, is the dual of $g(\alphab) = -\sum_{i}\alpha_i$ (see
Appendix \ref{sec:Derivgstar}).  Here $Q_{1} = \RR^{d}$. Let $\Ab := -\Yb
\Xb^{\top}$ where $\Yb := \diag (y_1, \ldots, y_n)$, $\Xb :=
[\xb_1, \ldots, \xb_n]$. Then the adjoint can be written as :
\begin{align}
  \label{eq:adjoint}
  D(\alphab) & := -g(\alphab) - f^{\star}(-\Ab^{\top} \alphab) =  \sum_i
  \alpha_i - \frac{1}{2 \lambda} \alphab^{\top} \Yb \Xb^{\top} \Xb \Yb
  \alphab \qquad \text{ with } \\
  \label{eq:q2}
  Q_2 & = \cbr{\alphab \in [0, n^{-1}]^n : \sum_i y_i \alpha_i = 0}.
\end{align}
In fact, this is the well known SVM dual objective function with the
bias incorporated.

Now we present the algorithm of \cite{Nesterov03} in Algorithm \ref{algo:nesterov03}.  Since it optimizes the primal $J(\wb)$ and the adjoint $D(\alphab)$ simultaneously, we call it \nest\ (PRimal-Adjoint GAp Minimization).  It requires a $\sigma_2$-strongly convex prox-function on $Q_2$: $d_2(\alphab) = \frac{\sigma_2}{2}\|\alphab\|^{2}$,
and sets $D_2 = \max_{\alphab \in Q_2} d_2(\alphab)$.  Let the Lipschitz constant of $\nabla D(\alphab)$
be $L$.
Algorithm \ref{algo:nesterov03} is based on two mappings $\alphab_\mu(\wb): Q_1 \mapsto Q_2$
and $\wb(\alphab): Q_2 \mapsto Q_1$, together with an auxiliary mapping $v: Q_2 \mapsto Q_2$.  They are
defined by
\begin{align}
  \label{eq:alpha_mu}
  \alphab_{\mu} (\wb) &:= \argmin_{\alphab \in Q_2} \mu d_2(\alphab) -
  \inner{\Ab \wb}{\alphab} + g(\alphab) = \argmin_{\alphab \in Q_2} \frac{\mu}{2} \nbr{\alphab}^2
  + \wb^{\top} \Xb \Yb \alphab - \sum_i \alpha_i, \\
  \label{eq:w_alpha}
  \wb(\alphab) &:= \argmin_{\wb \in Q_1} \inner{\Ab \wb}{\alphab} + f(\wb) = \argmin_{\wb \in \RR^d}
  -\wb^{\top} \Xb \Yb \alphab + \frac{\lambda}{2} \nbr{\wb}^2 = \frac{1}{\lambda}
  \Xb \Yb \alphab, \\
  \label{eq:v_alpha}
  v(\alphab) &:= \argmin_{\alphab' \in Q_2} \frac{L}{2} \nbr{\alphab' - \alphab}^2 -
  \inner{\grad D(\alphab)}{\alphab' - \alphab}.
\end{align}
Equations \eqref{eq:alpha_mu} and \eqref{eq:v_alpha} are examples of a box constrained QP with a single
equality constraint. In the appendix, we provide a linear time algorithm to find the minimizer of
such a QP. The overall complexity of each iteration is thus $O(nd)$ due to the gradient calculation
in \eqref{eq:v_alpha} and the matrix multiplication in \eqref{eq:w_alpha}.
\begin{algorithm}[t]
\begin{algorithmic}[1]
    \caption{\label{algo:nesterov03} \nest: an $O(1/k^2)$ rate primal-adjoint solver.}
            \REQUIRE $L$ as a conservative estimate of (\ie, no less than) the Lipschitz constant of $\grad D(\alphab)$.
            \ENSURE Two sequences $\wb_k$ and $\alphab_k$ which reduce the duality gap at $O(1/k^2)$
            rate.
            \STATE Initialize: Randomly pick $\alphab_{-1}$ in $Q_2$.  Let $\mu_0 = 2 L$,  $\alphab_0
            \leftarrow v(\alphab_{-1})$,  $\wb_0 \leftarrow \wb(\alphab_{-1})$.
            \FOR{$k = 0, 1, 2, \ldots$}
                \STATE Let $\tau_k = \frac{2}{k+3}$,  $\betab_k \leftarrow (1 - \tau_k) \alphab_k +
                \tau_k \alphab_{\mu_k} (\wb_k)$.
                \STATE Set $\wb_{k+1} \leftarrow (1 - \tau_k) \wb_k + \tau_k \wb(\betab_k)$,
                  $\alphab_{k+1} \leftarrow v(\betab_k)$,   $\mu_{k+1} \leftarrow (1 - \tau_k) \mu_k$.
           \ENDFOR
\end{algorithmic}
\end{algorithm}

\subsection{Convergence Rates}
\label{subsec:convergence_rate}

According to \cite{Nesterov03}, on running Algorithm \nest\ for $k$ iterations,
the $\alphab_k$ and $\wb_k$ satisfy:
\begin{align}
\label{eq:nes03_rate}
J(\wb_k) - D(\alphab_k) \le \frac{4 L D_2}{(k+1)(k+2) \sigma_2}.
\end{align}
For SVMs,  $L =  \frac{1}{\lambda} \nbr{\Ab}_{1,2}^2$ where $\nbr{\Ab}_{1,2} = \max \cbr{\inner{\Ab \wb}{\alphab} :
\nbr{\alphab} = 1, \nbr{\wb} = 1}$, $\sigma_2 = 1$, $D_2 = \frac{1}{2n}$. Assuming $\nbr{\xb_i} \le R$,
\begin{align*}
  \abr{\inner{\Ab \wb}{\alphab}}^2  &\le \nbr{\alphab}^2 \nbr{\Yb \Xb^{\top} \wb}^2
  = \nbr{\Xb^{\top} \wb}^2
  = \sum_i (\xb_i^{\top} \wb)^2
  \le \sum_i \nbr{\wb}^2 \nbr{\xb_i}^2 \le n R^2.
\end{align*}
Thus by \eqref{eq:nes03_rate}, we conclude
\vspace{-0.5em}
\begin{align*}
  J(\wb_k) - D(\alphab_k) \le \frac{4 L D_2}{(k+1)(k+2) \sigma_2} \le
  \frac{2 R^2}{\lambda(k+1)(k+2)} < \ve, \qquad \text{ which gives } k > \left(\frac{R}{\sqrt{\lambda \ve}}\right).
\end{align*}
\vspace{-0.5em}

It should be noted that our algorithm has a better dependence on
$\lambda$ compared to other state-of-the-art SVM solvers like Pegasos
\cite{ShaSinSre07}, SVM-Perf \cite{Joachims06}, and BMRM
\citep{TeoVisSmoLe09} which have a factor of $\frac{1}{\lambda}$ in
their convergence rates.  Our rate of convergence is also data dependent, showing how the correlation of the dataset
$\Yb \Xb = (y_1 \xb_1, \ldots, y_n \xb_n)$ affects the rate via the Lipschitz constant $L$, which is equal to the
square of the maximum singular value of $\Yb \Xb$ (or the maximum eigenvalue of $\Yb \Xb \Xb^{\top} \Yb$).  On one
extreme, if $\xb_i$ is the $i$-th dimensional unit vector then $L=1$, while $L = n$ if all $y_i \xb_i$ are identical.

\subsection{Structured Data}
It is noteworthy that applying \nest\ to structured data is straightforward.  Due to space constraints, we present the details in Appendix \ref{app:struct_output}.  A key interesting problem there is how to project onto a probability simplex such that the image decomposes according to a graphical model.

\section{Experimental Results}
\label{sec:ExperimentalResults}

In this section, we compare the empirical performance of our \nest\ with state-of-the-art binary linear SVM solvers, including \liblinear\footnote{http://www.csie.ntu.edu.tw/$\sim$cjlin/liblinear} \cite{HsiChaLinKeeetal08}, \pegasos\footnote{http://ttic.uchicago.edu/$\sim$shai/code/pegasos.tgz} \cite{ShaSinSre07}, and \BMRM\footnote{http://users.rsise.anu.edu.au/$\sim$chteo/BMRM.html} \cite{TeoVisSmoLe09}.

\paragraph{Datasets}

Table \ref{tab:dataset} lists the statistics of the dataset.  \verb#adult9#, \verb#astro-ph#,
\verb#news20#, \verb#real-sim#, \verb#reuters-c11#, \verb#reuters-ccat# are from the same
source as in \cite{HsiChaLinKeeetal08}.  \verb#aut-avn# classifies documents on auto and aviation
(http://www.cs.umass.edu/$\sim$mccallum/data/sraa.tar.gz).   \verb#covertype# is from UCI repository.
We did not normalize the feature vectors and no bias was used.

\begin{table}
\footnotesize
\setlength{\tabcolsep}{2pt}
\begin{tabularx}
{\linewidth}%
    {>{\setlength{\hsize}{0.105 \hsize}}X|%
    >{\setlength{\hsize}{0.085\hsize}}X|%
    >{\setlength{\hsize}{0.085\hsize}}X|%
    >{\setlength{\hsize}{0.065\hsize}}X||%
    >{\setlength{\hsize}{0.12\hsize}}X|%
    >{\setlength{\hsize}{0.095\hsize}}X|%
    >{\setlength{\hsize}{0.13\hsize}}X|%
    >{\setlength{\hsize}{0.065\hsize}}X||%
    >{\setlength{\hsize}{0.145\hsize}}X|%
    >{\setlength{\hsize}{0.09\hsize}}X|%
    >{\setlength{\hsize}{0.12\hsize}}X|%
    >{\setlength{\hsize}{0.045\hsize}}X}%
    dataset & $n$ & $d$ & $s$(\%) &  dataset & $n$ & $d$ & $s$(\%) &  dataset & $n$ & $d$ & $s$(\%) \\
    \hline
    adult9 & 32,561 & 123 & 11.28 & covertype & 522,911 & 6,274,932 & 22.22 & reuters-c11 & 23,149 & 1,757,801 & 0.16 \\
    astro-ph & 62,369 & 99,757 & 0.077 & news20 & 15,960 & 7,264,867 & 0.033 & reuters-ccat & 23,149 & 1,757,801 & 0.16 \\
    aut-avn & 56,862 & 20,707 & 0.25 & real-sim & 57,763 & 2,969,737 & 0.25 & web8 & 45,546 & 579,586 & 4.24
\end{tabularx}
\normalsize
\vspace{-0.2em}
\caption{Dataset statistics. $n$: \#examples, $d$: \#features, $s$: feature density.}
\label{tab:dataset}
\vspace{-1em}
\end{table}

\paragraph{Algorithms}

Closest to \nest\ in spirit is the line search BMRM (\lsbmrm) which minimizes
the current piecewise lower bound of regularized $R_{\emp}$ via a one dimensional
line search between the current $\wb_t$ and the last subgradient.  This simple
update was enough for \cite{SmoVisLe07} to prove the $1/\ve$ rate of convergence.
Interpreted in the adjoint form, this update corresponds to coordinate descent
with the coordinate being chosen by the Gauss-Southwell rule \cite{Bollen84}.  In contrast,
\nest\ performs a parallel update of all coordinates in each iteration and achieves
faster convergence rate.  So in this section, our main focus is to show that
\nest\ converges faster than \lsbmrm.

We also present the results of \liblinear\ and \pegasos. \liblinear\ is a dual
coordinate descent optimizer for linear SVMs.  \pegasos\ is a primal estimated
sub-gradient solver for SVM with L1 hinge loss.  We tested two extreme variants of
\pegasos: \pegan\ where all the training examples are used in each iteration, and \pegaone\
where only one randomly chosen example is used.  Finally, we also compare with the
\qpbmrm\ proposed in \cite{TeoVisSmoLe09} which solves the full QP in
\eqref{eq:dual-problem} in each iteration.

It should be noted that SVM$^{struct}$ \cite{TsoJoaHofAlt05} is also a general purpose
regularized risk minimizer, and when specialized to binary SVMs, the SVMPerf
\cite{Joachims05, Joachims06} gave the first linear time algorithm for training linear SVMs.
We did not compare with SVMPerf \cite{Joachims06} because its cutting plane nature is
very similar to BMRM when specialized to binary linear SVMs.

For \nest, since the Lipschitz constant $L$ of the gradient of the SVM dual is
unknown in practice, we resort to \cite{Nesterov07} which automatically estimates $L$ while the
rates presented in Section \ref{subsec:convergence_rate} are unchanged.  We further implemented
\nestb, the \nest\ algorithm which uses SVM bias. In this case the inner optimization is a
QP with box constraints and a single linear equality constraint.

For all datasets, we obtained the best $\lambda \in \cbr{2^{-20}, \ldots, 2^{0}}$  using their
corresponding validation sets, and the chosen $\lambda$'s are given in Appendix \ref{app:exp}.

\paragraph{Results}

\begin{figure}[t]
\begin{centering}
\subfloat[astro-ph]{
    \includegraphics[width=4.35cm]{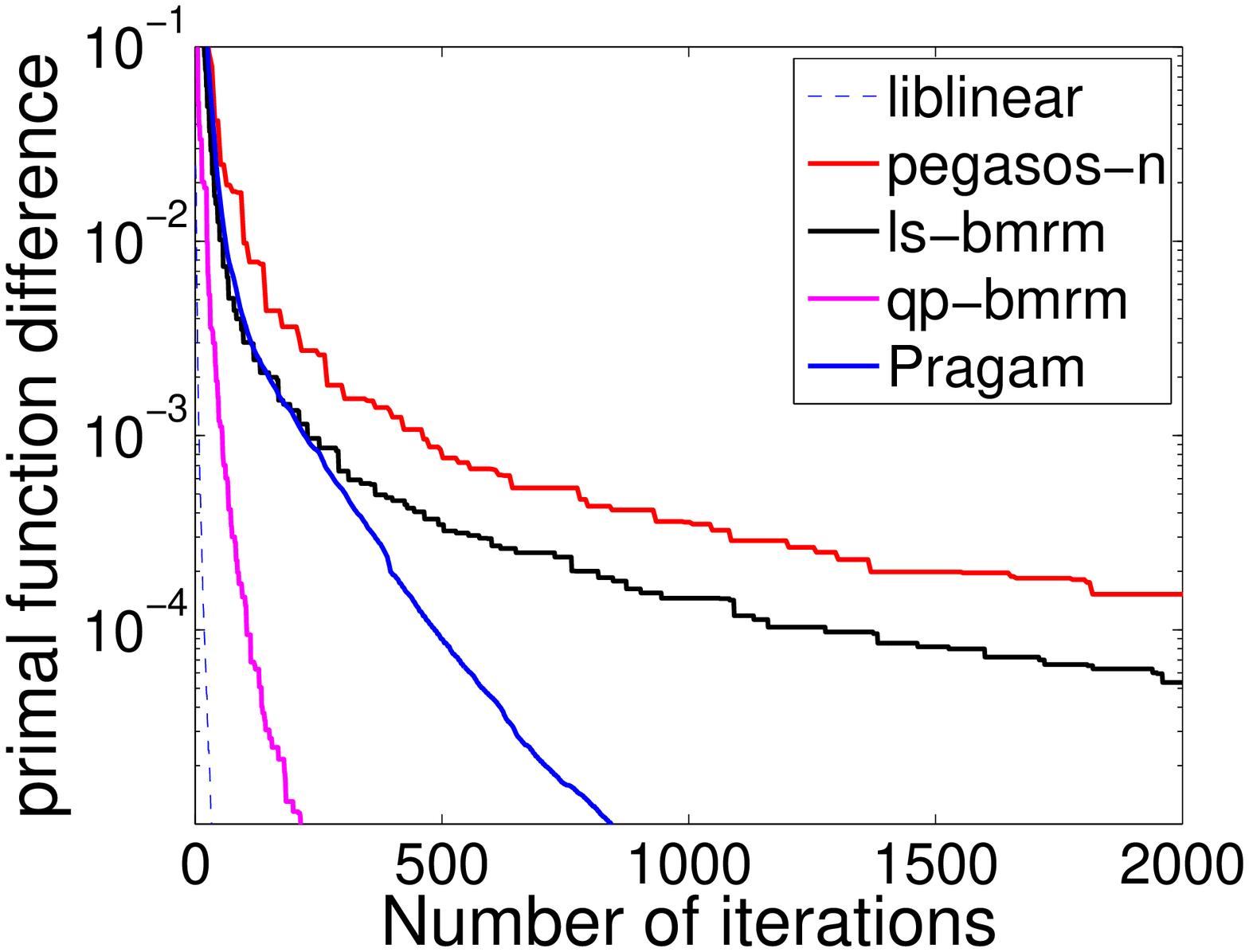}} ~~
\subfloat[news20]{
    \includegraphics[width=4.35cm]{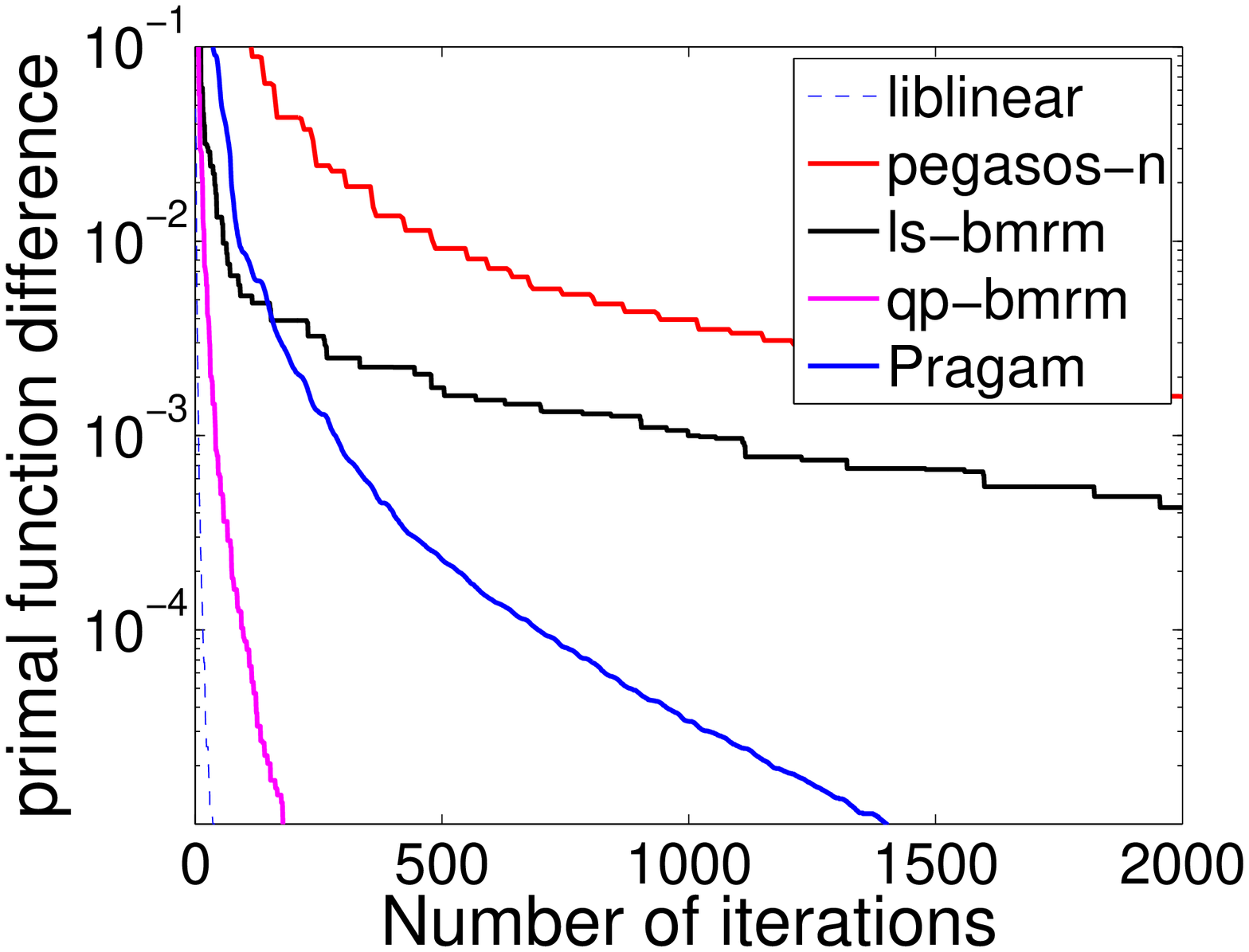}} ~~
\subfloat[real-sim]{
    \includegraphics[width=4.35cm]{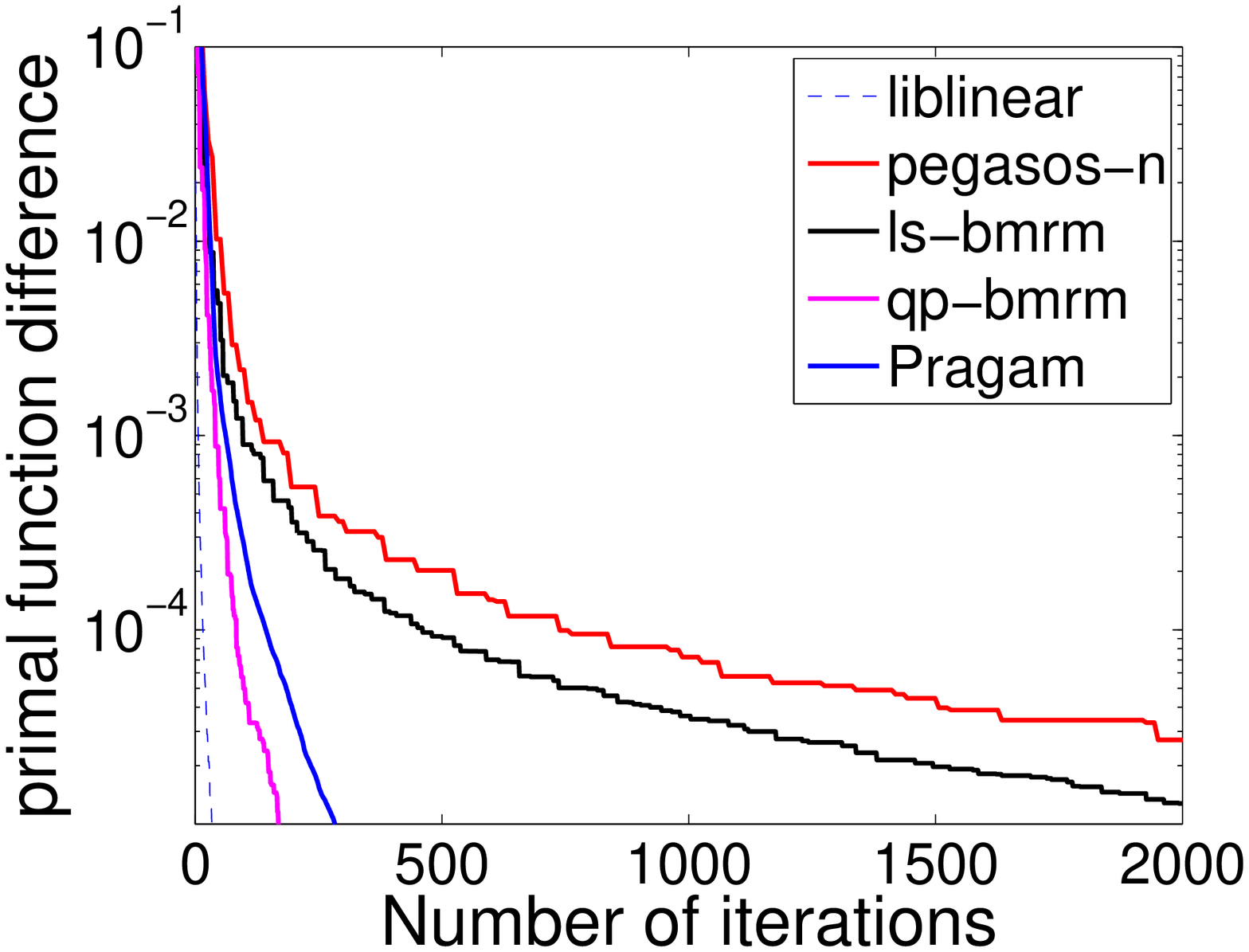}} \\
\vspace{-0.5em}
\caption{Primal function error versus number of iterations.}
\label{fig:fmin_diff_vs_iter_text}
\vspace{-1em}
\end{centering}
\end{figure}

\begin{figure}[t]
\begin{centering}
\subfloat[astro-ph]{
    \includegraphics[width=4.35cm]{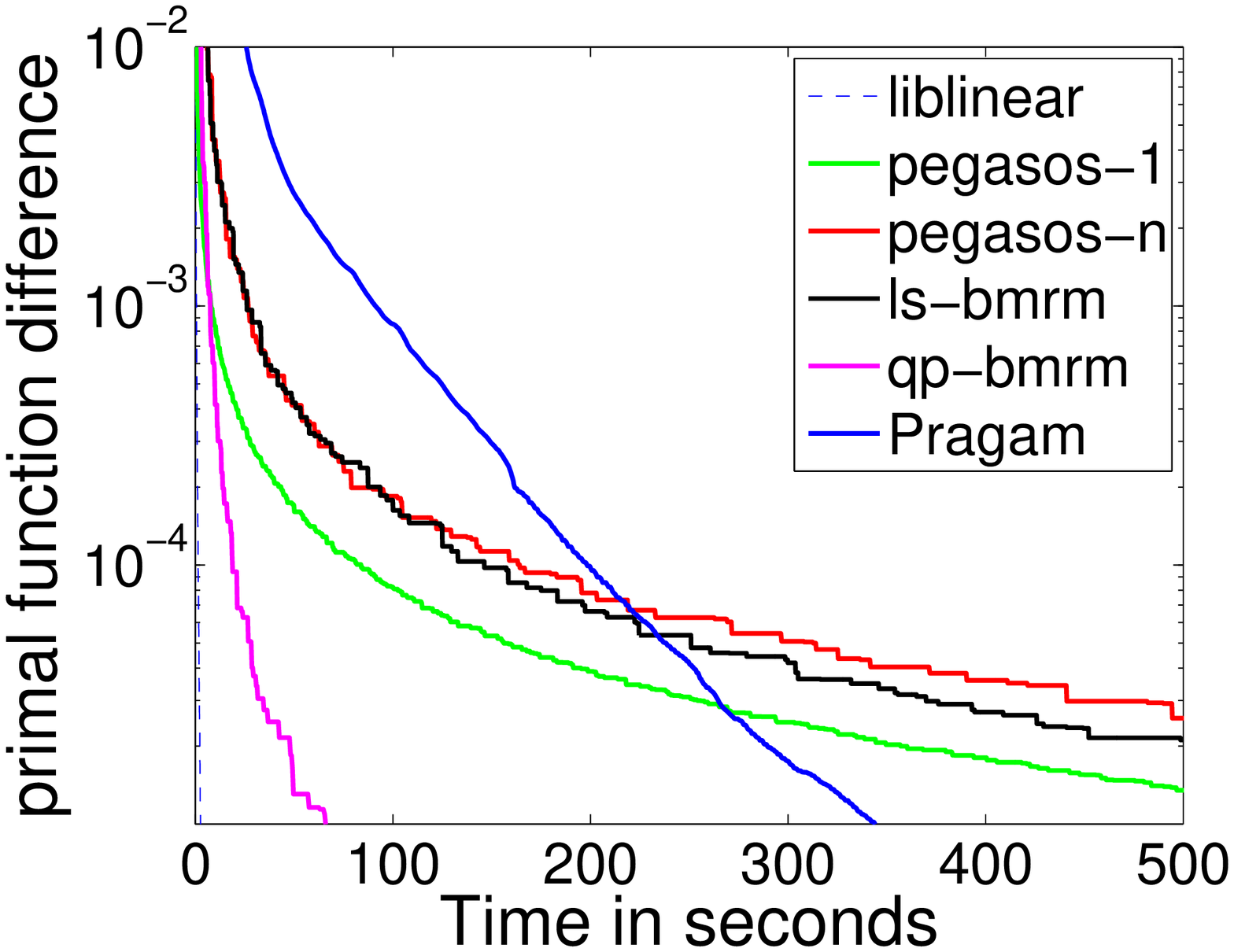}} ~~
\subfloat[news20]{
    \includegraphics[width=4.35cm]{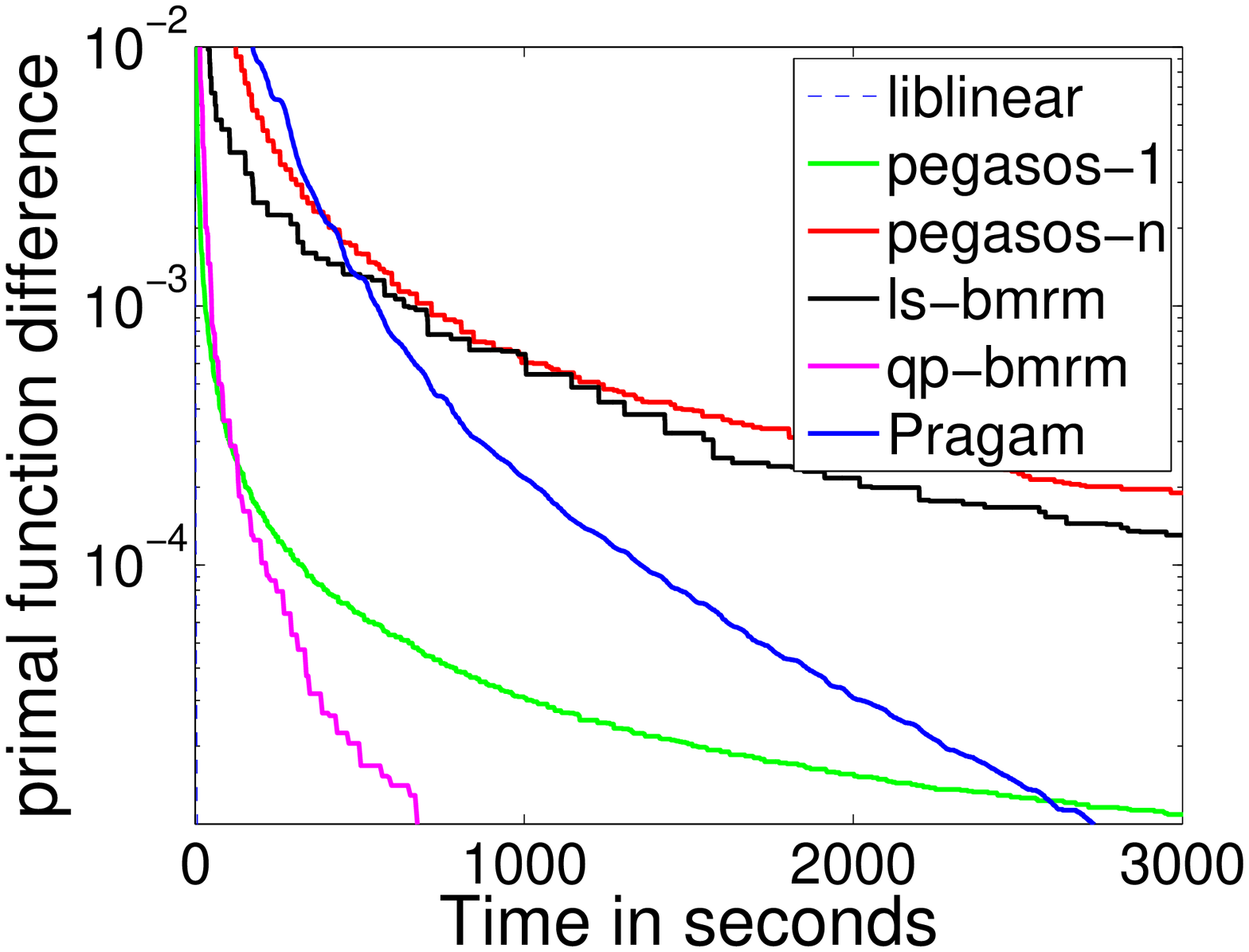}} ~~
\subfloat[real-sim]{
    \includegraphics[width=4.35cm]{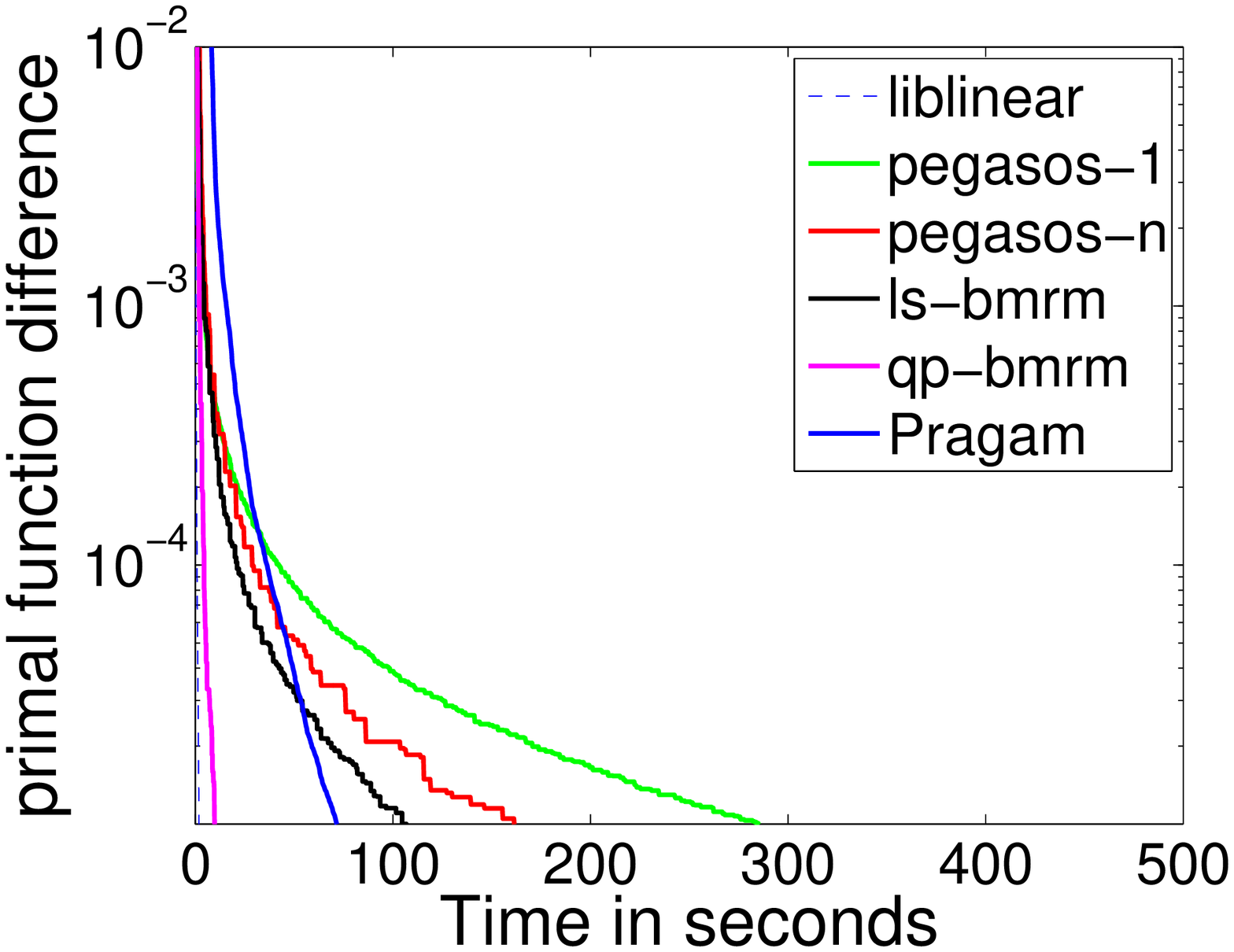}} \\
\vspace{-0.5em}
\caption{Primal function error versus time.}
\label{fig:fmin_diff_vs_time_text}
\end{centering}
\vspace{-1em}
\end{figure}

Due to lack of space, the figures of the detailed results are available in the Appendix \ref{app:exp}, and the main text only presents the results on three datasets.

We first compared how fast $\err_t := \min_{t' < t} J(\wb_{t'}) - J(\wb^*)$ decreases with
respect to the iteration index $t$.  We used $\err_t $ instead of $J(\wb_t) - J(\wb^*)$ because
$J(\wb_t)$ in \pegasos\ and \lsbmrm\ fluctuates drastically in some datasets.  The results in
Figure \ref{fig:fmin_diff_vs_iter_text} show \nest\ converges faster than \lsbmrm\ and \pegan\ which
both have $1/\ve$ rates.  \liblinear\ converges much faster than the rest algorithms, and
\qpbmrm\ is also fast.  \pegaone\ is not included because it converges very slowly in terms of
iterations.

Next, we compared in Figure \ref{fig:fmin_diff_vs_time_text} how fast $\err_t$ decreases in wall clock
time.  \nest\ is not fast in decreasing $\err_t$ to low accuracies like $10^{-3}$.  But it becomes
quite competitive when higher accuracy is desired, whereas \lsbmrm\ and \pegaone\ often take a
long time in this case.  Again, \liblinear\ is much faster than the other algorithms.

Another evaluation is on how fast a solver finds a model with reasonable accuracy.  At iteration
$t$, we examined the test accuracy of $\wb_{t'}$ where $t' := \argmin_{t' \le t} J(\wb_{t'})$, and
the result is presented in Figures \ref{fig:te_acc_vs_iter_text} and \ref{fig:te_acc_vs_time_text} with
respect to number of iterations and time respectively.  It can be seen that although \nest\
manages to minimize the primal function fast, its generalization power is not improved efficiently.
This is probably because this generalization performance hinges on the sparsity of the solution
(or number of support vectors, \cite{GraHerSha00}), and compared with all the other algorithms
\nest\ does not achieve any sparsity in the process of optimization. Asymptotically, all the
solvers achieve very similar testing accuracy.

Since the objective function of \nestb\ has a different feasible region than other optimizers which
do not use bias, we only compared its test accuracy.  In Figures \ref{fig:te_acc_vs_iter_text} and
\ref{fig:te_acc_vs_time_text}, the test accuracy of the optimal solution found by \nestb\ is always higher
than or similar to that of the other solvers.  In most cases, \nestb\ achieves the same test accuracy
faster than \nest\ both in number of iterations and time.

\begin{figure}[t]
\begin{centering}
\subfloat[astro-ph]{
    \includegraphics[width=4.35cm]{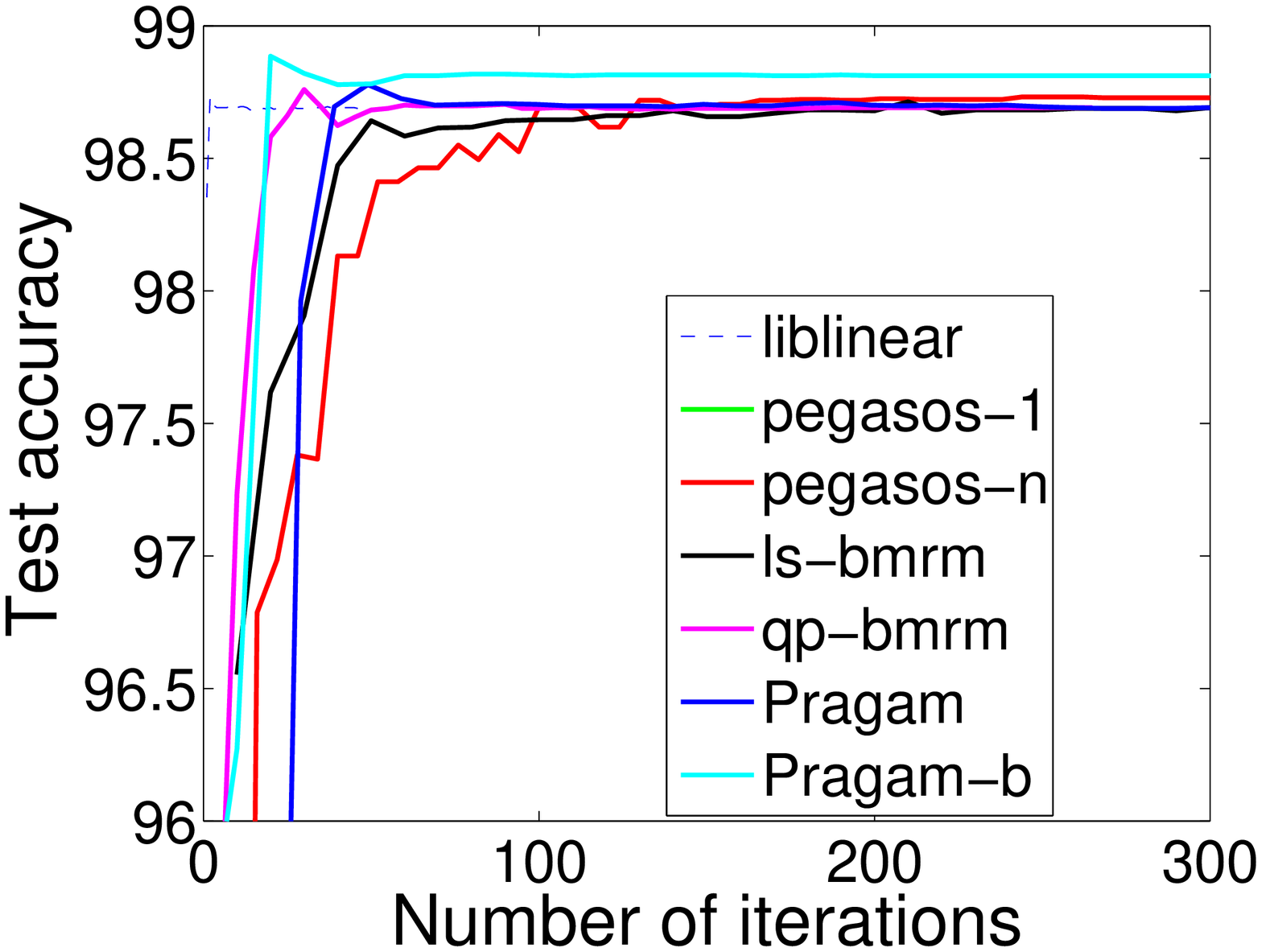}} ~~
\subfloat[news20]{
    \includegraphics[width=4.35cm]{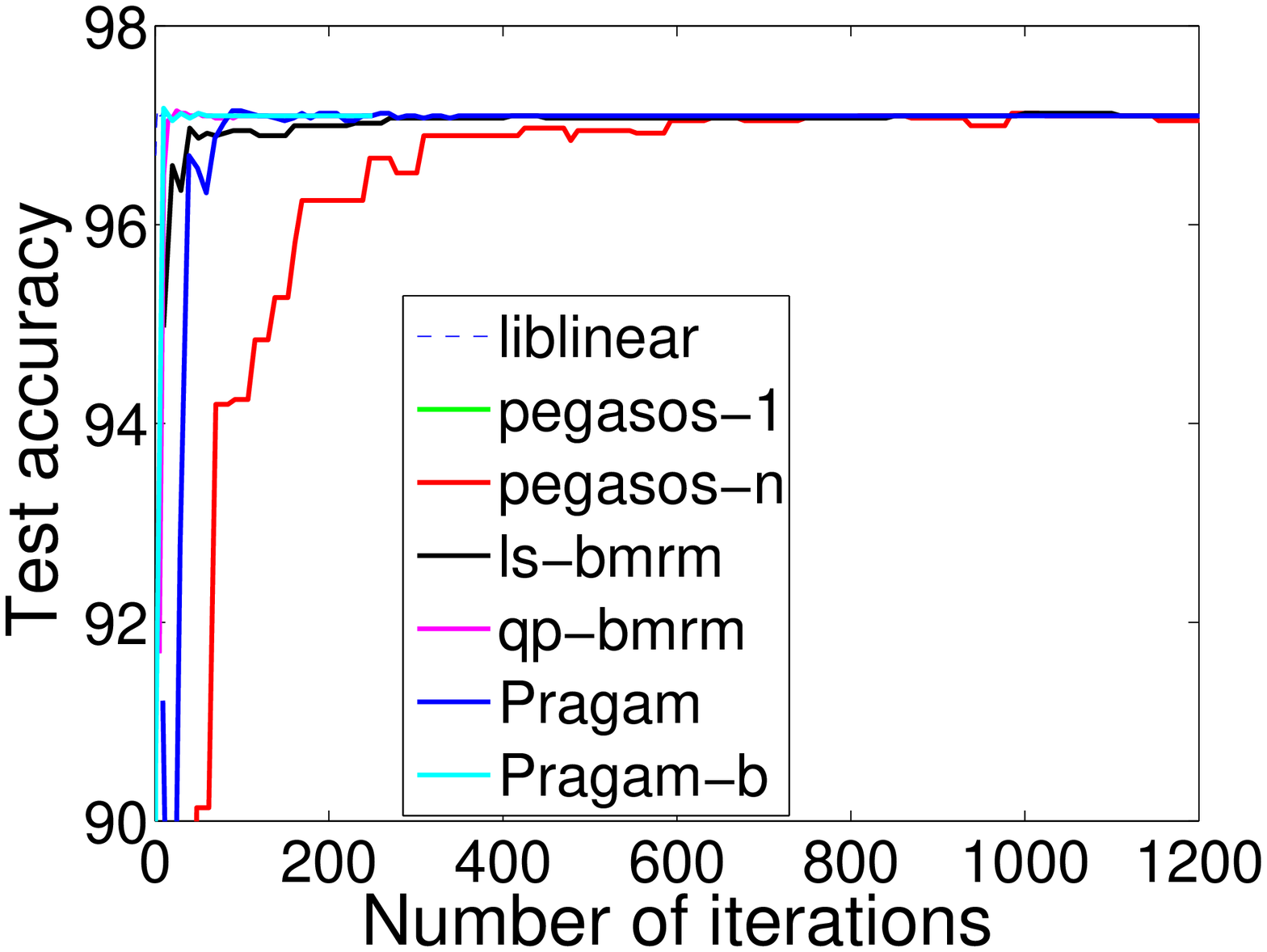}} ~~
\subfloat[real-sim]{
    \includegraphics[width=4.35cm]{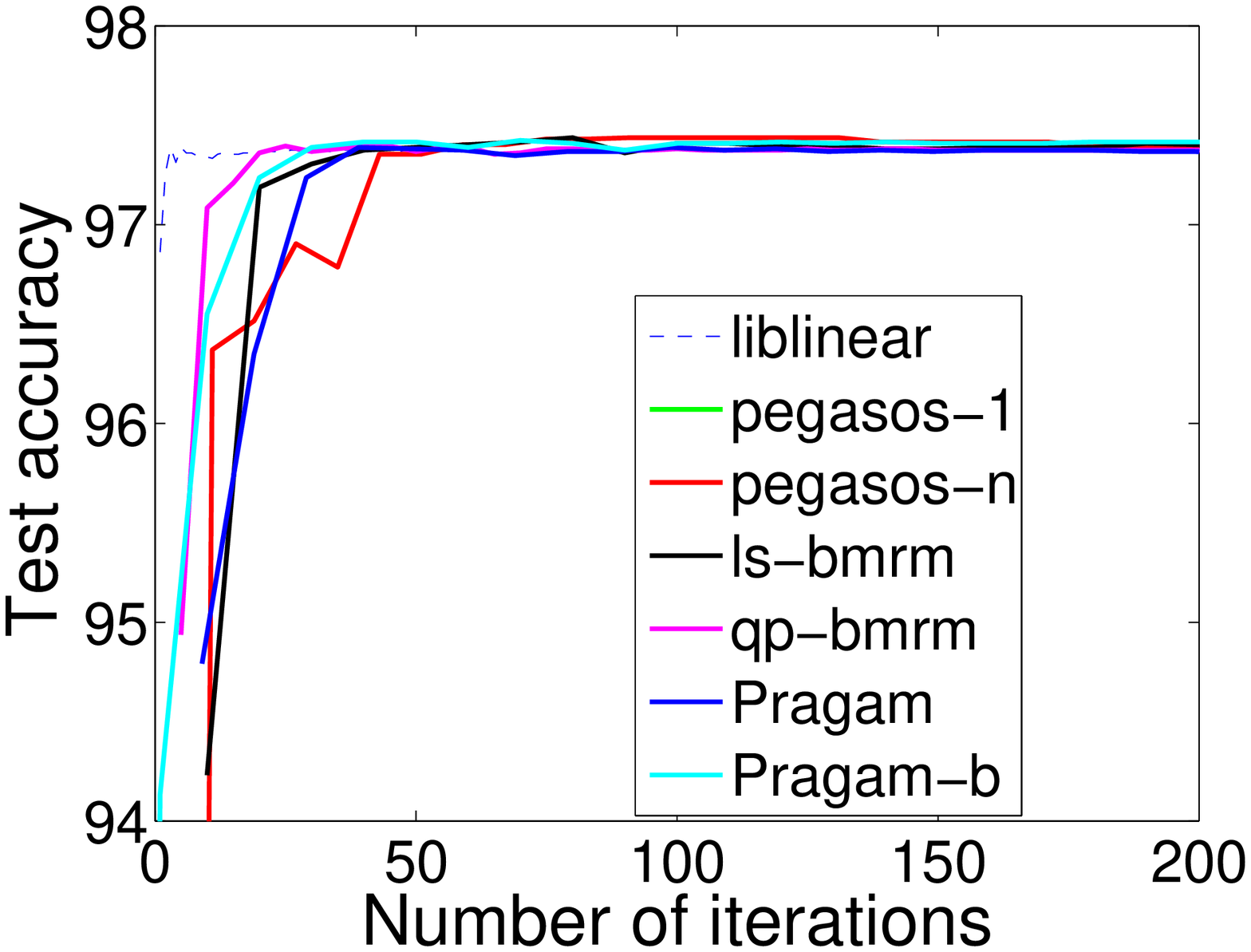}} \\
\caption{Test accuracy versus number of iterations.}
\label{fig:te_acc_vs_iter_text}
\end{centering}
\end{figure}
\begin{figure}[t]
\begin{centering}
\subfloat[astro-ph]{
    \includegraphics[width=4.35cm]{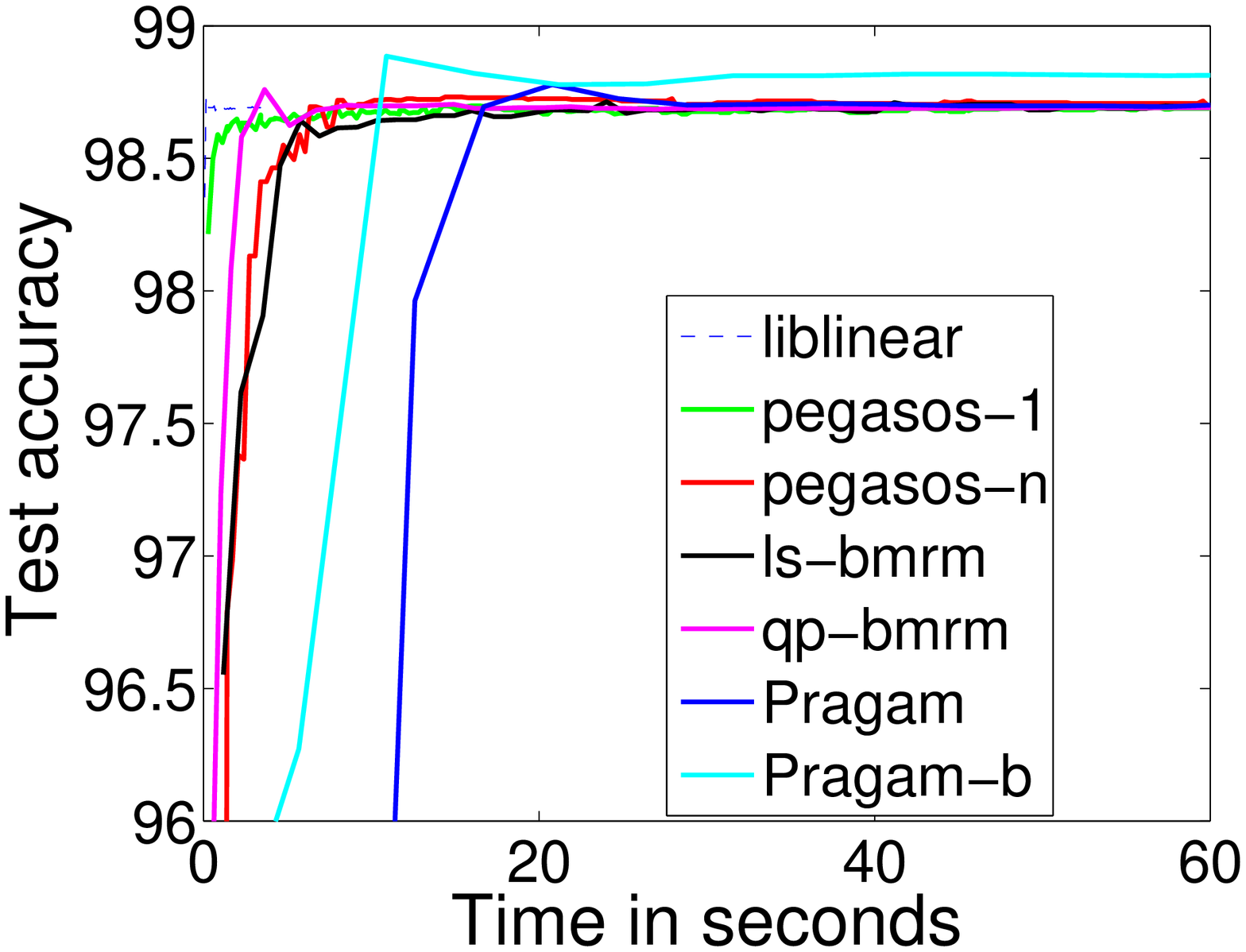}} ~~
\subfloat[news20]{
    \includegraphics[width=4.35cm]{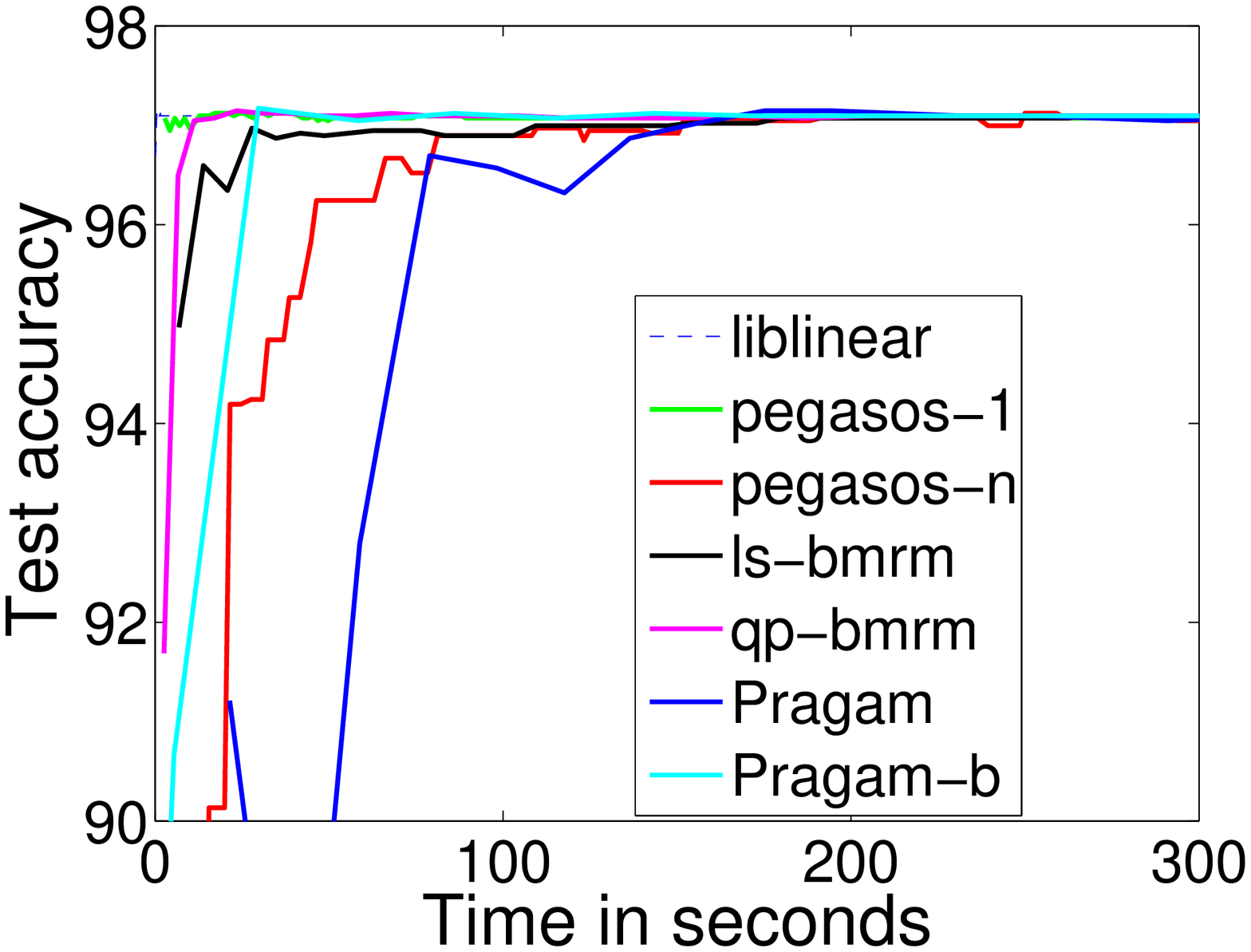}} ~~
\subfloat[real-sim]{
    \includegraphics[width=4.35cm]{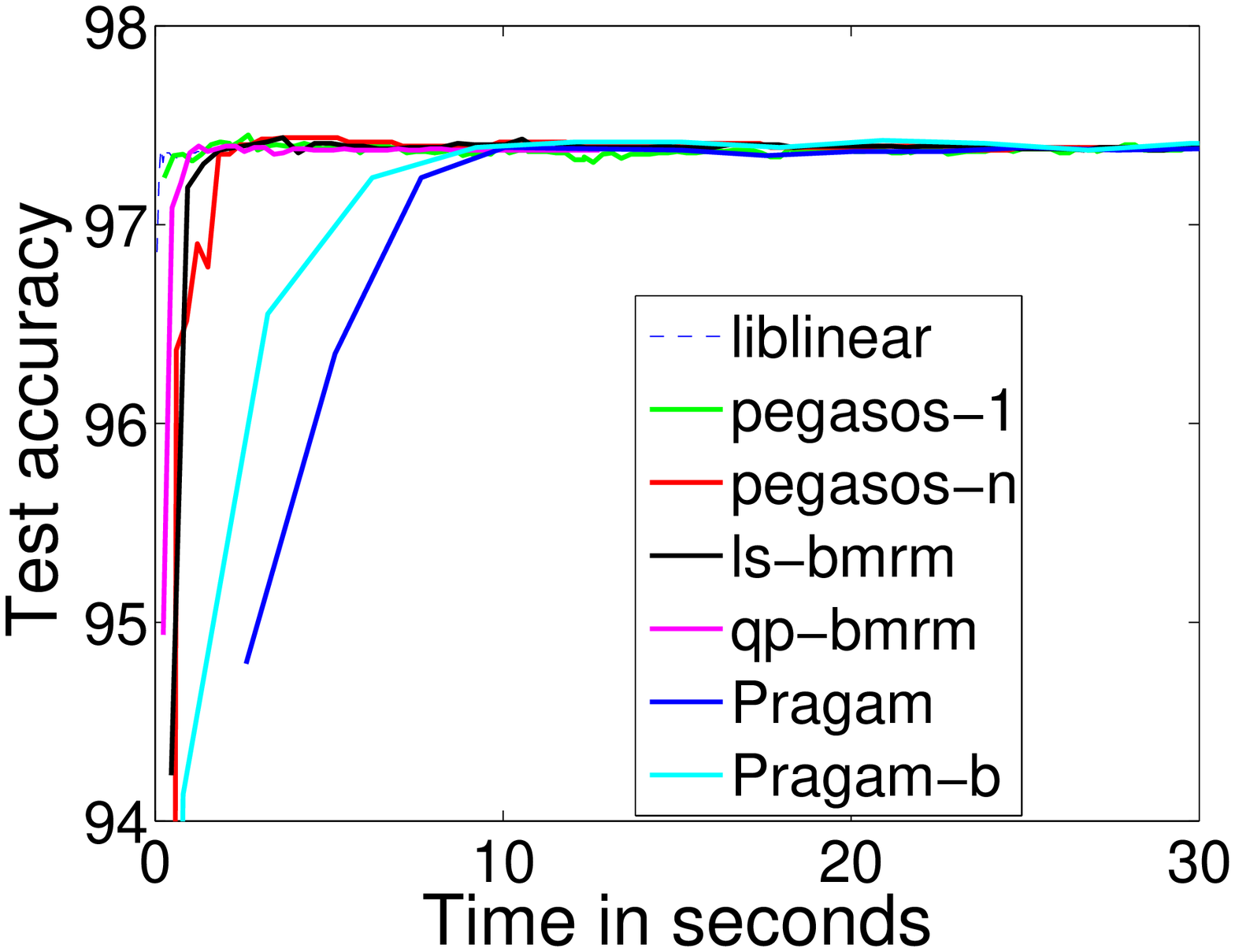}} \\
\caption{Test accuracy versus time.}
\label{fig:te_acc_vs_time_text}
\end{centering}
\end{figure}

\section{Discussion and Conclusions}
\label{sec:ResultsDiscussion}

In this paper we described a new lower bound for the number of
iterations required by BMRM and similar algorithms which are widely
used solvers for the regularized risk minimization problem. This shows that the
iteration bounds shown for these solvers is optimum. Our lower bounds
are somewhat surprising because the empirical performance of these
solvers indicates that they converge linearly to an $\ve$ accurate
solution on a large number of datasets. Perhaps a more refined analysis
is needed to explain this behavior.

The SVM problem has received significant research attention
recently. For instance, \cite{ShaSinSre07} proposed a stochastic
subgradient algorithm Pegasos. The convergence of Pegasos is analyzed in
a stochastic setting and it was shown that it converges in $O(1/\ve)$
iterations. We believe that our lower bounds can be extended to any
arbitrary subgradient based solvers in the primal including
Pegasos. This is part of ongoing research.

Our technique of solving the dual optimization problem is not new. A
number of solvers including SVM-Light \cite{Joachims99} and SMO
\cite{Platt99} work on the dual problem. Even though linear convergence
is established for these solvers, their rates have $n^{\geq 3}$
dependence which renders the analysis unusable for practical purposes.
Other possible approaches include the interior-point method of
\cite{FerMun02} which costs $O(nd^2 \log (\log (1/\ve)))$ time and
$O(d^2)$ space where $d$ refers to the dimension of the features. LibLinear
\cite{HsiChaLinKeeetal08} performs coordinate descent in the dual, and has
$O(nd \log (1/\ve))$ complexity but only after more than $O(n^2)$ steps.
Mirror descent algorithms \cite{BecTeb03} cost $O(nd)$ per iteration,
but their convergence rate is $1/\ve^2$. These rates are prohibitively
expensive when $n$ is very large.

The $O(1/\sqrt{\ve})$ rates for the new SVM algorithm we described in
this paper has a favorable dependence on $n$ as well as
$\lambda$. Although our emphasis has been largely theoretical, the
empirical experiments indicate that our solver is competitive with the
state of the art. Finding an efficient solver with fast rates of
convergence and good empirical performance remains a holy grail of
optimization for machine learning.


\bibliographystyle{unsrt}
\bibliography{../../../bibfile}

\newpage
\appendix
\section*{Appendix}
\section{Minimax Theorem on Convex spaces}
\label{sec:MinimaxTheoremConvex}

The reversal of $\min$ and $\max$ operators in \eqref{eq:minimax} follows from
the following theorem. (Theorem 3 in \cite{Park98})
\begin{theorem}
  Let $X$ be a convex space, $Y$ a Hausdorff compact convex space and $f:X
  \times Y \to Z$ a function. Suppose that
  \begin{itemize}
  \item there is a subset $U \subset Z$ such that $ a, b \in f(X \times Y)$ with
    $a<b$ implies $U \cap (a,b) \neq \emptyset$;
  \item $f(x,\cdot)$ is lower semicontinuous ($l.s.c.$) on $Y$ and $\cbr{ y \in Y:
    f(x,y)<s}$ is convex for each $x \in X$ and $s \in U$ and
   \item  $f(\cdot, y)$ is upper semicontinuous ($u.s.c.$) on $X$ and $\cbr{ x \in X:
     f(x,y) <s}$ is convex for each $y \in Y$ and $s \in U$
  \end{itemize}
  Then
  \begin{align*}
    \max_{x \in X}\min_{y \in Y}f(x,y) = \min_{y \in Y}\max_{x \in X}f(x,y)
  \end{align*}
\end{theorem}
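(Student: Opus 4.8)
The plan is to split the claimed equality into its two opposite inequalities and to spend essentially all the effort on one of them. The inequality $\max_{x}\min_{y} f(x,y) \le \min_{y}\max_{x} f(x,y)$ is the trivial half and needs no hypotheses: for any fixed $x_0$ and $y_0$ we have $\min_{y} f(x_0,y) \le f(x_0,y_0) \le \max_{x} f(x,y_0)$, and taking the maximum over $x_0$ on the left and then the minimum over $y_0$ on the right preserves this. Here the inner minimum over $y$ is attained because $Y$ is compact and $f(x,\cdot)$ is l.s.c. So the entire content of the theorem is the reverse inequality $\min_{y}\max_{x} f \le \max_{x}\min_{y} f$.

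I would establish the reverse inequality by contradiction, assuming $\max_{x}\min_{y} f < \min_{y}\max_{x} f$. Both quantities are attained values of $f$, hence lie in the range $f(X\times Y)$, so by the first hypothesis there is a threshold $s \in U$ lying strictly between them. The point of drawing $s$ from $U$, rather than from an arbitrary level of the ordered set $Z$, is exactly that the convexity of the level sets is only assumed at levels $s \in U$. The two strict inequalities $\max_{x}\min_{y} f < s < \min_{y}\max_{x} f$ then translate into two complementary statements. From $s < \min_{y}\max_{x} f$ we get that for every $y$ some $x$ satisfies $f(x,y) > s$, so the sets $\cbr{y : f(x,y) > s}$, which are open in $Y$ since $f(x,\cdot)$ is l.s.c., form an open cover of $Y$; compactness extracts a finite subcover indexed by $x_1,\dots,x_m$. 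Equivalently, the closed convex sets $C_j := \cbr{y : f(x_j,y) \le s}$ (closed and convex by l.s.c. and the second hypothesis) have empty intersection. From $\max_{x}\min_{y} f < s$ we get, on the other hand, that every $x \in X$, and in particular every convex combination of $x_1,\dots,x_m$, admits some $y$ with $f(x,y) < s$.

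The crux, and the step I expect to be the main obstacle, is to convert the empty intersection of the finitely many convex sets $C_j$ into a contradiction by invoking convexity in the $x$ variable. The classical device is Sion's induction on the number $m$ of sets in the finite subcover, or equivalently a KKM (Knaster--Kuratowski--Mazurkiewicz) intersection argument on the convex space $X$: one uses the convexity of the level sets supplied by the second and third hypotheses to manufacture a single convex combination $\xbar = \sum_j \lambda_j x_j$ whose values $f(\xbar,\cdot)$ are forced past the threshold $s$ on all of $Y$, contradicting $\min_{y} f(\xbar,y) \le \max_{x}\min_{y} f < s$. Each inductive step rests on a one-dimensional connectedness (intermediate-value) argument: convexity of the sublevel sets in $y$ guarantees that two such sets meeting a segment must overlap, and the semicontinuity keeps the relevant sets open or closed as required. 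This finite, combinatorial heart of the proof, reconciling the $Y$-side covering with the $X$-side convexity, is where all three hypotheses are consumed simultaneously and is the only genuinely non-routine part; everything preceding it is bookkeeping. I would therefore concentrate the write-up on isolating and verifying precisely this KKM/connectedness step.
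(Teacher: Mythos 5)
First, a point of comparison: the paper contains no proof of this statement at all --- it is imported wholesale as Theorem 3 of the cited reference \cite{Park98}, and the appendix merely verifies its hypotheses for the bilinear function $f(\wb,\alphab)=\wb^{\top}\Ab_t\alphab$. So your attempt can only be measured against the Sion/KKM-type argument you yourself invoke, and its routine parts are indeed right: the one-line inequality $\max_x\min_y f\le\min_y\max_x f$, the selection of $s\in U$ strictly between the two values via the first hypothesis, the open cover of $Y$ by the sets $\cbr{y: f(x,y)>s}$ coming from lower semicontinuity, and the extraction of a finite subcover $x_1,\dots,x_m$ by compactness.

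The gap is that you stop exactly where the theorem starts, and the step you defer cannot be carried out under the hypotheses as printed. Producing a convex combination $\xbar$ of $x_1,\dots,x_m$ with $f(\xbar,y)>s$ for all $y\in Y$ is the entire content of Sion's induction; note that plain convexity is not enough for a one-shot argument, since a given $y$ may satisfy $f(x_j,y)>s$ for only one index $j$, which is precisely why the inductive connectedness argument is needed --- a proof that labels this step ``the only genuinely non-routine part'' and promises to verify it later is a plan, not a proof. Worse, that step requires convexity of the \emph{superlevel} sets $\cbr{x\in X: f(x,y)>s}$ (quasi-concavity in $x$), whereas the third bullet as printed assumes convexity of $\cbr{x\in X: f(x,y)<s}$; this is evidently a typo in the paper's transcription of Park's theorem, but under the literal hypothesis the statement is simply false: $f(x,y)=|x-y|$ on $X=Y=[0,1]$ with $U=\RR$ is continuous and has convex strict sublevel sets in each variable separately, yet $\max_x\min_y f=0$ while $\min_y\max_x f=\smallfrac{1}{2}$. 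Your sketch silently uses the corrected quasi-concave hypothesis without flagging that the printed one does not supply it, and any honest write-up must begin by repairing the statement. Two smaller loose ends of the same kind: convexity of $C_j=\cbr{y: f(x_j,y)\le s}$ does not follow directly, since the hypothesis only covers strict sublevel sets at levels in $U$ (one must intersect over levels of $U$ just above $s$, which is the second service the density hypothesis performs), and the outer $\max_x$ should really be a supremum, as $X$ is not assumed compact.
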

It is trivial to show that our setting satisfies the above three conditions for the
linear form $f(\wb,\alphab) = \wb^{\top}\Ab_t\alphab$.

In our case $Z = \RR$. Since $\alpha \in \Delta_t$ and the columns of $\Ab$ have
Euclidean norm 1, it can be easily shown that $w$ is bounded in a ball of radius
$1/\lambda$. Thus the range space is a finite subset of $\RR$ and we choose $U$
to be the entire range space so that it will satisfy the first condition as
$f(\wb,\alphab)$ is a continuous function.

Also $g(\alphab) = f(\wb,\alphab)$ and $h(\wb) = f(\wb, \alphab)$ are continuous
functions in $\alphab$ and $\wb$ respectively and are thus by definition lower (upper)
semicontinuous in $\alphab$ ($\wb$). The convexity of the sets in the $2^{nd}$ and
$3^{rd}$ condition follows from first principles using definition of convexity. Thus
we can use the minimax theorem to obtain \eqref{eq:minimax}.

\section{A linear time algorithm for a box constrained diagonal QP with a single linear equality constraint}
\label{sec:simple_qp}
It can be shown that the dual optimization problem $D(\alphab)$ from
\eqref{eq:primal_dual_svm} can be reduced into a box constrained QP with a single linear
equality constraint.

In this section, we focus on the following simple QP:
\begin{align*}
  \min \frac{1}{2} \sum_{i=1}^n &d_i^2 (\alpha_i - m_i)^2 \\
  s.t. \qquad l_i \le &\alpha_i \le u_i \quad \forall i \in [n];  \\
  \sum_{i=1}^n & \sigma_i \alpha_i = z.
\end{align*}
Without loss of generality, we assume $l_i < u_i$ and $d_i \neq 0$ for all $i$.
Also assume $\sigma_i \neq 0$ because otherwise $\alpha_i$ can be solved independently.
  To make the feasible region nonempty, we also assume
\[
\sum_i \sigma_i (\delta(\sigma_i > 0) l_i + \delta(\sigma_i < 0) u_i)
\le z \le
\sum_i \sigma_i (\delta(\sigma_i > 0) u_i + \delta(\sigma_i < 0) l_i).
\]
The algorithm we describe below stems from \cite{ParKov90} and finds the exact optimal
solution in $O(n)$ time, faster than the $O(n \log n)$ complexity in
\cite{DucShaSigCha08}.

With a simple change of variable $\beta_i = \sigma_i (\alpha_i - m_i)$, the problem is
simplified as
\begin{minipage}[t]{6cm}
\begin{align*}
  \min \qquad \frac{1}{2} \sum_{i=1}^n & \dbar^{2}_i \beta_i^2 \\
  s.t. \qquad l'_i \le & \beta_i \le u'_i \quad \forall i \in [n];  \\
  \sum_{i=1}^n & \beta_i = z',
\end{align*}
\end{minipage}
\begin{minipage}[t]{1.2cm}
\vspace{4em}
where
\end{minipage}
\begin{minipage}[t]{6cm}
\begin{align*}
  l'_i &= \left\{ {\begin{array}{ll}
   \sigma_i (l_i - m_i) & \text{if } \sigma_i > 0  \\
   \sigma_i (u_i - m_i) & \text{if } \sigma_i < 0  \\
\end{array}} \right., \\
  u'_i &= \left\{ {\begin{array}{ll}
   \sigma_i (u_i - m_i) & \text{if } \sigma_i > 0  \\
   \sigma_i (l_i - m_i) & \text{if } \sigma_i < 0  \\
\end{array}} \right., \\
\dbar^2_i &= \frac{d_i^2}{\sigma_i^2}, \quad z' = z - \sum_i \sigma_i m_i.
\end{align*}
\end{minipage}

We derive its dual via the standard Lagrangian.
\begin{align*}
  L = \frac{1}{2} \sum_i \dbar^{2}_i \beta_i^2 - \sum_i \rho_i^+ (\beta_i
  - l'_i) + \sum_i \rho_i^- (\beta_i - u'_i) - \lambda \left(\sum_i \beta_i
  - z' \right).
\end{align*}
Taking derivative:
\begin{align}
\label{eq:dual_connect_simple_qp}
  \frac{\partial L}{\partial \beta_i} = \dbar^{2}_i \beta_i - \rho_i^+
  + \rho_i^- - \lambda = 0
  \quad \Rightarrow \quad
  \beta_i = \dbar^{-2}_i (\rho_i^+ - \rho_i^- + \lambda).
\end{align}
Substituting into $L$, we get the dual optimization problem
\begin{align*}
  \min D(\lambda, \rho_i^+, \rho_i^-) &= \frac{1}{2} \sum_i \dbar^{-2}_i
  (\rho_i^+ - \rho_i^- + \lambda)^2 - \sum_i \rho_i^+ l'_i + \sum_i
  \rho_i^+ u'_i - \lambda z' \\
  s.t. \qquad &\rho_i^+ \ge 0, \quad \rho_i^- \ge 0  \quad \forall i \in [n].
\end{align*}
Taking derivative of $D$ with respect to $\lambda$, we get:
\begin{align}
\label{eq:lambda_constraint_simple_qp}
  \sum_i \dbar^{-2}_i (\rho_i^+ - \rho_i^- + \lambda) - z' = 0.
\end{align}
The KKT condition gives:
\begin{subequations}
\label{eq:kkt_simple_qp}
\begin{align}
\label{eq:kkt_1_simple_qp}
  \rho_i^+ (\beta_i - l'_i) &= 0, \\
\label{eq:kkt_2_simple_qp}
  \rho_i^- (\beta_i - u'_i) &= 0.
\end{align}
\end{subequations}
Now we enumerate four cases.
\paragraph{1. $\rho_i^+ > 0$, $\rho_i^- > 0$.}  This implies that $l'_i =
\beta_i = u'_i$, which is contradictory to our assumption.
\paragraph{2. $\rho_i^+ = 0$, $\rho_i^- = 0$.}  Then by
\eqref{eq:dual_connect_simple_qp}, $\beta_i = \dbar^{-2}_i \lambda \in
      [l'_i, u'_i]$, hence $\lambda \in [\dbar^{2}_i l'_i, \dbar^{2}_i u'_i]$.
\paragraph{3. $\rho_i^+ > 0$, $\rho_i^- = 0$.}  Now by \eqref{eq:kkt_simple_qp}
and \eqref{eq:dual_connect_simple_qp}, we have $l'_i = \beta_i = \dbar^{-2}_i
(\rho_i^+ + \lambda) > \dbar^{-2}_i \lambda$, hence $\lambda < \dbar^{2}_i l'_i$
and $\rho_i^+ = \dbar^{2}_i l'_i - \lambda$.
\paragraph{4. $\rho_i^+ = 0$, $\rho_i^- > 0$.}  Now by \eqref{eq:kkt_simple_qp}
and \eqref{eq:dual_connect_simple_qp}, we have $u'_i = \beta_i = \dbar^{-2}_i
(-\rho_i^- + \lambda) < \dbar^{-2}_i \lambda$, hence $\lambda > \dbar^{2}_i u'_i$
and $\rho_i^- = -\dbar^{2}_i u'_i + \lambda$.

In sum, we have $\rho_i^+ = [\dbar^{2}_i l'_i - \lambda]_+$ and $\rho_i^- =
[\lambda - \dbar^{2}_i u'_i]_+$.  Now \eqref{eq:lambda_constraint_simple_qp} turns
into
\begin{align}
\label{eq:lambda_find_root}
  f(\lambda) := \sum_i \underbrace{\dbar^{-2}_i ([\dbar^{2}_i l'_i - \lambda]_+ -
    [\lambda - \dbar^{2}_i u'_i]_+ + \lambda)}_{=: h_i(\lambda)} - z' = 0.
\end{align}
In other words, we only need to find the root of $f(\lambda)$ in
\eqref{eq:lambda_find_root}.  $h_i(\lambda)$ is plotted in Figure \ref{fig:hi_lambda}.
Note that $h_i(\lambda)$ is a monotonically increasing function of $\lambda$, so the
whole $f(\lambda)$ is monotonically increasing in $\lambda$.  Since $f(\infty) \ge 0$
by $z' \le \sum_i u'_i$ and $f(-\infty) \le 0$ by $z' \ge \sum_i l'_i$, the root must
exist.  Considering that $f$ has at most $2n$ kinks (nonsmooth points) and is linear
between two adjacent kinks, the simplest idea is to sort $\cbr{\dbar^{2}_i l'_i,
\dbar^{2}_i u'_i : i \in [n]}$ into $s^{(1)} \le \ldots \le s^{(2n)}$.  If $f(s^{(i)})$
and $f(s^{(i+1)})$ have different signs, then the root must lie between them and can be
easily found because $f$ is linear in $[s^{(i)}, s^{(i+1)}]$.  This algorithm takes at
least $O(n \log n)$ time because of sorting.

\begin{figure}
\centering
    \includegraphics[height=3cm]{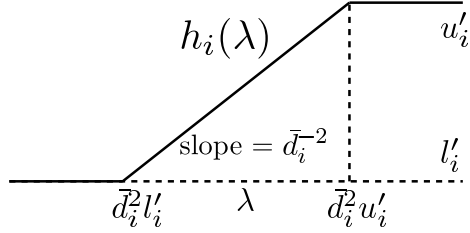}
\caption{$h_i(\lambda)$}
\label{fig:hi_lambda}
\end{figure}

However, this complexity can be reduced to $O(n)$ by making use of the fact that the
median of $n$ (unsorted) elements can be found in $O(n)$ time.  Notice that due to the
monotonicity of $f$, the median of a set $S$ gives exactly the median of function values,
\ie, $f(\MED(S)) = \MED(\cbr{f(x):x \in S})$.  Algorithm \ref{algo:linear_simple_qp}
sketches the idea of binary search.  The while loop terminates in $\log_2 (2n)$ iterations
because the set $S$ is halved in each iteration.  And in each iteration, the time
complexity is linear to $|S|$, the size of current $S$.  So the total complexity is $O(n)$.
Note the evaluation of $f(m)$ potentially involves summing up $n$ terms as in
\eqref{eq:lambda_find_root}.  However by some clever aggregation of slope and offset, this
can be reduced to $O(|S|)$.

\begin{algorithm}[t]
\begin{algorithmic}[1]
    \caption{\label{algo:linear_simple_qp} $O(n)$ algorithm to find the root of $f(\lambda)$.
      Ignoring boundary condition checks.}
    \STATE Set kink set $S \leftarrow \cbr{\dbar_i^2 l'_i : i \in [n]} \cup \cbr{\dbar_i^2
      u'_i: i \in [n]}$.
    \WHILE{$\abr{S} > 2$}
        \STATE Find median of $S$: $m \leftarrow \MED(S)$.
        \IF{$f(m) \ge 0$}
            \STATE $S \leftarrow \cbr{x \in S : x \le m}$.
        \ELSE
            \STATE $S \leftarrow \cbr{x \in S : x \ge m}$.
        \ENDIF
    \ENDWHILE
    \STATE Return root $\frac{l f(u) - u f(l)}{f(u) - f(l)}$ where $S = \cbr{l,u}$.
\end{algorithmic}
\end{algorithm}

\section{Derivation of $g^{\star}(\alphab)$ }
\label{sec:Derivgstar}

To see $g^{\star}(\alphab) = \min_{b \in \RR} \frac{1}{n} \sum_i
\sbr{1 + \alpha_i - y_i b}_+$ in \eqref{eq:primal_dual_svm}, it suffices to show that for all $\alphab
\in \RR^n$:
\begin{align}
  \label{eq:fenchel_dual_bias_SVM}
  \sup_{\rhob \in Q_2} \inner{\rhob}{\alphab} + \sum_i \rho_i = \min_{b \in \RR}
  \frac{1}{n} \sum_i \sbr{1 + \alpha_i - y_i b}_+.
\end{align}
Posing the latter optimization as:
\[
\min_{\xi_i, b} \frac{1}{n} \sum_i \xi_i \qquad s.t. \quad 1 + \alpha_i - y_i b \le
\xi_i, \quad \xi_i \ge 0.
\]
Write out the Lagrangian:
\[
L = \frac{1}{n} \sum_i \xi_i + \sum_i \rho_i (1 + \alpha_i - y_i b - \xi_i)
- \sum_i \beta_i \xi_i.
\]
Taking partial derivatives:
\begin{align*}
  \frac{\partial L}{\partial \xi_i} &= \frac{1}{n} - \rho_i - \beta_i = 0 \qquad
  &\Rightarrow& \qquad \rho_i \in [0, n^{-1}], \\
  \frac{\partial L}{\partial b} &= -\sum_i \rho_i y_i = 0 \qquad &\Rightarrow&
  \qquad \sum_i \rho_i y_i = 0.
\end{align*}
Plugging back into $L$,
\[
L = \sum_i \rho_i (1 + \alpha_i), \qquad \text{s.t. } \quad \rho_i \in [0, n^{-1}],
\quad \sum_i \rho_i y_i = 0.
\]
Maximizing $L$ wrt $\rhob$ is exactly the LHS of \eqref{eq:fenchel_dual_bias_SVM}.

\section{Experimental Results in Detail}
\label{app:exp}

The $\lambda$s used in the experiment are:

\begin{table}[htb]
\centering
  \begin{tabular}{|ll|ll|ll|ll|}
  \hline
  dataset & $\lambda$ &  dataset & $\lambda$ &  dataset & $\lambda$ &  dataset & $\lambda$ \\
  \hline
    adult & $2^{-18}$ & astro-ph &  $2^{-17}$ & aut-avn & $2^{-17}$ & covertype & $2^{-17}$ \\
    \hline
    news20 & $2^{-14}$ & reuters-c11 & $2^{-19}$ & reuters-ccat & $2^{-19}$ & real-sim & $2^{-16}$ \\
    \hline
    web8 & $2^{-17}$ &&&&&& \\
    \hline
  \end{tabular}
\end{table}

\begin{figure}[htbp]
\begin{centering}
\subfloat[adult9 (pegasos diverged)]{
    \includegraphics[width=4.35cm]{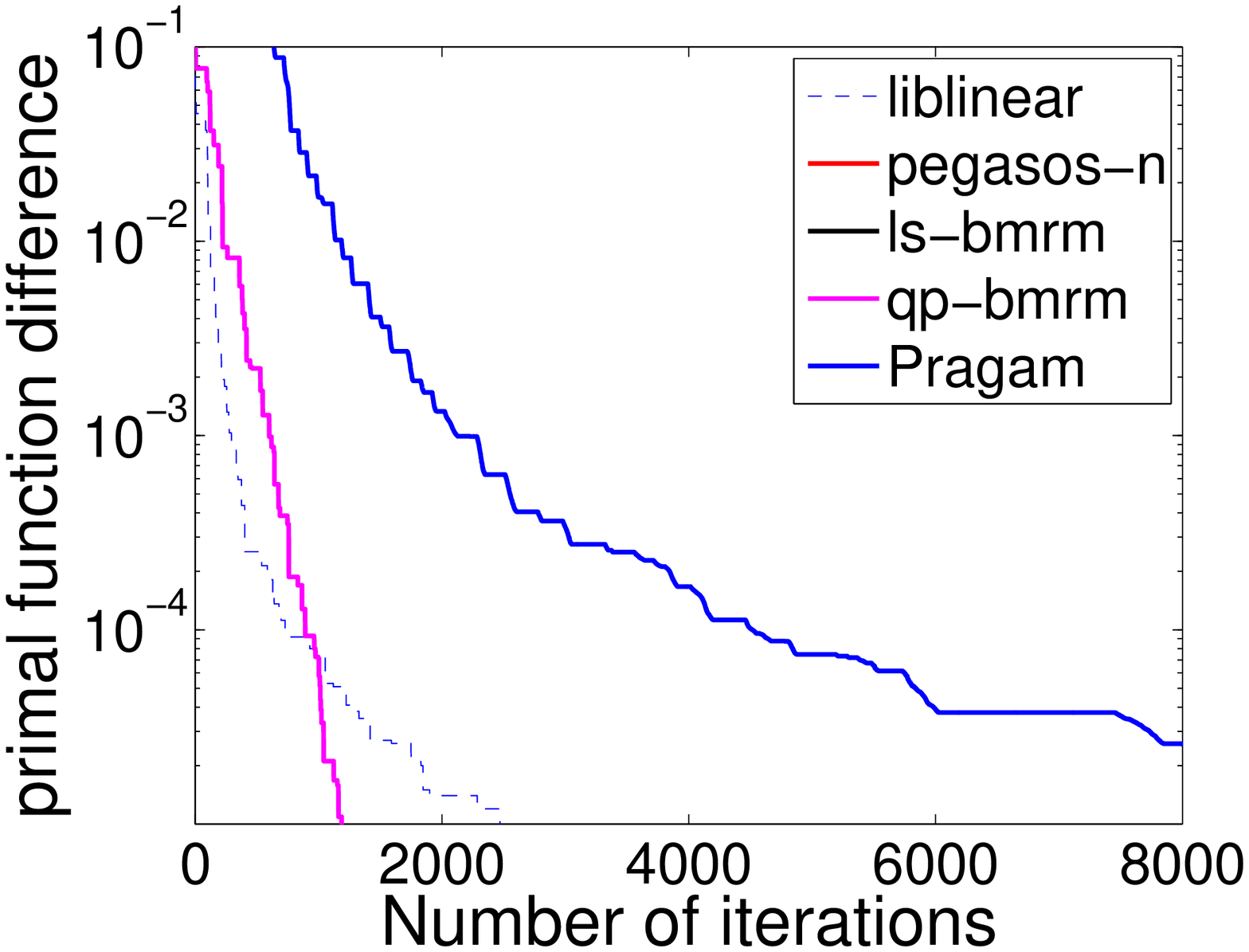}} ~~
\subfloat[astro-ph]{
    \includegraphics[width=4.35cm]{astro-ph_fmin_diff_vs_iter}} ~~
\subfloat[aut-avn]{
    \includegraphics[width=4.35cm]{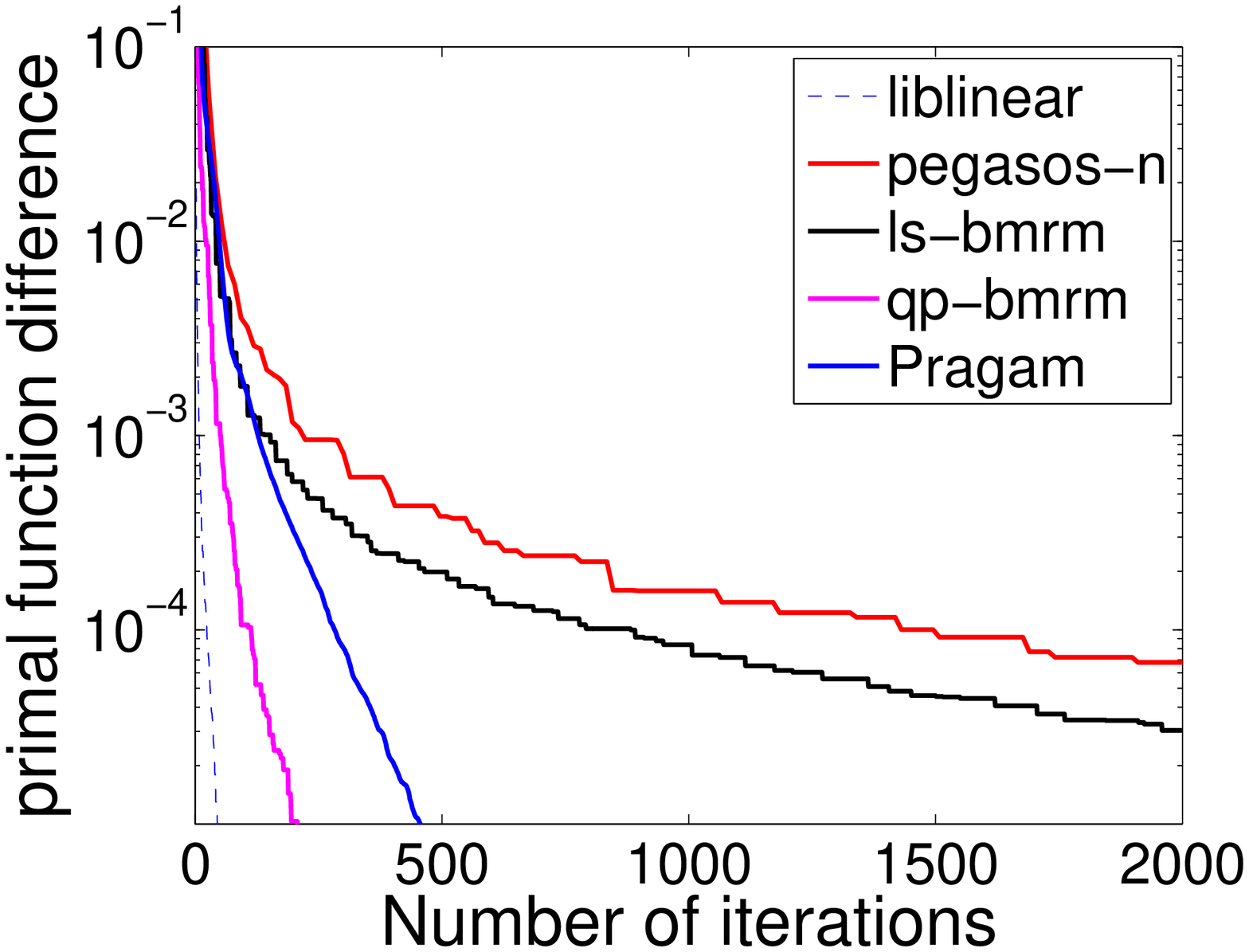}} \\
\subfloat[covertype]{
    \includegraphics[width=4.35cm]{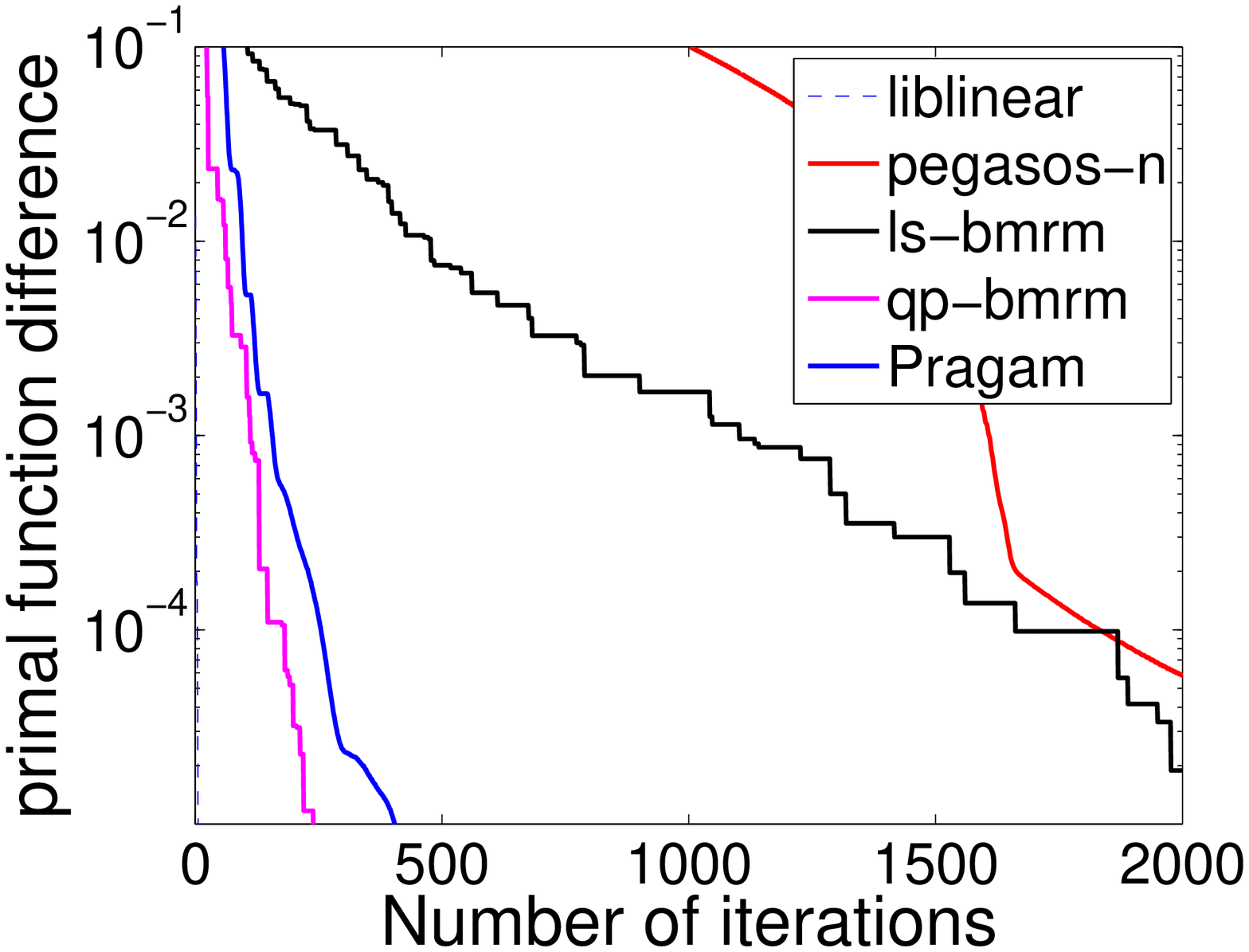}} ~~
\subfloat[news20]{
    \includegraphics[width=4.35cm]{news20_fmin_diff_vs_iter}} ~~
\subfloat[real-sim]{
    \includegraphics[width=4.35cm]{real-sim_fmin_diff_vs_iter}} \\
\subfloat[reuters-c11]{
    \includegraphics[width=4.35cm]{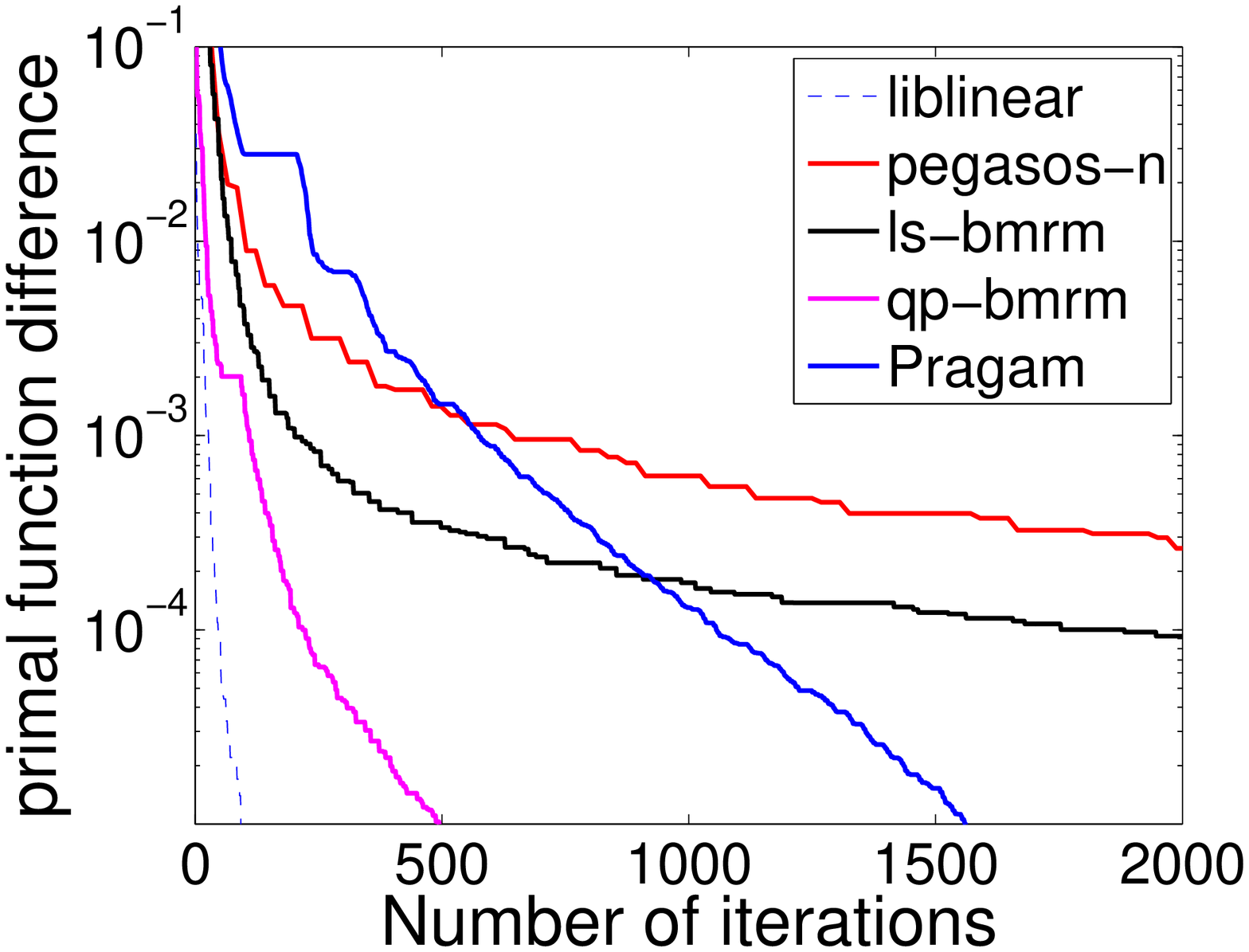}} ~~
\subfloat[reuters-ccat]{
    \includegraphics[width=4.35cm]{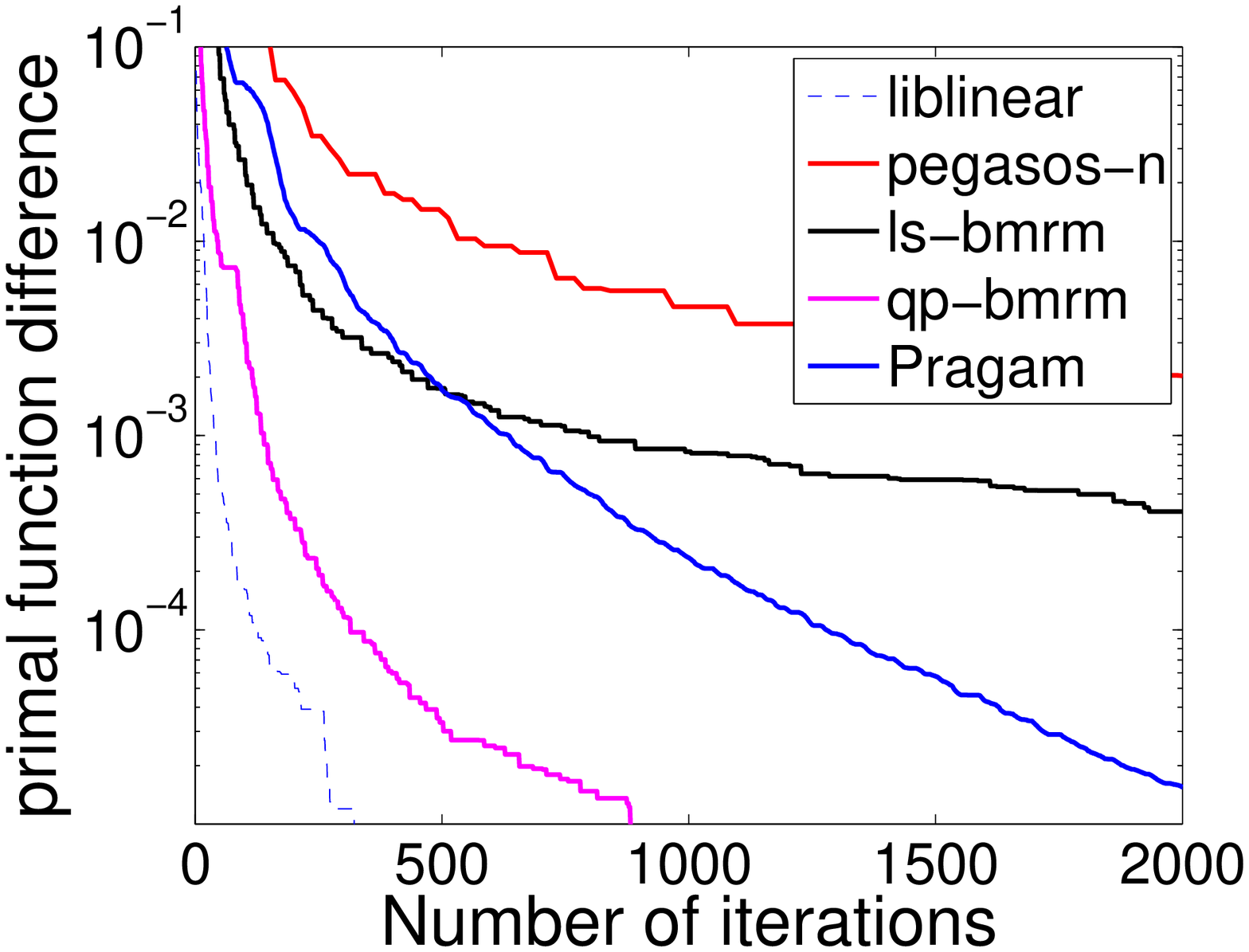}} ~~
\subfloat[web8]{
    \includegraphics[width=4.35cm]{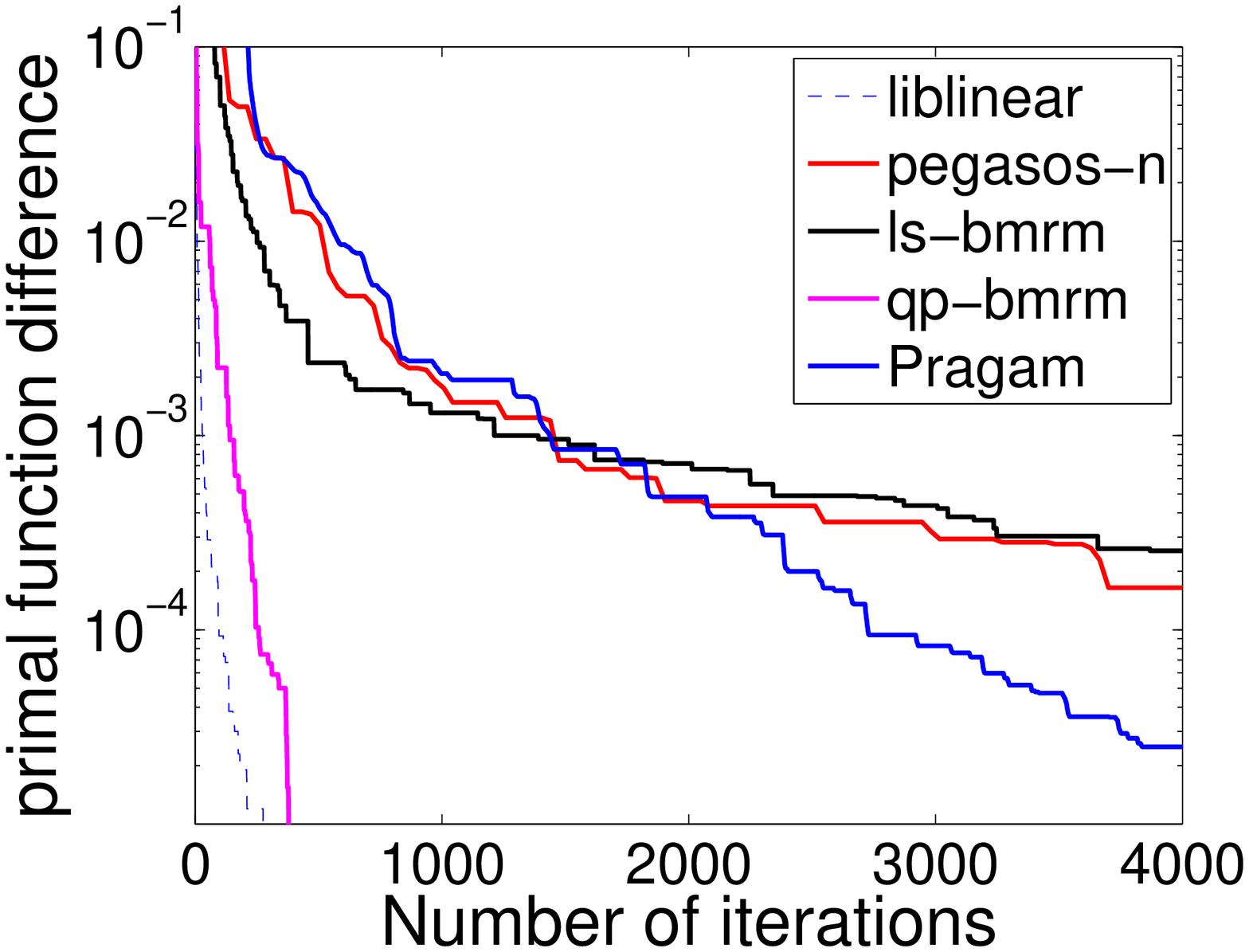}}
\caption{Primal function error versus number of iterations.}
\label{fig:fmin_diff_vs_iter_app}
\end{centering}
\end{figure}

\begin{figure}[htbp]
\begin{centering}
\subfloat[adult9 (pegasos diverged)]{
    \includegraphics[width=4.35cm]{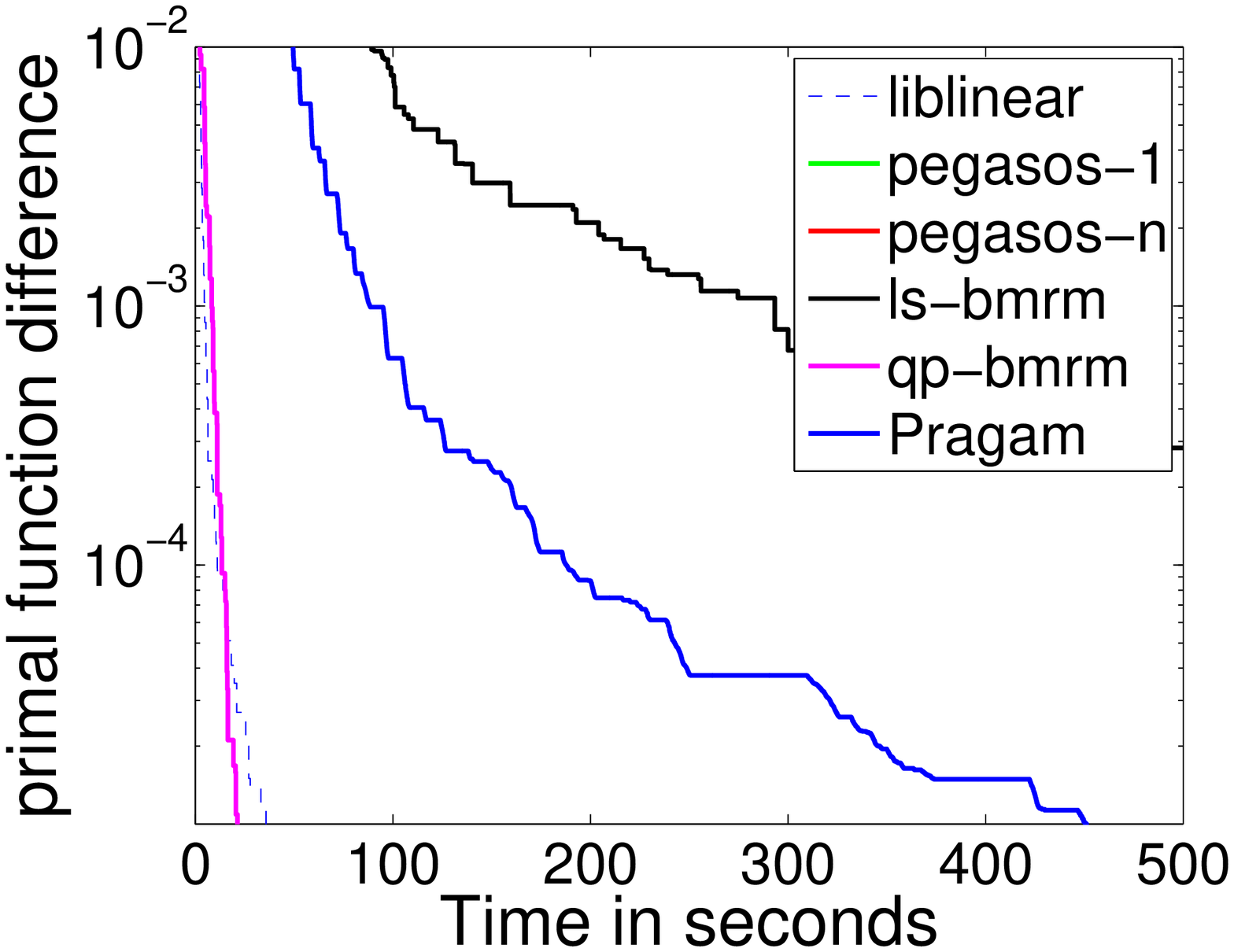}} ~~
\subfloat[astro-ph]{
    \includegraphics[width=4.35cm]{astro-ph_fmin_diff_vs_time}} ~~
\subfloat[aut-avn]{
    \includegraphics[width=4.35cm]{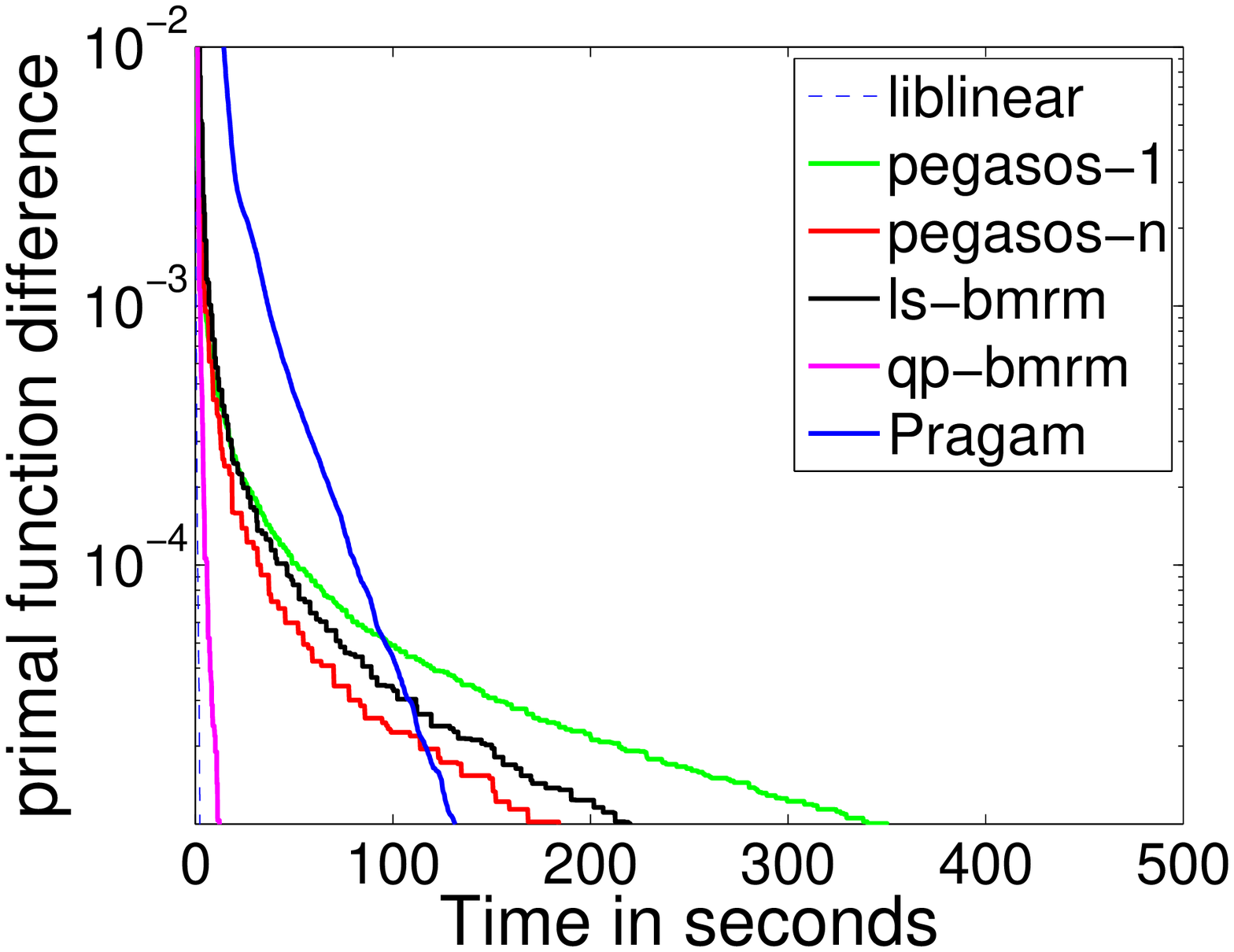}} \\
\subfloat[covertype]{
    \includegraphics[width=4.35cm]{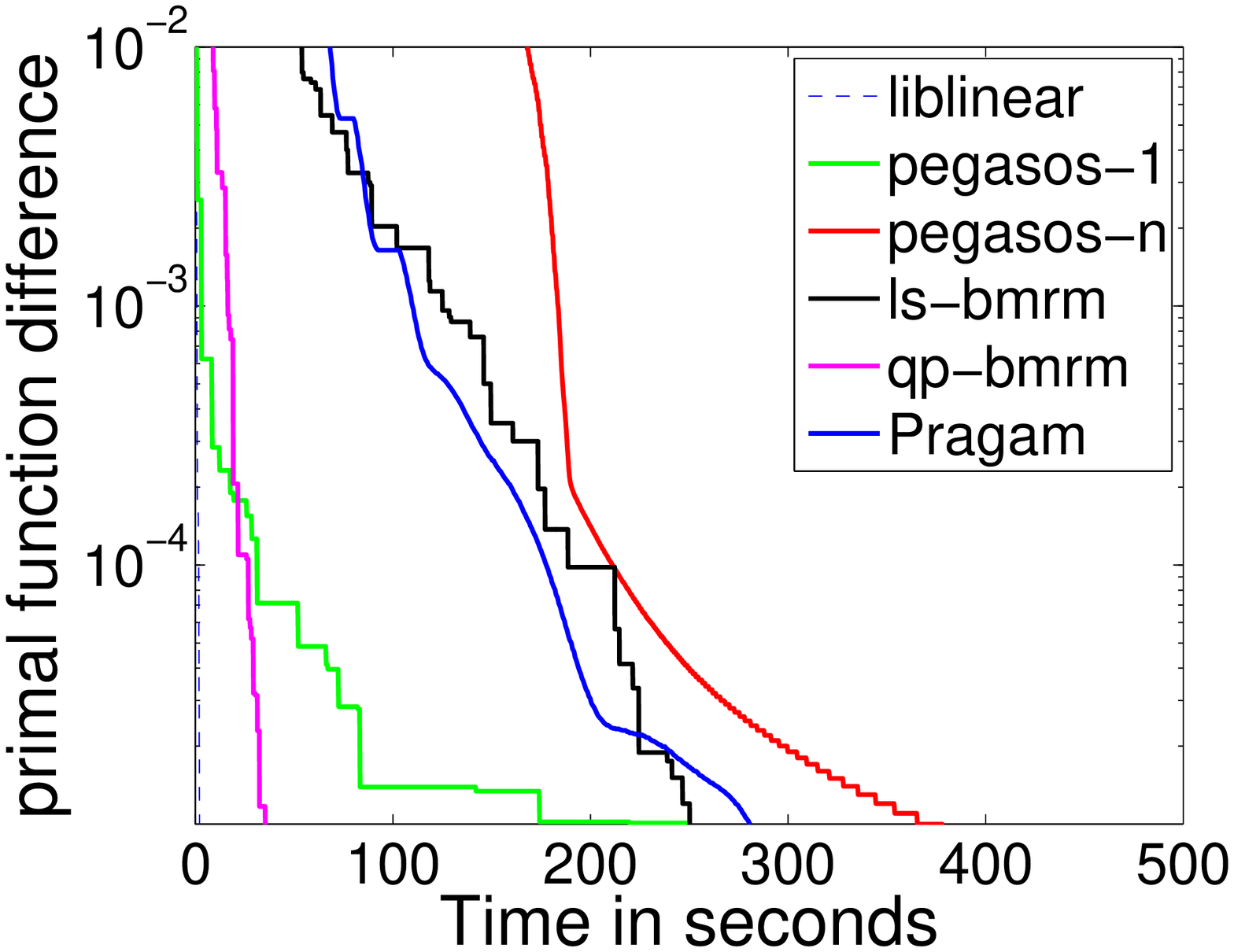}} ~~
\subfloat[news20]{
    \includegraphics[width=4.35cm]{news20_fmin_diff_vs_time}} ~~
\subfloat[real-sim]{
    \includegraphics[width=4.35cm]{real-sim_fmin_diff_vs_time}} \\
\subfloat[reuters-c11]{
    \includegraphics[width=4.35cm]{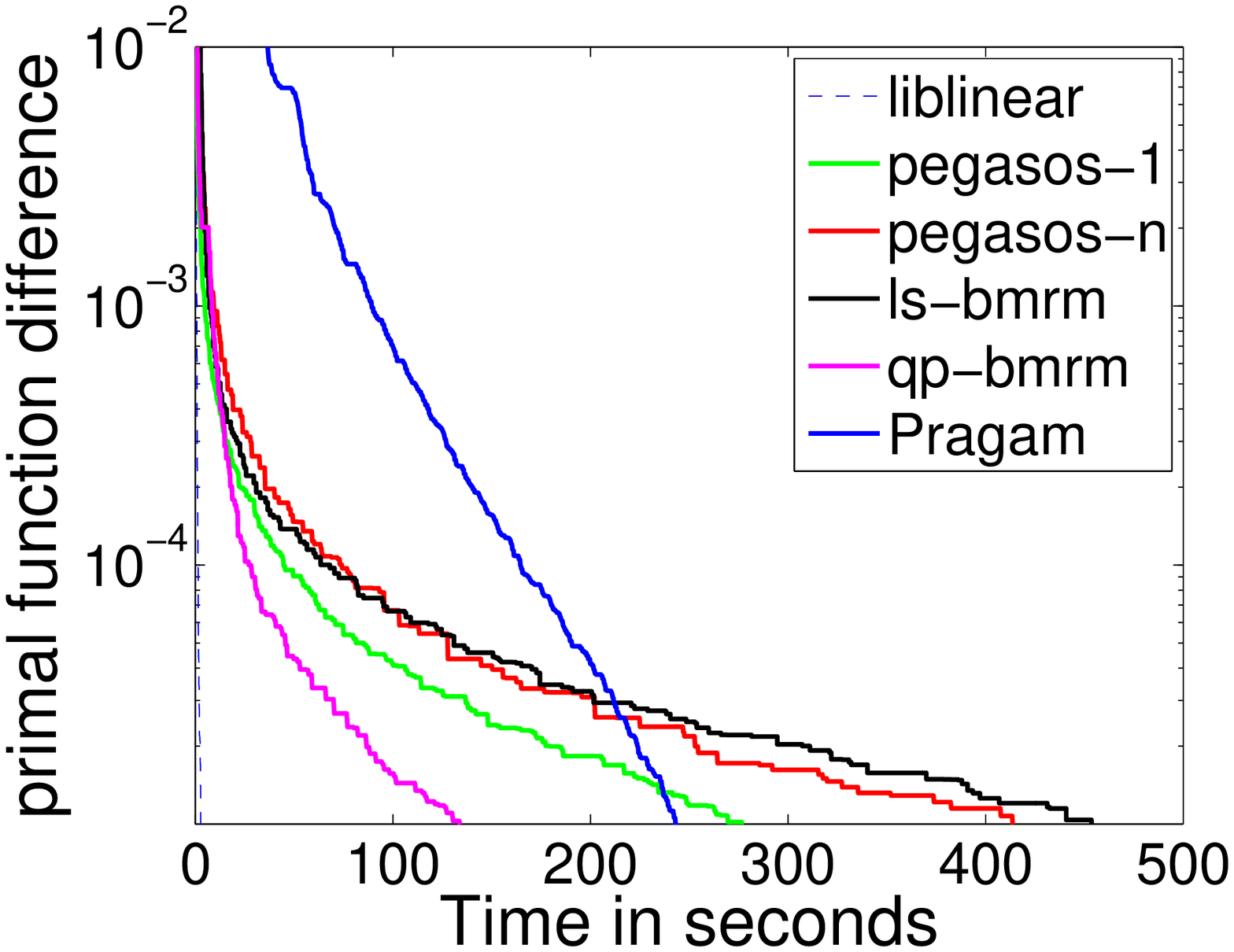}} ~~
\subfloat[reuters-ccat]{
    \includegraphics[width=4.35cm]{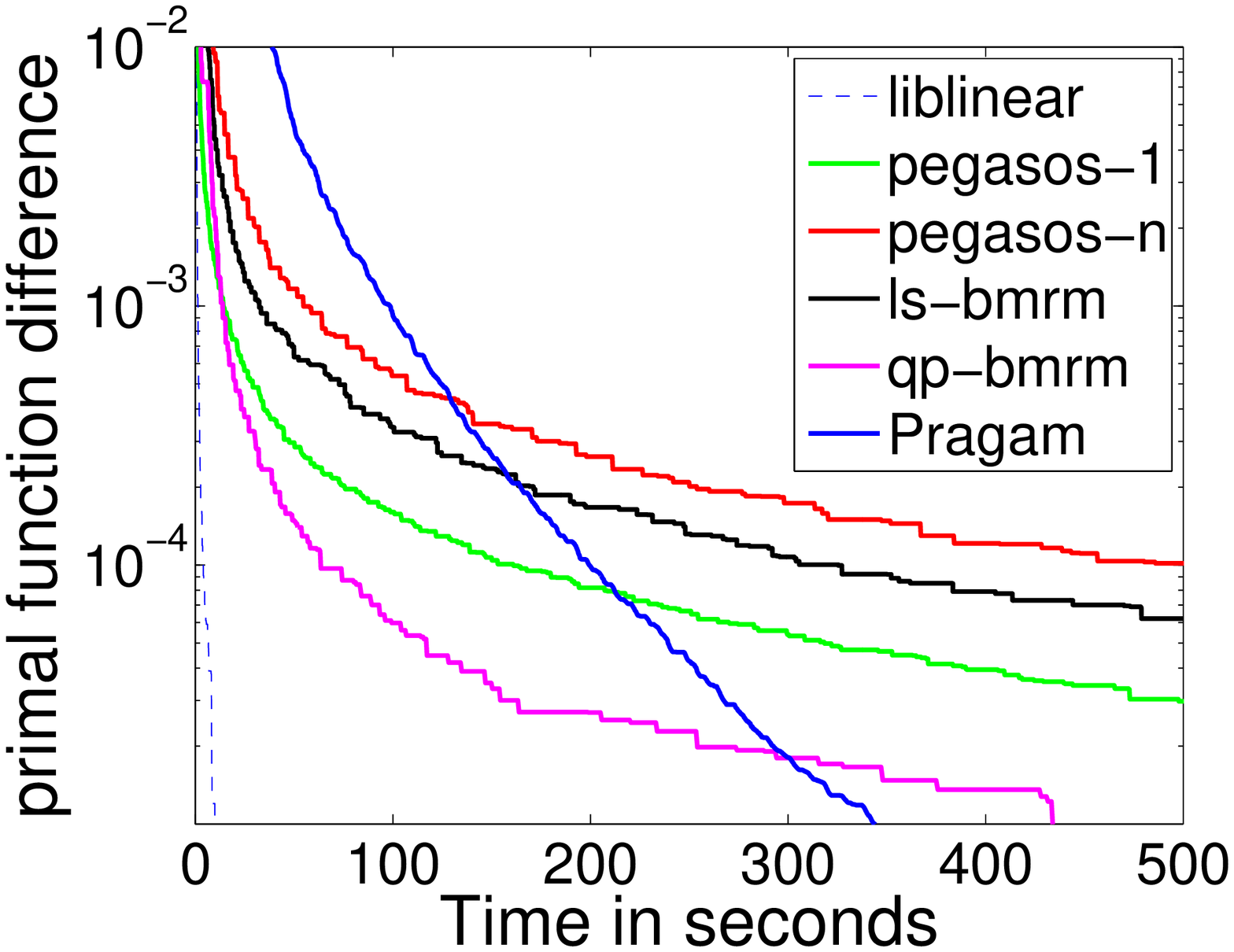}} ~~
\subfloat[web8]{
    \includegraphics[width=4.35cm]{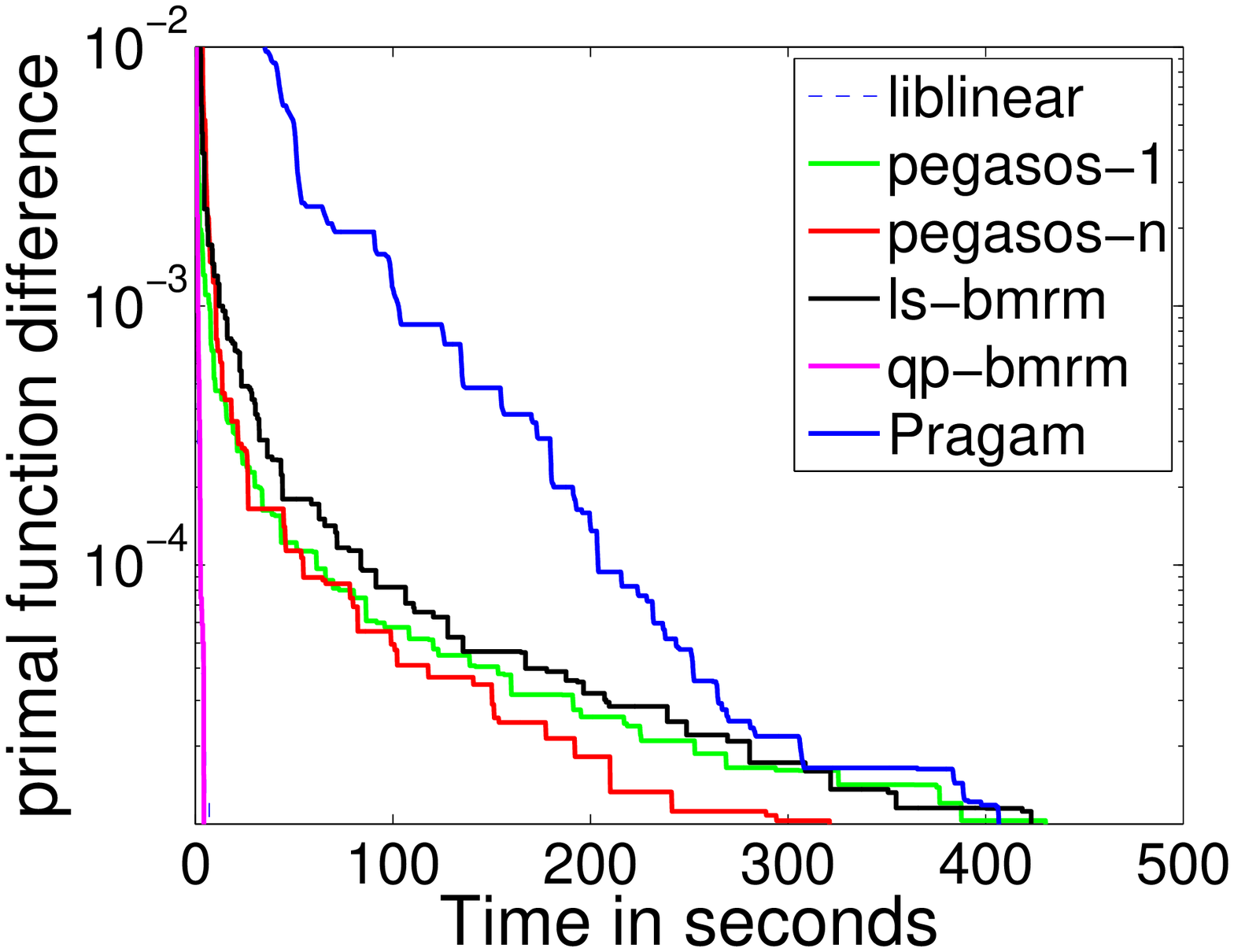}}
\caption{Primal function error versus time.}
\label{fig:fmin_diff_vs_time_app}
\end{centering}
\end{figure}

\begin{figure}[htbp]
\begin{centering}
\subfloat[adult9]{
    \includegraphics[width=4.35cm]{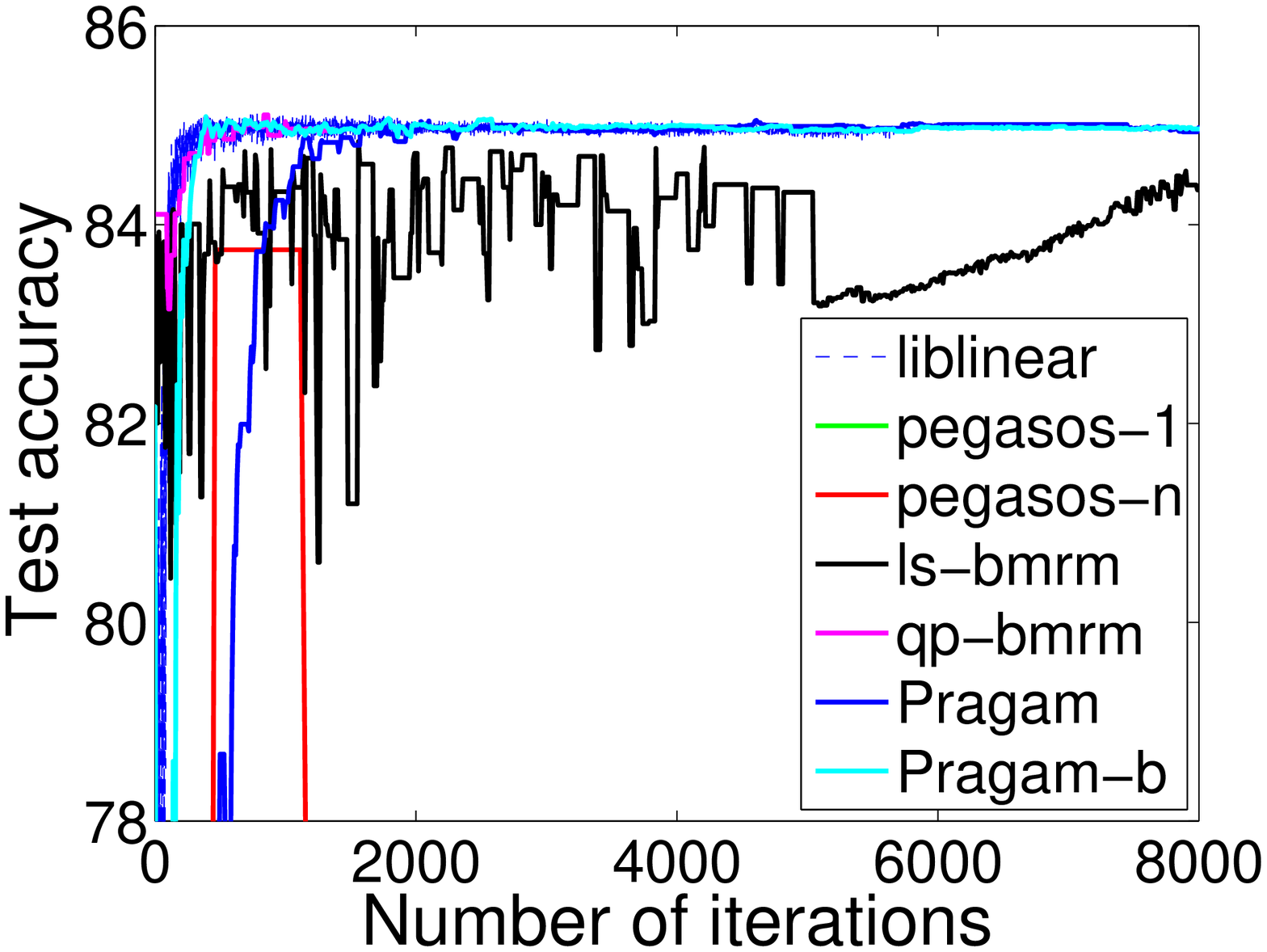}} ~~
\subfloat[astro-ph]{
    \includegraphics[width=4.35cm]{astro-ph_te_acc_vs_iter}} ~~
\subfloat[aut-avn]{
    \includegraphics[width=4.35cm]{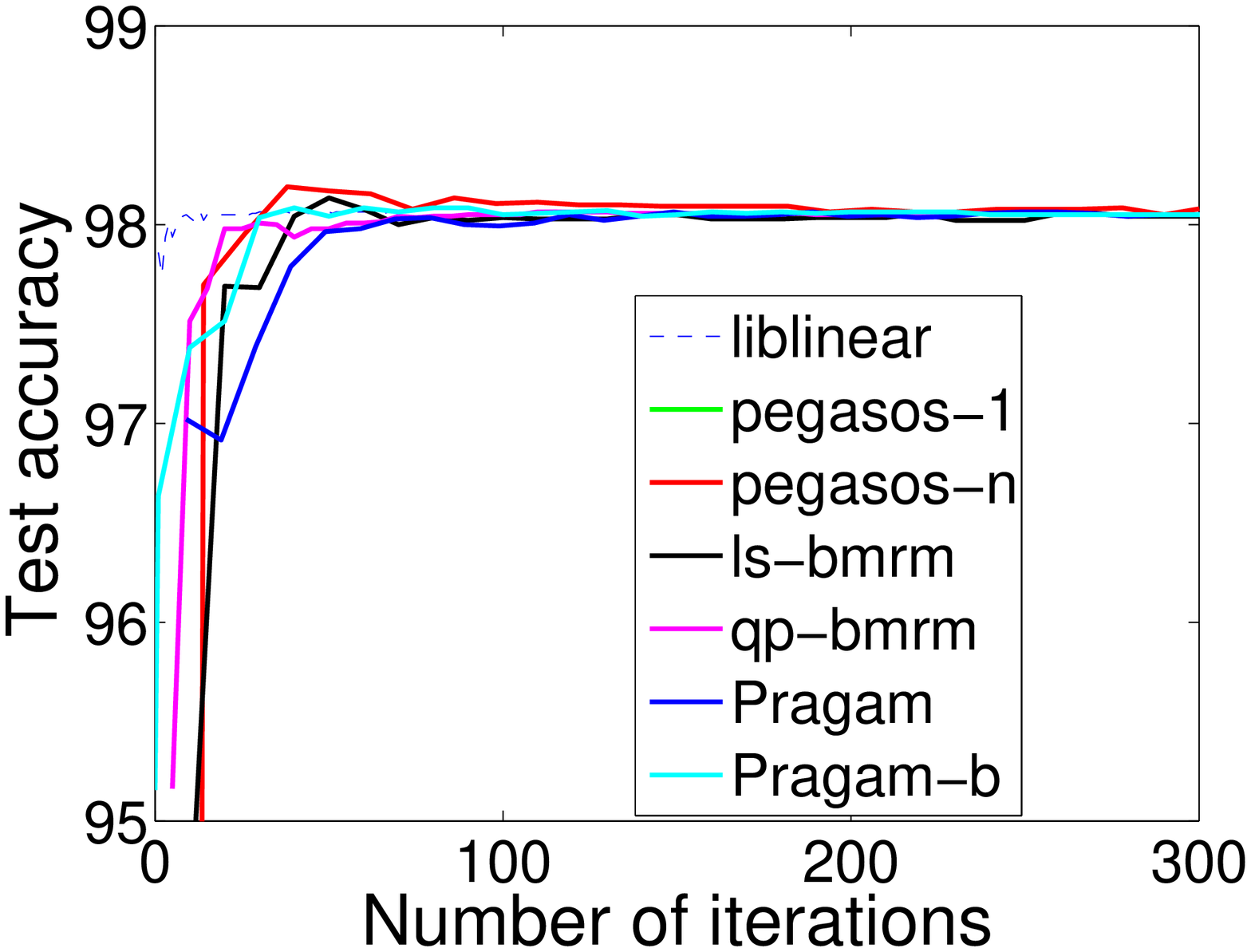}} \\
\subfloat[covertype]{
    \includegraphics[width=4.35cm]{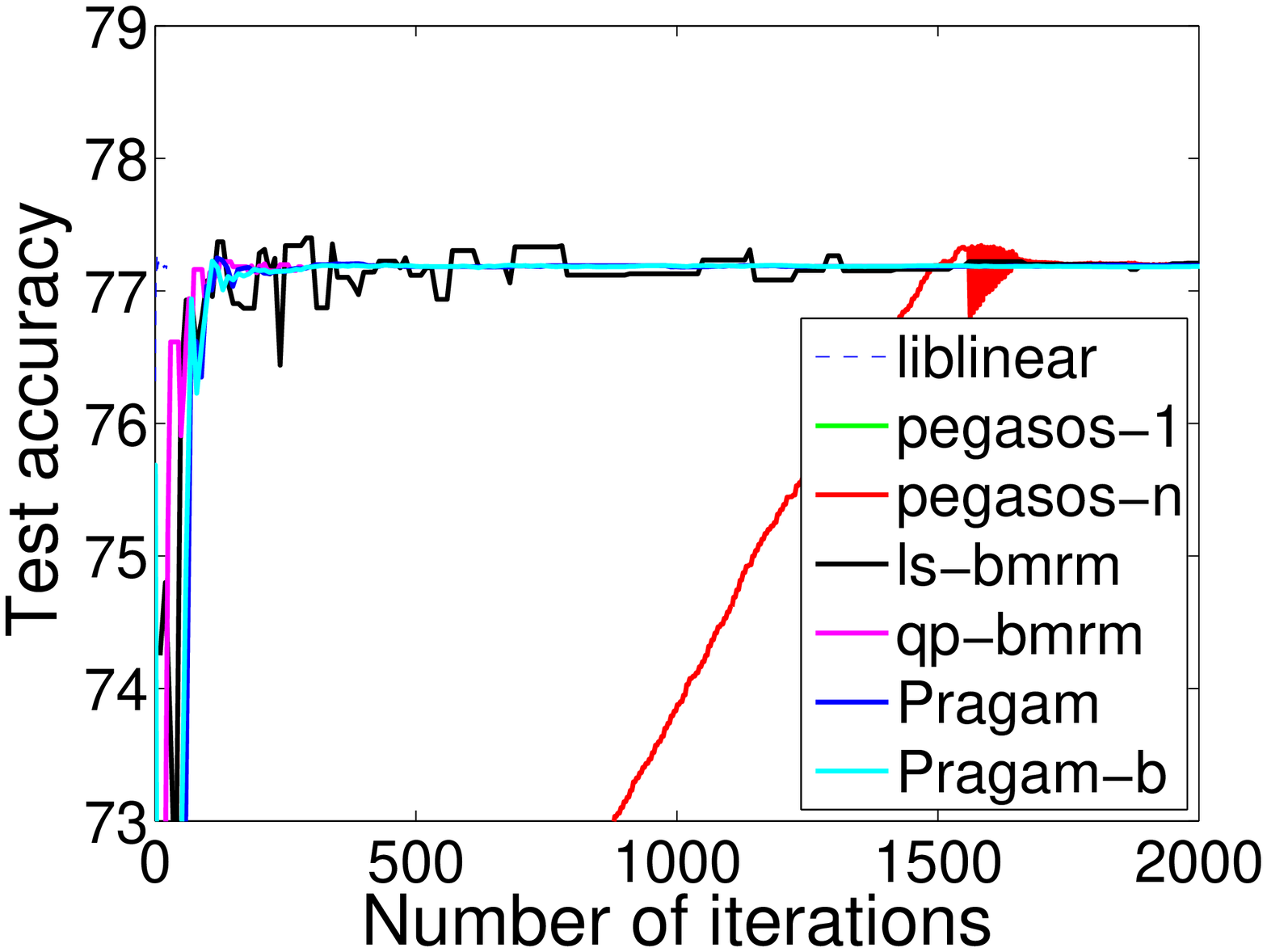}} ~~
\subfloat[news20]{
    \includegraphics[width=4.35cm]{news20_te_acc_vs_iter}} ~~
\subfloat[real-sim]{
    \includegraphics[width=4.35cm]{real-sim_te_acc_vs_iter}} \\
\subfloat[reuters-c11]{
    \includegraphics[width=4.35cm]{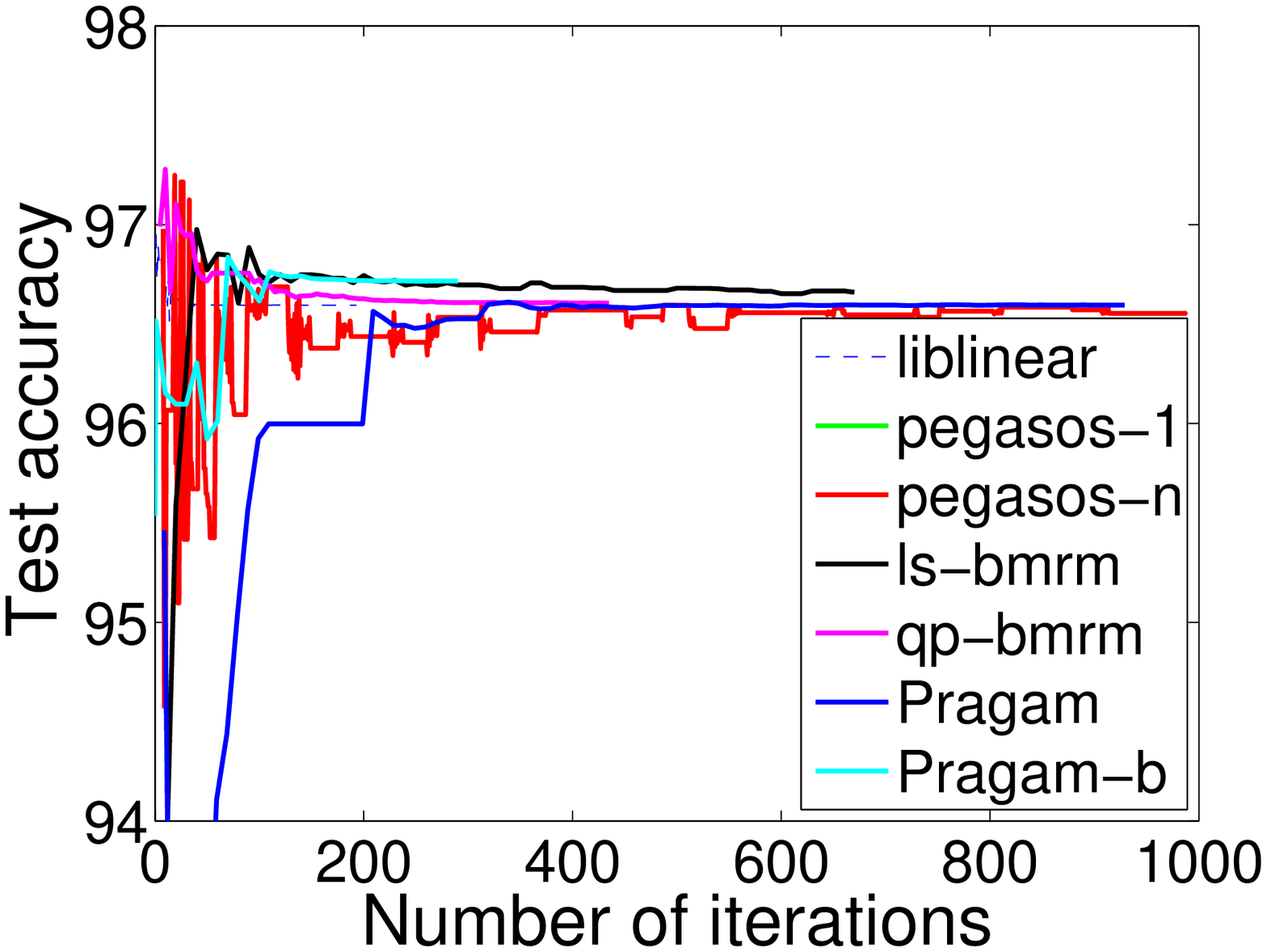}} ~~
\subfloat[reuters-ccat]{
    \includegraphics[width=4.35cm]{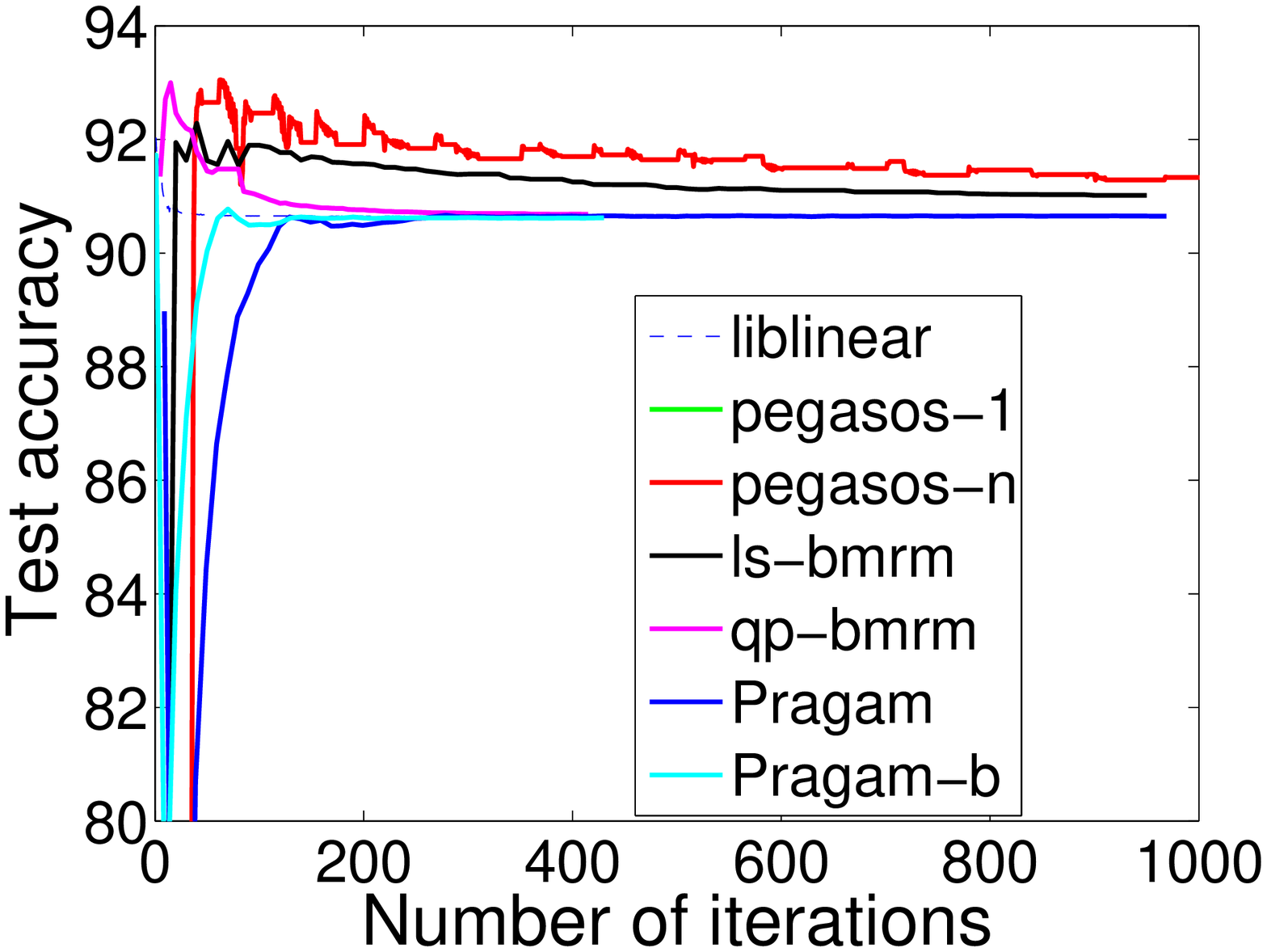}} ~~
\subfloat[web8]{
    \includegraphics[width=4.35cm]{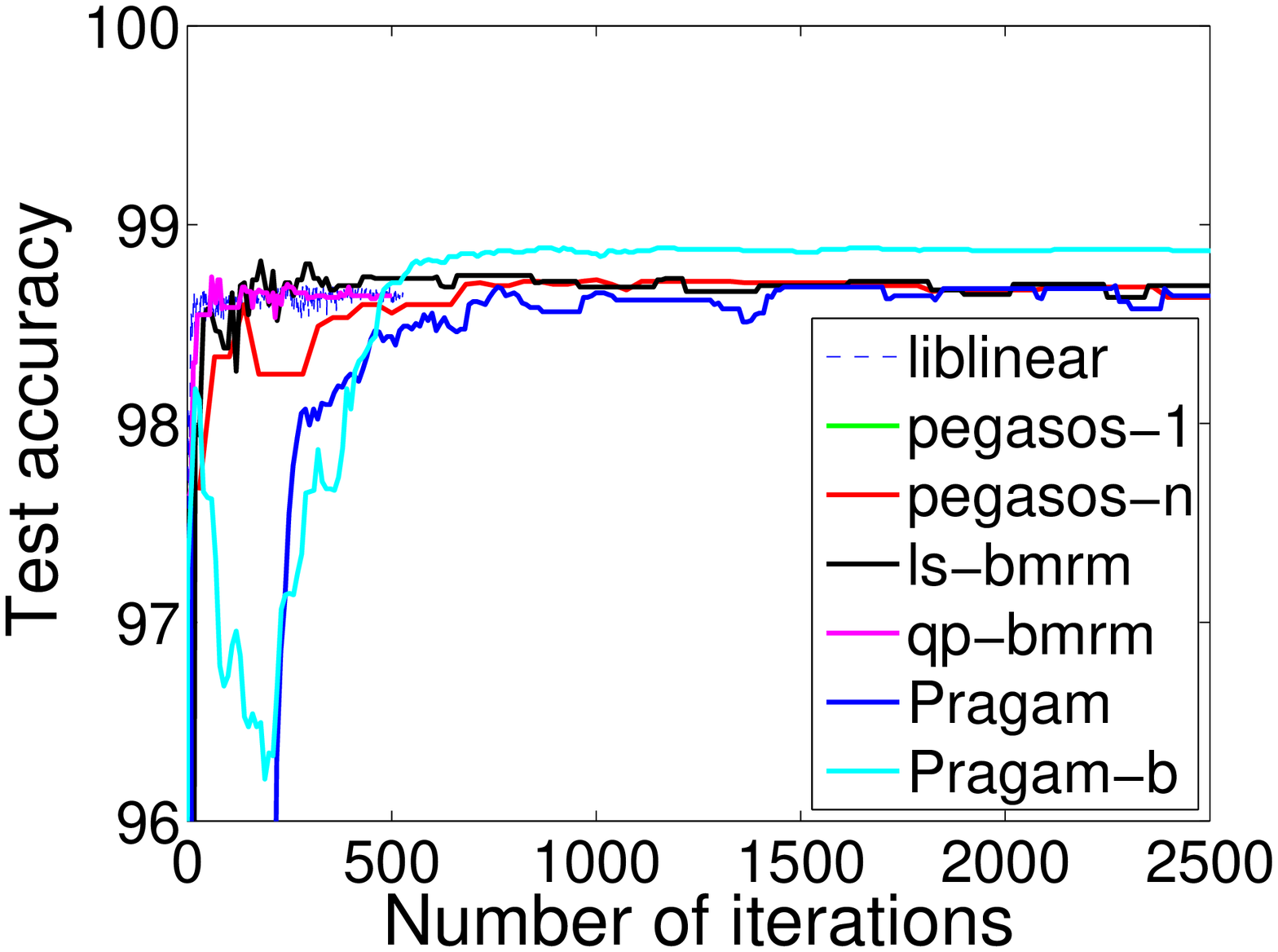}}
\caption{Test accuracy versus number of iterations.}
\label{fig:te_acc_vs_iter_app}
\end{centering}
\end{figure}

\begin{figure}[htbp]
\begin{centering}
\subfloat[adult9]{
    \includegraphics[width=4.35cm]{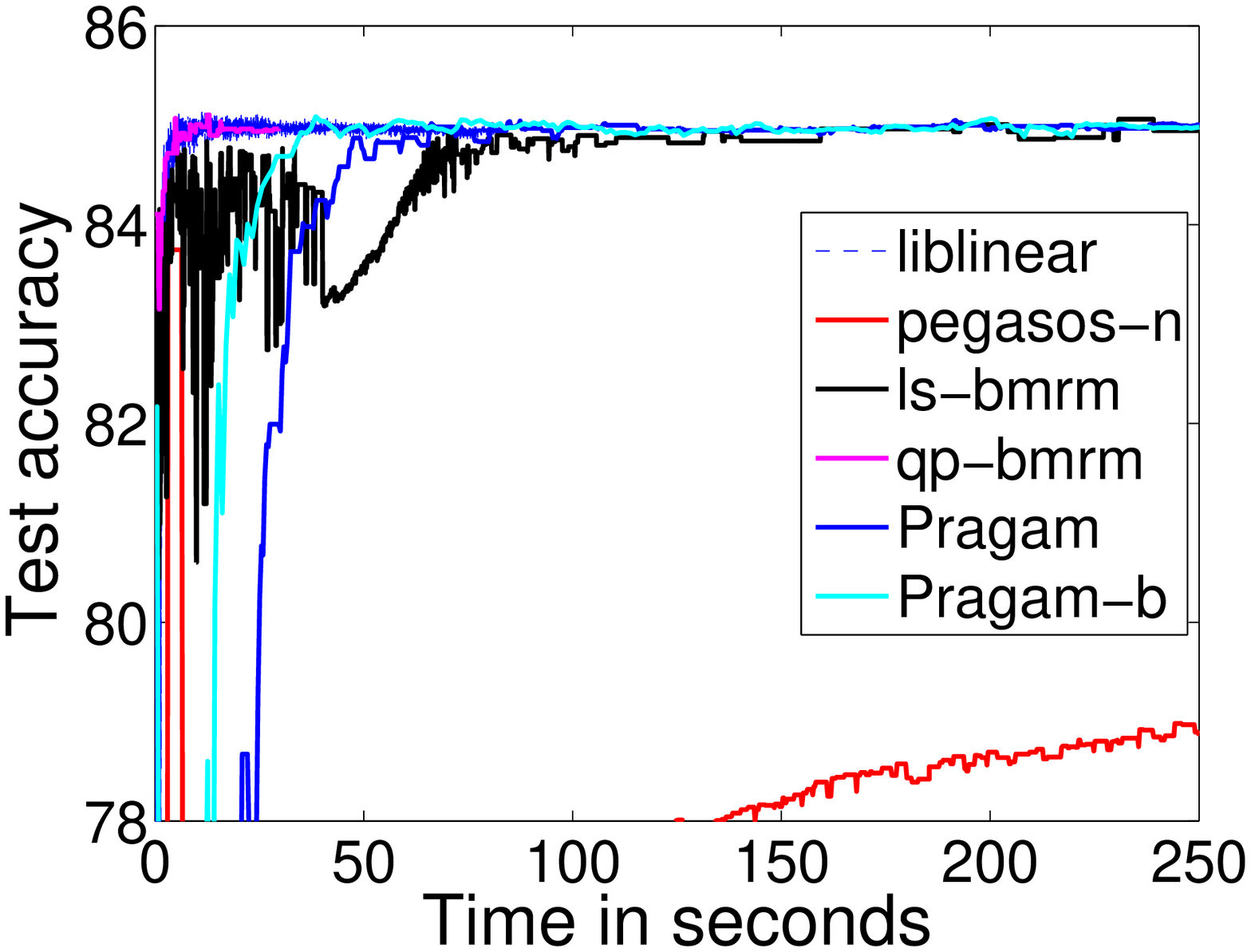}} ~~
\subfloat[astro-ph]{
    \includegraphics[width=4.35cm]{astro-ph_te_acc_vs_time}} ~~
\subfloat[aut-avn]{
    \includegraphics[width=4.35cm]{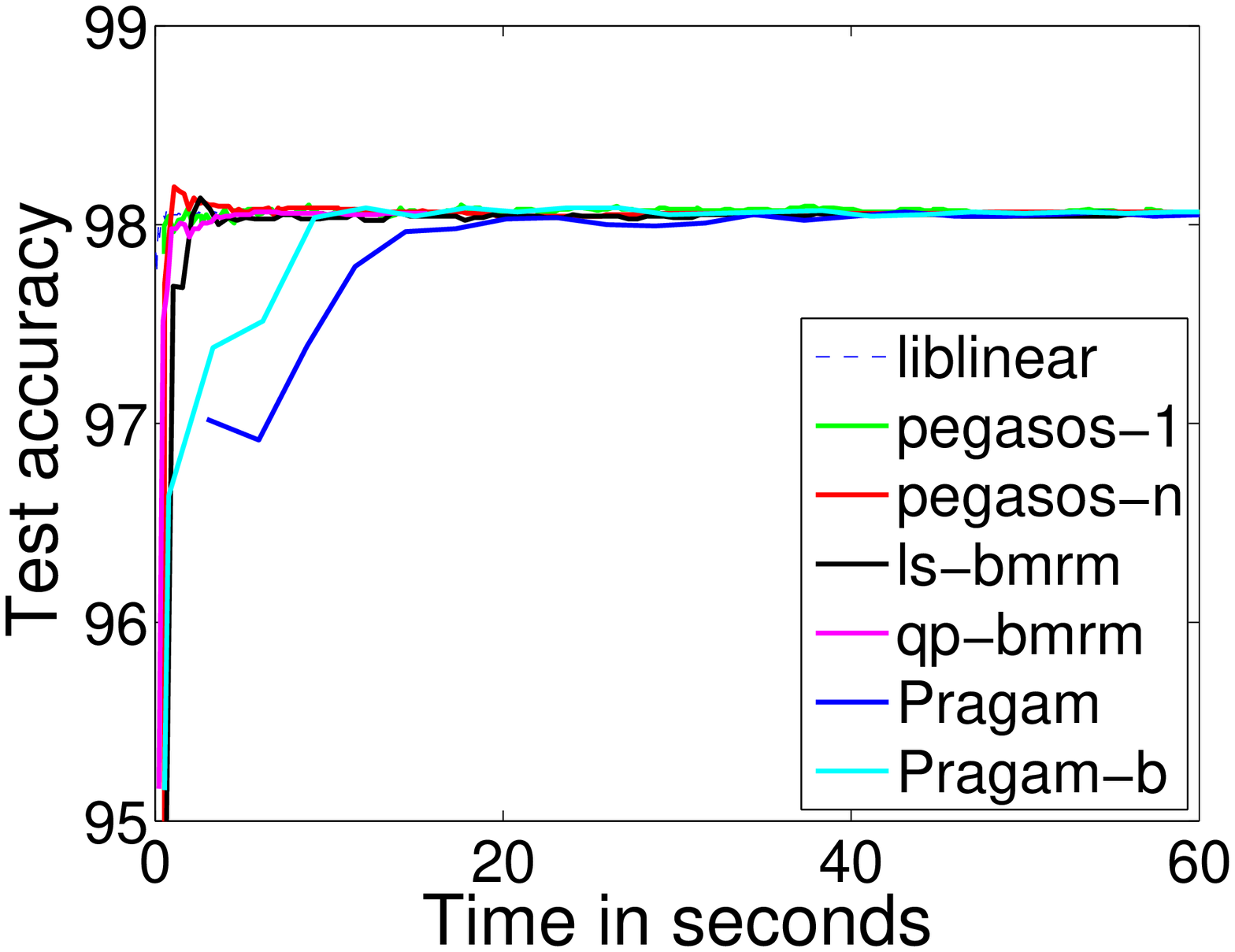}} \\
\subfloat[covertype]{
    \includegraphics[width=4.35cm]{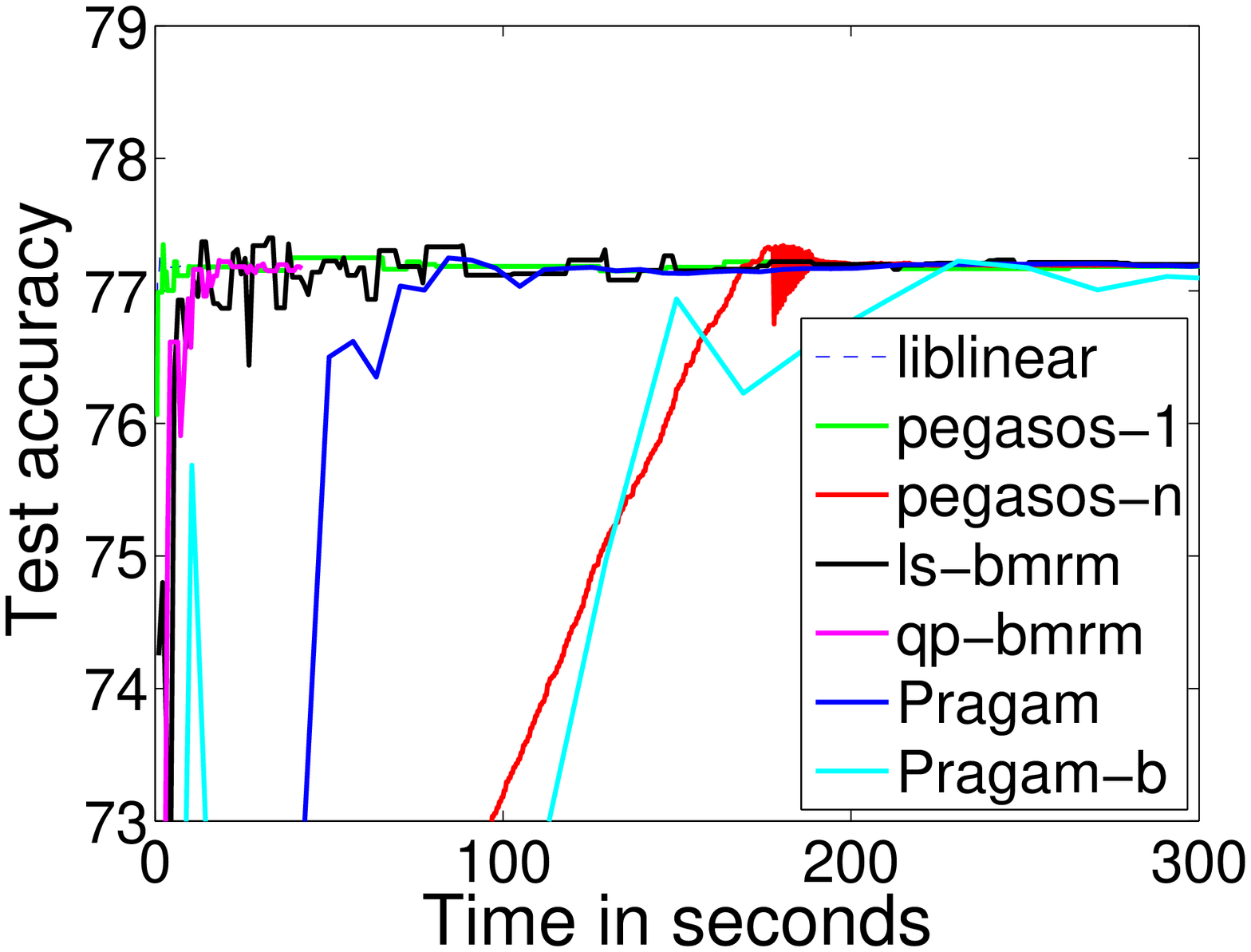}} ~~
\subfloat[news20]{
    \includegraphics[width=4.35cm]{news20_te_acc_vs_time}} ~~
\subfloat[real-sim]{
    \includegraphics[width=4.35cm]{real-sim_te_acc_vs_time}} \\
\subfloat[reuters-c11]{
    \includegraphics[width=4.35cm]{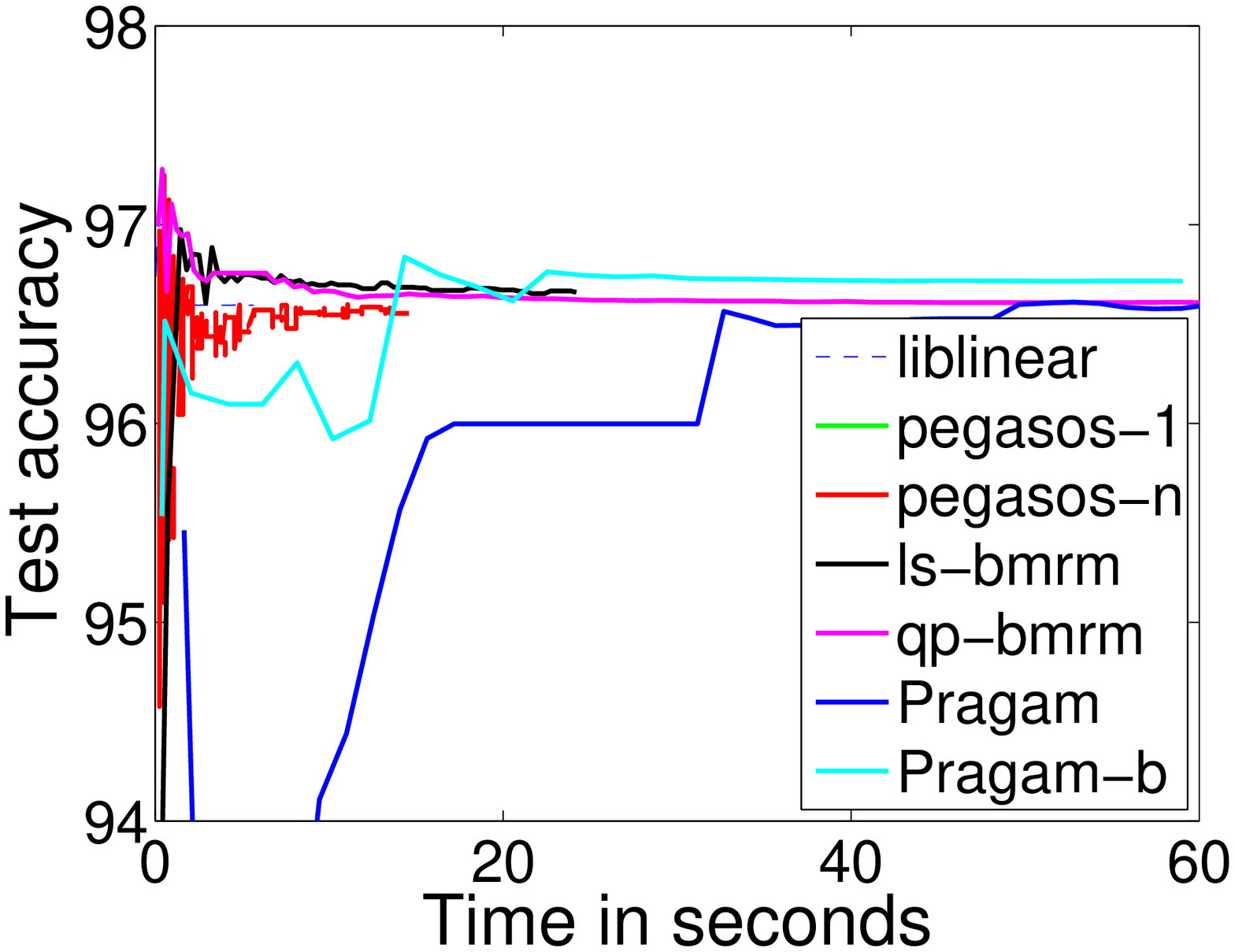}} ~~
\subfloat[reuters-ccat]{
    \includegraphics[width=4.35cm]{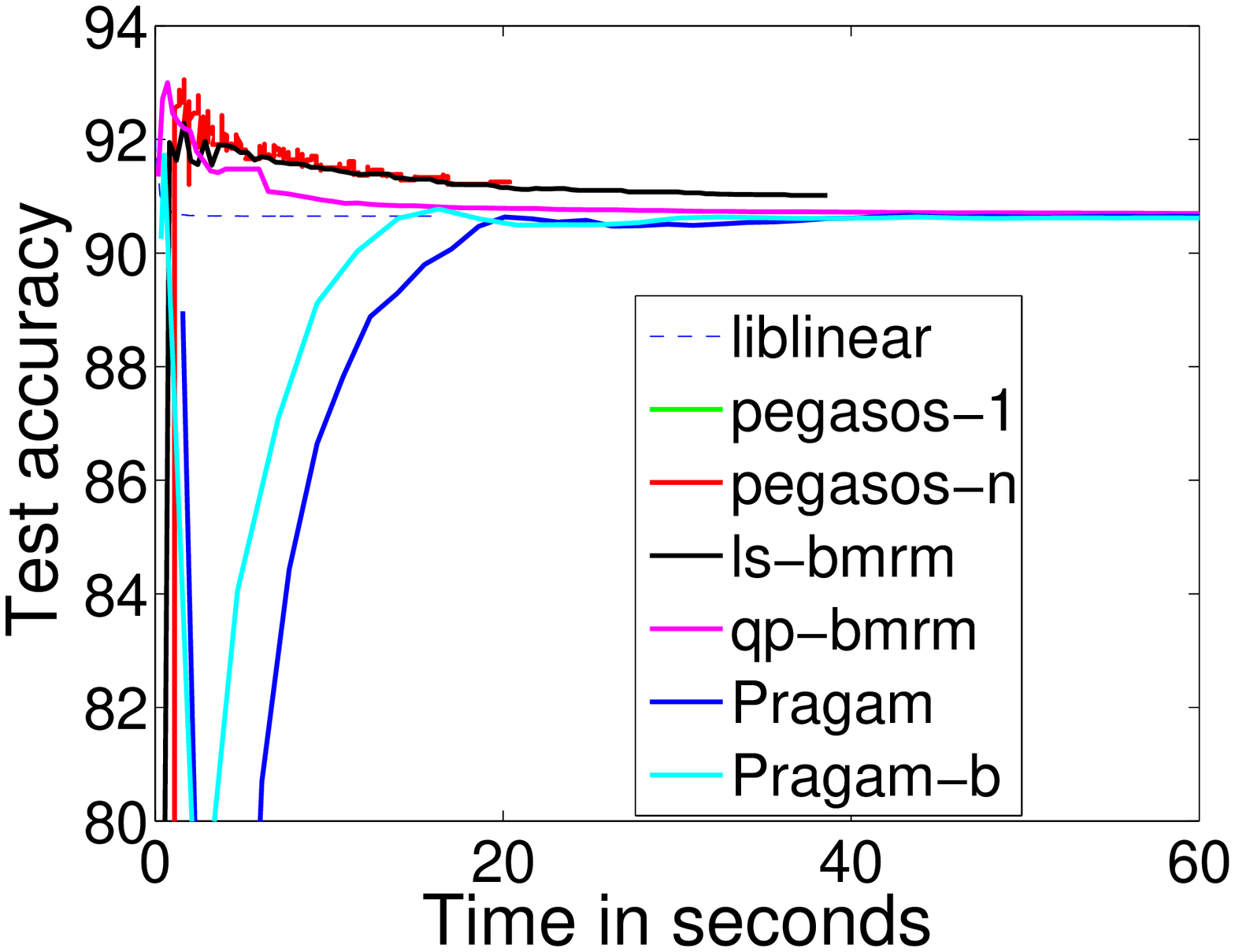}} ~~
\subfloat[web8]{
    \includegraphics[width=4.35cm]{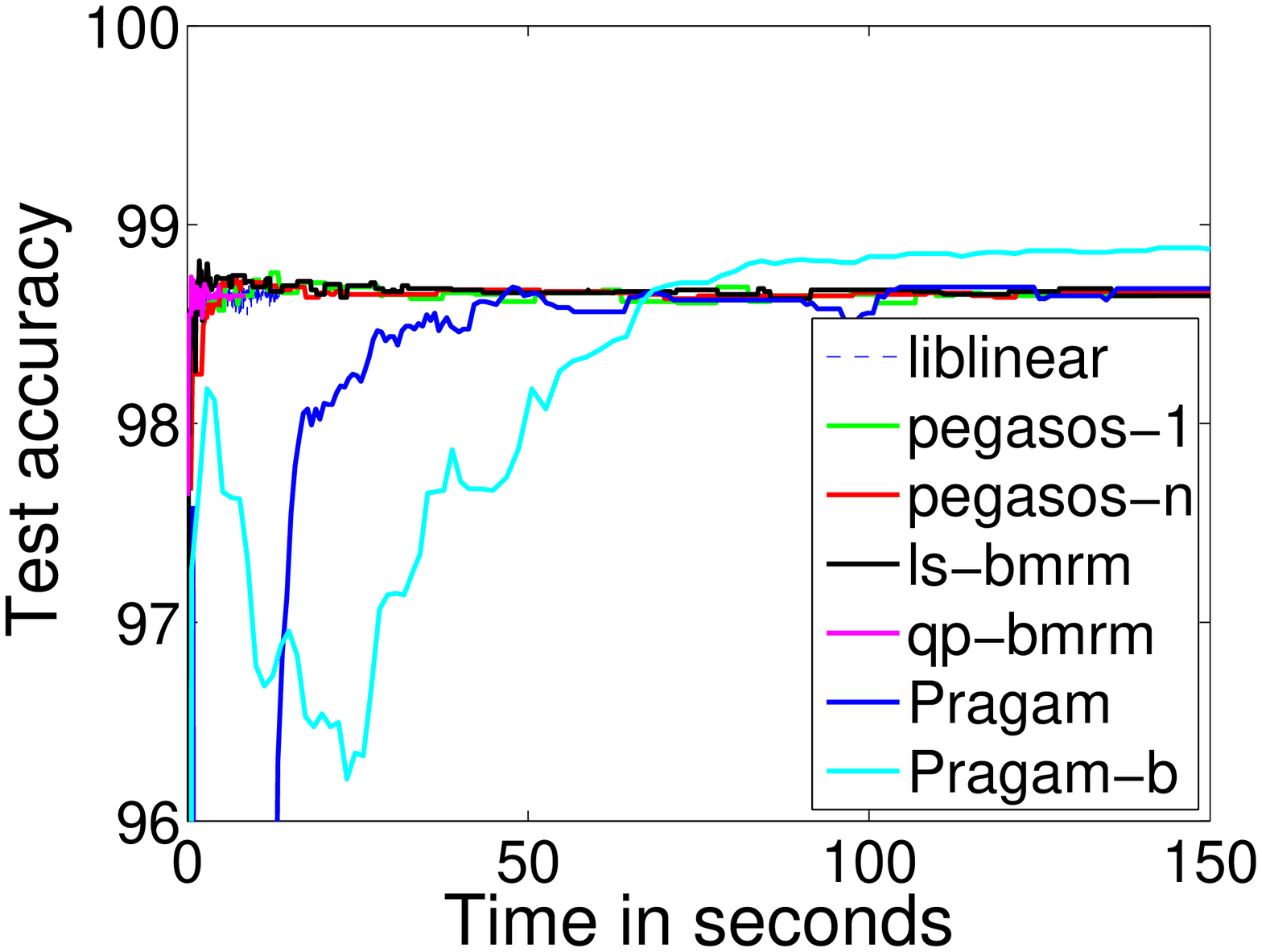}}
\caption{Test accuracy versus time.}
\label{fig:te_acc_vs_time_app}
\end{centering}
\end{figure}

\clearpage

\section{Structured Output}
\label{app:struct_output}

In this section, we show how \nest\ can be applied to structured output data. When the output space $\yb$ has structures, the label space becomes exponentially
large and the problem becomes much more expensive.  To make the computation tractable,
it is common to reparameterize on the cliques and to estimate the parameters via graphical
model inference.  We Below are two examples.

\subsection{Margin Scaled Maximum Margin Markov Network}
\label{sec:mcuben}

The maximum margin Markov network formulation (M$^3$N) by \cite{TasGueKol04} has the following
dual form (skipping the primal).  Here, a training example $i$ has a graphical model with
maximal clique set $\Ccal^{(i)}$.  For each $i$ and $c \in \Ccal^{(i)}$, $\alpha_{i,c}(\yb_c)$ stands for the marginal probability of the clique configuration $\yb_c$.  Any possible structured output $\yb$ can be measured against the given true label $\yb^i$ with a loss function $\ell_i(\yb)$.  It is assumed that $\ell_i(\yb)$ decomposes additively onto the cliques: $\ell_i(\yb) = \sum_{c \in \Ccal^{(i)}} \ell_{i,c}(\yb_c)$.  In addition, \cite{TasGueKol04} uses a joint feature map $\fb_i(\yb) := \fb(\xb^i, \yb)$, and define $\Delta \fb_i(\yb) := \fb_i(\yb^i) - \fb_i(\yb)$.  Again, we assume that $\fb_i(\yb)$ decomposes additively onto the cliques: $\fb_i(\yb) = \sum_c \fb_{i,c}(\yb_c)$, where $\fb_{i,c}(\yb_c) := \fb_c(\xb^i, \yb_c)$.  The simplest joint feature map is defined by:
\begin{align*}
\inner{\fb_c(\xb, \yb_c)}{\fb_{c'}(\xbbar, \ybbar_{c'})} = \delta(\yb_c = \ybbar_{c'}) k(\xb_c, \xbbar_{c'}) = \delta(\yb_c = \ybbar_{c'}) \inner{\xb_c}{\xbbar_{c'}}.
\end{align*}
Notice that $c$ and $c'$ are not required to be of the same ``type", and $\delta(\yb_c = \ybbar_{c'})$ will automatically filter out incompatible types, \eg, $c$ is an edge and $c'$ is a node.  This kernel can be easily vectorized into $\fb(\xb, \yb) := \sum_c \left(\xb_c \bigotimes (\delta(\yb_c = \yb_{c,1}), \ldots, \delta(\yb_c = \yb_{c,m(c)}))^{\top}\right)$, where $m(c)$ is the number of label configurations that clique $c$ can take, and $\bigotimes$ (cross product) is defined by:
\begin{align*}
  \otimes: \RR^s \times \RR^t \to \RR^{st}, \qquad (\ab \otimes \bb)_{(i-1)t + j} := a_i b_j.
\end{align*}

\begin{align*}
  \min_{\alphab} \qquad \frac{C}{2} \nbr{\sum_{i, c, \yb_c} \alpha_{i,c}(\yb_c) \Delta \fb_{i,c} (\yb_c)}^2 &- \sum_{i, c, \yb_c} \alpha_{i,c}(\yb_c) \ell_{i,c}(\yb_c) \\
s.t. \qquad \qquad \qquad \quad \sum_{\yb_c} \alpha_{i,c}(\yb_c) &= 1 \qquad \qquad \qquad \qquad \quad \forall i, \forall c \in \Ccal^{(i)}; \\
\alpha_{i,c}(\yb_c) &\ge 0 \qquad \qquad \qquad \qquad \quad \forall i, \forall c \in \Ccal^{(i)}, \forall \yb_c; \\
\sum_{\yb_c \sim \yb_{c \cap c'}} \alpha_{i,c}(\yb_c) &= \sum_{\yb_{c'} \sim \yb_{c \cap c'}} \alpha_{i,c'}(\yb_{c'}) \qquad \forall i, \forall c, c' \in \Ccal^{(i)}: c \cap c' \neq \emptyset, \forall \yb_{c \cap c'}.
\end{align*}

The subroutines of mapping in \eqref{eq:alpha_mu} and \eqref{eq:v_alpha} can be viewed as projecting a vector onto the probability simplex under $L_2$ distance.  Moreover, the image now is restricted to the pmf which satisfies the conditional independence properties encoded by the graphical model.  This is much more difficult than projecting onto a box with a linear equality constraint as in SVM, and we can only resort to a block coordinate descent as detailed in Appendix \ref{sec:app_proj_graph}.

\subsection{Gaussian Process Sequence Labeling}
\cite{AltHofSmo04} proposed using Gaussian process to segment and annotate sequences.  It assumes that all training sequences have the same length $l$, and each node can take values in $[m]$. For sequences, the maximum cliques are edges: $\Ccal^{(i)} = \cbr{(t, t+1) : t \in [l - 1]}$.  We use $\alphab$ to stack up the marginal distributions on the cliques:
\[
\cbr{\alpha_{i,c}(\yb_c) : i, c \in \Ccal^{(i)}, \yb_c} = \cbr{\alpha_{i,t}(y_t, y_{t+1}): i, t \in [l-1], y_t, y_{t+1} \in [m]}.
\]

The marginal probability $\alpha_{i,c}(\yb_c)$ just aggregates relevant elements in the joint distribution via a linear transform.  With the joint distribution vector $p \in \RR^{n m^l}$, we can write $\alphab = \Lambda p$, where $\Lambda$ is $n l m^2 \times n m^l$ defined by
\[\lambda_{(j, t, \sigma, \tau), (i, \yb)} = \delta (i = j \wedge y_t = \sigma \wedge y_{t+1} = \tau).\]
A key difference from the M$^3$N is that in Gaussian process, the constraint that the joint density lying in the probability simplex is replaced by regularizing the log partition function.  So the set of marginal distributions on cliques do not have local consistency constraints, \ie, they are free variables.  The ultimate optimization problem in \cite{AltHofSmo04} is an unconstrained optimization:
\begin{align}
\label{eq:gp_seq_dual}
  \min_{\alphab} \qquad \alphab^{\top} K \alphab - \sum_{i=1}^n \alphab^{\top} K \Lambda e_{(e,\yb_i)} + \sum_{i=1}^n \log \sum_{\yb} \exp \left( \alphab^{\top} K \Lambda e_{i,\yb} \right).
\end{align}
where $K$ is a kernel on $\cbr{(\xb^i, (y^i_t, y^i_{t+1})) : i, t \in [l-1], y_t, y_{t+1} \in [m]}$.  A simple example is:
\begin{align*}
k((\xb^i, (y^i_t, y^i_{t+1})), (\xb^j, (y^j_{t'}, y^j_{t'+1})) = &\delta(t = t') \left( \delta (y^i_t = y^j_{t'} \wedge y^i_{t+1} = y^j_{t'+1}) +\right.\\
&\left. k(\xb^i_t, \xb^j_{t'}) \delta(y^i_t = y^j_{t'}) +  k(\xb^i_{t+1}, \xb^j_{t'+1}) \delta(y^i_{t+1} = y^j_{t'+1}) \right).
\end{align*}
If stationality is assumed, we can drop the $\delta(t = t')$ above and allow node swapping:
\begin{align*}
k((\xb^i, (y^i_t, y^i_{t+1})), (\xb^j, (y^j_{t'}, y^j_{t'+1})) = &\delta (y^i_t = y^j_{t'} \wedge y^i_{t+1} = y^j_{t'+1}) \\
&+ \sum_{p \in \cbr{t, t+1}} \sum_{q \in \cbr{t', t'+1}} k(\xb^i_p, \xb^j_q) \delta(y^i_p = y^j_q).
\end{align*}

The gradient of the first two terms in \eqref{eq:gp_seq_dual} can be computed with ease.  The gradient of the last term is
\begin{align*}
  \grad_{\alphab} \sbr{\log \sum_{\yb} \exp \left( \alphab^{\top} K \Lambda e_{i,\yb} \right)} = K \EE_{Y} \sbr{\Lambda e_{i,Y}}.
\end{align*}
\cite{AltHofSmo04} computes this expectation via the forward-backward algorithm.  The projections \eqref{eq:alpha_mu} and \eqref{eq:v_alpha} are the same as in Appendix \ref{sec:mcuben}, and Appendix \ref{sec:app_proj_seq} provides a solver for the special case of sequence.

\subsection{Efficient projection onto $n$ dimensional simplex factorized by a graphical model}
\label{sec:app_proj_graph}

As a simple extension of Appendix \ref{sec:simple_qp}, we now consider a more involved case.  In addition to projecting onto the $n$ dimensional simplex under $L_2$ distance, we also restrict that the image is factorized according to a graphical model.  Formally, suppose we have a graphical model over $n$ random variables with maximum clique set $\Ccal$.  For each clique $c$, suppose the set of all its possible configuration is $V_c$, and the pmf of the marginal distribution on clique $c$ is $\alpha_c (\yb_c)$, where $\yb_c \in V_c$.  Given a set $\cbr{\mb_c \in \RR^{\abr{V_c}} : c \in \Ccal }$, we want to find a set $\cbr{\alphab_c \in \RR^{\abr{V_c}} : c \in \Ccal }$ which minimizes:
\begin{align*}
  \min \qquad \frac{1}{2} \sum_c d_c^2 &\nbr{\alphab_c - \mb_c}_2^2 \\
  s.t. \qquad \sum_{\yb_c} \alpha_c(\yb_c) &= 1 \qquad \qquad \forall c \in \Ccal; \\
  \alpha_c(\yb_c) &\ge 0 \qquad \qquad \forall c \in \Ccal, \forall \yb_c; \\
  \sum_{\yb_c \sim \yb_{c \cap c'}} \alpha_c(\yb_c) &= \sum_{\yb_{c'} \sim \yb_{c \cap c'}} \alpha_{c'} (\yb_{c'}) \quad \forall c \cap c' \neq \emptyset, \forall \yb_{c \cap c'}.
\end{align*}
The last (set of) constraint enforces the local consistency of the marginal distributions, and $\yb_c \sim \yb_{c \cap c'}$ means the assignment of clique $c$ matches $\yb_{c \cap c'}$ on the subset $c \cap c'$.  If the graphical model is tree structured, then it will also guarantee global consistency.  We proceed by writing out the standard Lagrangian:
\begin{align*}
  L = &\frac{1}{2} \sum_c d_c^2 \sum_{\yb_c} (\alpha_c(\yb_c) - m_c(\yb_c))^2 - \sum_c \lambda_c \left( \sum_{\yb_c} \alpha_c(\yb_c) - 1 \right) - \sum_{c, \yb_c} \xi_c(\yb_c) \alpha_c (\yb_c) \\
  &- \sum_{c, c' : c \cap c' \neq \emptyset} \sum_{\yb_{c \cap c'}} \mubar_{c,c'}(\yb_{c \cap c'}) \left(\sum_{\yb_c : \yb_c \sim \yb_{c \cap c'}} \alpha_c(\yb_c) - \sum_{\yb_{c'} \sim \yb_{c \cap c'}} \alpha_{c'}(\yb_{c'}) \right).
\end{align*}
Taking derivative over $\alpha_c(y_c)$:
\begin{align}
  \frac{\partial L}{\partial \alpha_c(\yb_c)} &= d_c^2 (\alpha_c (\yb_c) - m_c(\yb_c)) - \lambda_c - \xi_c(\yb_c) - \sum_{c'} \mubar_{c,c'}(\yb_{c \cap c'}) + \sum_{c'} \mubar_{c',c}(\yb_{c \cap c'}) = 0, \nonumber \\
\label{eq:dual_connect_struct_simplex}
  &\Rightarrow \alpha_c(\yb_c) = m_c(\yb_c) + d_c^{-2} \left(\lambda_c + \xi_c(\yb_c) + \sum_{c'} \mu_{c,c'} (\yb_{c \cap c'}) \right),
\end{align}
where $\mu_{c,c'} (\yb_{c \cap c'}) := \mubar_{c,c'}(\yb_{c \cap c'}) - \mubar_{c',c}(\yb_{c \cap c'})$.  Plugging it back into $L$, we derive the dual problem:
\begin{align}
\label{eq:dual_struct_simplex_pro}
  \min D(\lambda_c, \xi_c(\yb_c), \mu_{c,c'} (\yb_{c \cap c'})) = &\frac{1}{2} \sum_c d_c^{-2} \sum_{\yb_c} \left( \lambda_c + \xi_c(\yb_c) + \sum_{c'} \mu_{c,c'} (\yb_{c \cap c'}) \right)^2 \\
  &+ \sum_c \sum_{\yb_c} m_c(\yb_c) \left(\lambda_c + \xi_c(\yb_c) + \sum_{c'} \mu_{c,c'} (\yb_{c \cap c'}) \right) - \sum_c \lambda_c \nonumber \\
  s.t. \qquad \xi_c (\yb_c) \ge 0. \nonumber
\end{align}
Looking at the problem, it is essentially a QP over $\lambda_c, \xi_c(\yb_c), \mu_{c,c'} (\yb_{c \cap c'})$ with the only constraint that $\xi_c (\yb_c) \ge 0$.  Similar to \ref{sec:simple_qp}, one can write $\xi_c (\yb_c)$ as a hinge function of $\lambda_c$ and $\mu_{c,c'} (\yb_{c \cap c'})$.  However since it is no longer a single variable function, it is very hard to apply the median trick here.  So we resort to a simple block coordinate descent.  The optimization steps are given in Algorithm \ref{algo:structured_simplex_projection} with reference to the following expressions of gradient:
\begin{subequations}
\begin{align}
\label{eq:parD_parXi}
  \frac{\partial D}{\partial \xi_c(\yb_c)} &= -d_c^{-2} (\lambda_c + \xi_c(\yb_c) + \sum_{c'} \mu_{c,c'}(\yb_{c \cap c'})) + m_c(\yb_c) = 0  \\
\label{eq:parD_parLambda}
  \frac{\partial D}{\partial \lambda_c} &= d_c^{-2} \sum_{\yb_c} \left( \lambda_c + \xi_c(\yb_c) + \sum_{c'} \mu_{c,c'} (\yb_{c \cap c'})  \right) + \sum_{\yb_c} m_c(\yb_c) - 1 \\
\label{eq:parD_parMu}
  \frac{\partial D}{\partial \mu_{c,c'} (\yb_{c \cap c'})} &= d_c^{-2} \sum_{\yb'_c \sim \yb_{c \cap c'}} \left( \lambda_c + \xi_c(\yb'_c) + \sum_{\cbar} \mu_{c,\cbar} (\yb'_{c,\cbar}) \right) + \sum_{\yb'_c \sim \yb_{c \cap c'}} m_c(\yb'_c).
\end{align}
\end{subequations}
From \eqref{eq:parD_parXi} and $\xi_c(\yb_c) \ge 0$, we can derive
\begin{align}
\label{eq:xi_as_hinge}
  \xi_c(\yb_c) = \sbr{-d_c^2 m_c(\yb_c) - \lambda_c - \sum_{c'} \mu_{c,c'} (y_{c \cap c'})}_+.
\end{align}

\begin{algorithm}[t]
\begin{algorithmic}[1]
    \caption{\label{algo:structured_simplex_projection} A coordinate descent scheme for minimizing the dual problem \eqref{eq:dual_struct_simplex_pro}.}
    \STATE Randomly initialize $\cbr{\lambda_c : c}, \cbr{\xi_c(\yb_c):c,\yb_c}, \cbr{\mu_{c,c'} (\yb_{c \cap c'}) : c, c', \yb_{c \cap c'}}$.
    \WHILE{not yet converged}
        \STATE Fixing $\xi_c(\yb_c)$, apply conjugate gradient to minimize the unconstrained quadratic form in \eqref{eq:dual_struct_simplex_pro} with respect to $\cbr{\lambda_c : c}$ and $\cbr{\mu_{c,c'} (\yb_{c \cap c'}) : c, c', \yb_{c \cap c'}}$.  The necessary gradients are given in \eqref{eq:parD_parLambda} and \eqref{eq:parD_parMu}.
        \STATE Set $\xi_c(\yb_c) \leftarrow \sbr{-d_c^2 m_c(\yb_c) - \lambda_c - \sum_{c'} \mu_{c,c'} (y_{c \cap c'})}_+$ for all $c \in \Ccal$ and $\yb_c$.
    \ENDWHILE
    \STATE Compute $\alpha_c(\yb_c)$ according to \eqref{eq:dual_connect_struct_simplex}.
\end{algorithmic}
\end{algorithm}

\subsubsection{Special case: sequence}
\label{sec:app_proj_seq}

Suppose the graph is simply a sequence: $x_1 - x_2 - \ldots - x_L$ and each node can take value in $[m]$.  Then the cliques are $\cbr{(x_t, x_{t+1}) : t \in [L-1]}$ and the primal is:
\begin{align*}
  \min \qquad \frac{1}{2} \sum_{t=1}^{L-1} d_t^2 \sum_{i,j=1}^m &(\alpha_t(i,j) - m_t(i,j))^2 && \\
  s.t. \qquad \sum_{i,j} \alpha_t(i,j) &= 1 \qquad  &&\forall t \in [L-1]; \\
  \alpha_t(i, j) &\ge 0 \qquad  &&\forall t \in [L-1], i, j \in [m]; \\
  \sum_i \alpha_t (i, j) &= \sum_k \alpha_{t+1} (j, k)  \qquad &&\forall t \in [L-2], j \in [m].
\end{align*}
Proceeding with the standard Lagrangian:
\begin{align*}
  L = &\sum_{t=1}^{L-1} d_t^2 \sum_{i,j=1}^m (\alpha_t(i,j) - m_t(i,j))^2 - \sum_{t=1}^{L-1} \lambda_t \left( \sum_{i,j} \alpha_t(i,j) - 1 \right) - \sum_{t=1}^{L-1} \sum_{i,j} \xi_t(i,j) \alpha_t(i,j) \\
  &-\sum_{t=1}^{L-2} \sum_j \mu_t(j)\left( \sum_i \alpha_t(i,j) - \sum_k \alpha_{t+1}(j,k) \right).
\end{align*}
Taking derivative over $\alpha_t(i,j)$:
\begin{align}
  \frac{\partial L}{\partial \alpha_t(i,j)} &= d_t^2(\alpha_t(i,j) - m_t(i,j)) - \lambda_t - \xi_t(i,j) - \mu_t(j) + \mu_{t-1}(i) = 0 \nonumber \\
\label{eq:dual_connect_seq}
  &\Rightarrow \alpha_t(i,j) = d_t^{-2} (\lambda_t + \xi_t(i,j) + \mu_t(j) - \mu_{t-1}(i)) + m_t(i,j),
\end{align}
where we define $\mu_0(j) := 0$.  Plugging into $L$, we derive the dual problem:
\begin{align}
  \min D(\lambda_t, \xi_t(i,j), \mu_t(i)) = &\frac{1}{2} \sum_{t=1}^{L-1} d_t^2 \sum_{i,j} (\lambda_t + \xi_t(i,j) + \mu_t(j) - \mu_{t-1}(i))^2  \\
  &+ \sum_{t=1}^{L-1} \sum_{i,j} m_t(i,j) (\lambda_t + \xi_t(i,j) + \mu_t(j) - \mu_{t-1}(i)) - \sum_{t=1}^{L-1} \lambda_t \nonumber \\
  s.t. \qquad \xi_t(i,j) &\ge 0. \quad \forall t \in [L-1], \, i, j \in [m]. \nonumber
\end{align}
\begin{align*}
  \frac{\partial D}{\partial \xi_t(i,j)} &= d_t^{-2} (\lambda_t + \xi_t(i,j) + \mu_t(j) - \mu_{t-1}(i)) + m_t(i,j) = 0  \qquad \qquad \qquad \quad \forall t \in [L-1]\\
  &\Rightarrow \xi_t(i,j) = [-d_t^2 m_t(i,j) - \lambda_t - \mu_t(j) + \mu_{t-1}(i)]_+ \\
  \frac{\partial D}{\partial \lambda_t} &= d_t^{-2} \sum_{i,j} (\lambda_t + \xi_t(i,j) + \mu_t(j) - \mu_{t-1}(i)) + \sum_{i,j} m_t(i,j) - 1 \qquad \qquad \forall t \in [L-1]\\
  \frac{\partial D}{\partial \mu_t(i)} &= d_t^{-2} \sum_j (\lambda_t + \xi_t(j,i) + \mu_t(i) - \mu_{t-1}(j)) \qquad \qquad \qquad \qquad \qquad \qquad \forall t \in [L-2] \\
  &\phantom{=}+ d_{t+1}^{-2} \sum_j (\lambda_{t+1} + \xi_{t+1}(i,j) + \mu_{t+1}(j) - \mu_t(i)) + \sum_j m_t(j,i) - \sum_j m_{t+1}(i,j),
\end{align*}
where we further define $\mu_{L-1}(j) := 0$.  Obviously it takes $O(Lm^2)$ to compute all the gradients, and so is $\cbr{\xi_t(i,j)}$.


\end{document}